\documentclass[hidelinks,11pt]{article} 
\usepackage{fullpage}
\usepackage{microtype}
\usepackage{graphicx}
\usepackage{subfig}
\usepackage{booktabs} 
\usepackage{hyperref}
\usepackage{xcolor}
\usepackage{dsfont}
\hypersetup{
    colorlinks=true,
    linkcolor=purple,
    citecolor=blue,
    filecolor=magenta,      
    urlcolor=cyan,
   }     
\usepackage{bm}

\usepackage[utf8]{inputenc} % allow utf-8 input
\usepackage[T1]{fontenc}    % use 8-bit T1 fonts
\usepackage{url}            % simple URL typesetting
\usepackage{amsfonts}       % blackboard math symbols
\usepackage{nicefrac}       % compact symbols for 1/2, etc.
\usepackage{amsmath,amssymb,amsthm}
\usepackage{bbm}
\usepackage{accents}

\usepackage{color}
\usepackage{mathtools}
\usepackage[shortlabels]{enumitem}
\usepackage{epstopdf}
\usepackage{tabulary}
\usepackage{cite}
\usepackage{chngcntr}
\usepackage{bm}
\usepackage{algorithm}
\usepackage{algpseudocode}

\newtheorem{theorem}{Theorem}
\newtheorem{lemma}{Lemma}
\newtheorem{proposition}{Proposition}

\newtheorem{definition}{Definition}
\newtheorem{remark}{Remark}

\title{Distributed Online Bandit Submodular Maximization with Bounded Sampling Violations}%
\author{%
Bin Du
\thanks{College of Automation Engineering, Nanjing University of Aeronautics~and Astronautics,~Nanjing 211106, China; \texttt{\{iniesdu, liuch, zdq.bpz\}@nuaa.edu.cn}}
\and
Chang Liu\footnotemark[1]
\and 
Dingqi Zhu\footnotemark[1]
\and
Lintao Ye%
\thanks{School of Artificial Intelligence and Automation at the Huazhong University of Science and Technology, Wuhan, China; \texttt{yelintao93@hust.edu.cn}}
\and
Dengfeng Sun%
\thanks{School of Aeronautics and Astronautics, Purdue University, West Lafayette, IN 47906, USA; \texttt{dsun@purdue.edu}}
}
\begin{document}
\maketitle

\begin{abstract}
We study distributed online submodular maximization under partition matroid constraints, in which multiple agents select a limited number of actions from their own subsets sequentially to maximize the cumulative value of a sequence of objective functions. We develop a unified algorithmic framework that accommodates full-information and bandit feedback models. For both feedback models, we prove that the proposed algorithms achieve sublinear $(1-1/e)$-regret guarantees, which are comparable to those achieved by existing centralized counterparts. Furthermore, to tackle the sampling violation issue caused by continuous relaxation and rounding, we develop a bounded stochastic pipage rounding scheme and show that the probability of sampling violation vanishes asymptotically. As a result, the cumulative sampling violation remains sublinear in $T$, which is further shown to be not improvable under certain conditions. Numerical results validate the theoretical findings in this paper.
\end{abstract}

\section{Introduction}

Submodular maximization is a fundamental problem in a broad range of applications such as sensor selection \cite{chamon2020approximate}, resource allocation \cite{paccagnan2021utility}, and task coordination with multi-agent systems \cite{zhou2022risk}. In its general form, the problem aims to maximize a set function subject to certain combinatorial constraints, such as cardinality or matroid constraints. Despite its applicability, this class of problems is known to be $\mathrm{NP}$-hard \cite{feige1998threshold} and hence its exact solutions are computationally intractable in general. As a consequence, a substantial body of research has focused on the development of efficient approximation algorithms with provable performance guarantees. A rich line of such work has established that constant-factor approximation ratios, e.g., $1/2$ or $1-1/e$, can be achieved via greedy-based methods or their continuous variants \cite{nemhauser1978analysis,fisher2009analysis,van2025performance}. Additionally, these guarantees are further shown to be optimal unless $\mathrm{P}=\mathrm{NP}$ \cite{feige1998threshold}. Such a result has provided a solid theoretical foundation for designing algorithms that achieve near-optimal performance guarantees while remaining computationally tractable.

Beyond the classical centralized problem, there has been a growing interest in studying distributed submodular maximization, see e.g., \cite{gharesifard2017distributed,vendrell2024optimality,mokhtari2018decentralized,du2022jacobi}, motivated by large-scale decision-making scenarios over multi-agent networks. One typical class of distributed problems arises when agents possess distributed action subsets, and feasibility is enforced through combinatorial constraints coupling all agents' decisions, e.g., \cite{robey2021optimal,du2022jacobi}. A canonical example is the partition matroid constraint \cite{rezazadeh2023distributed}, where the ground action set is partitioned over multiple agents and each one is allowed to select one or a few actions only from its local set. To solve the distributed submodular maximization under such constraints, the authors in \cite{robey2021optimal} propose a solution method that combines the classical continuous greedy algorithm (also known as the Frank-Wolfe algorithm) with the consensus-based coordination scheme. However, this method relies on the exact gradient evaluation of the multi-linear extension of the submodular objective function, leading to computational complexity that grows exponentially with the size of the action set. To address this issue, the authors in \cite{rezazadeh2023distributed} develop a distributed algorithm by following the same Frank-Wolfe framework but leveraging the empirical estimate of the gradient information. It is shown that the approximation ratio can be arbitrarily close to the optimal $1-1/e$ as the number of samples for the gradient estimation increases. Besides, our previous work \cite{du2022jacobi} proposes a stochastic gradient-based distributed algorithm which also considerably reduces the computational complexity of gradient evaluation. However, since the algorithm in \cite{du2022jacobi} is developed completely in a discrete domain, its approximation ratio is only guaranteed to be $1/2$ which is suboptimal compared to algorithms developed in the continuous domain.

Another rich line of work on submodular maximization has focused on the online setting, where a sequence of submodular objective functions is revealed over time and decisions must be made sequentially without access to future information. In this setting, the goal is to design an online algorithm that generates a sequence of decisions which maximizes the cumulative value of the time-varying objective functions. Specifically, the notion of cumulative $\alpha$-regret, e.g., $\alpha = 1/2$ or $1-1/e$, is utilized to characterize the performance of such online algorithms. If the cumulative $\alpha$-regret grows sublinearly with respect to the time horizon $T$, then the time-averaged performance of the algorithm asymptotically matches that of the $\alpha$-suboptimal solution chosen in hindsight. To this end, the authors in \cite{streeter2008online} propose the first online submodular maximization algorithm by using the \emph{meta-action} technique in the discrete domain. It is shown that the algorithm achieves $\mathcal{O}(\sqrt{T})$ $(1-1/e)$-regret under the cardinality constraint. This result is further extended in \cite{golovin2014online} to the problem with partition matroid constraints. By employing a standard lifting technique, i.e., lifting the discrete problem into the continuous domain followed by a proper rounding scheme, the same regret guarantee is established.

In fact, the meta-action idea introduced in \cite{streeter2008online} has also been applied to online continuous submodular maximization problems; see e.g., \cite{chen2018online,chen2018projection,zhang2019online}. In particular, the Meta-Frank-Wolfe algorithm is first proposed in \cite{chen2018online} and shown to reproduce the $\mathcal{O}(\sqrt{T})$ $(1-1/e)$-regret under general convex set constraints. However, a critical limitation of this approach is that it requires $\sqrt{T}$ evaluations of the exact gradient at each time $t$, which can be prohibitively expensive in practical scenarios. To alleviate this issue, the authors in \cite{chen2018projection} enhance the Meta-Frank-Wolfe algorithm by using stochastic gradient estimates and prove that the same regret result can be achieved under mild conditions. Nevertheless, the number of gradient evaluations remains on the order of $\sqrt{T}$, which may still be expensive for large-scale problems. Motivated by such a challenge, a one-shot version of the online algorithm, termed Mono-Frank-Wolfe, is developed in \cite{zhang2019online}. It is shown that this algorithm reduces the number of per-function gradient evaluations from $\sqrt{T}$ to one and achieves an $\mathcal{O}({T}^{4/5})$ $(1-1/e)$-regret. In addition, by applying the one-point gradient estimator \cite{hazan2016introduction}, the authors in \cite{zhang2019online} further adapt the algorithm development with a bandit feedback model and propose the first bandit submodular maximization method that achieves an $\mathcal{O}({T}^{8/9})$ $(1-1/e)$-regret. Additional related works can also be found in \cite{zhang2022stochastic,nie2023framework,wan2023bandit}. 

It is noteworthy that, by using the standard lifting technique, the aforementioned continuous algorithms can all be adopted to solve the discrete online submodular maximization problem. However, a critical limitation of such approaches is that they may require evaluating function values at sampled points that violate the original set constraints, such as (partition) matroid or cardinality constraints. Indeed, \cite{zhang2019online} shows that there is no proper rounding scheme that can simultaneously guarantee the unbiased gradient estimation and the feasibility of the resulting discrete samples. As a result, the responsive bandit algorithm is proposed in \cite{zhang2019online} which assigns zero rewards to the infeasible samples but still requires querying the function values at such violating points. In this paper, we aim to resolve the sampling violation issue.

More recently, a few attempts have been made to study submodular maximization problems in both distributed and online settings, see e.g., \cite{zhang2023communication,xu2023online,ye2025offline}. The authors in~\cite{zhang2023communication} consider the distributed online continuous submodular maximization problem in which each agent observes a local sequence of objective functions and all agents are subject to a common convex set constraint. Two distributed online algorithms are then presented, and their $(1-1/e)$-regrets are shown to be upper bounded by $\mathcal{O}({T}^{4/5})$ and $\mathcal{O}(\sqrt{T})$, respectively. To address the distributed setting with separate action subsets across agents, our previous work \cite{ye2025offline} develops an online distributed greedy algorithm in the purely discrete domain, which is shown to inherently achieve an $\mathcal{O}(\sqrt{T})$ $1/2$-regret. To the best of the authors' knowledge, no existing work addresses distributed online submodular maximization under partition matroid constraints while achieving the optimal $(1-1/e)$-regret. Closing this gap, as we detail below,~requires more than assembling existing techniques.

The contributions of this paper are summarized below:
\begin{enumerate}[(i)]
    \item We first formulate a distributed online submodular maximization problem under partition matroid constraints, and then propose a unified algorithmic framework that combines the lifting technique with the Meta-Frank-Wolfe algorithm, accommodating both full-information and bandit feedback models. To the best of the authors' knowledge, this is the first framework to achieve the optimal $(1-1/e)$-regret for this problem.
    \item For the full-information and bandit models, we~establish the $\mathcal{O}({T}^{4/5})$ and $\mathcal{O}({T}^{8/9})$ $(1-1/e)$-regret bounds, respectively, which match the known centralized rates while operating in a fully distributed manner over the network.
    \item To address the sampling violations inherent to online rounding under set constraints, we introduce \emph{bounded stochastic pipage rounding} (\texttt{B-SPR}), which generalizes pipage rounding to arbitrary fractional bounds while retaining its sum and monotonicity guarantees. Embedding~\texttt{B-SPR}~into a progressively~bounded~scheme~(\texttt{PB-SPR})~then drives the iterates toward a matroid vertex. We prove that per-step violation probability vanishes as $\mathcal{O}(T^{-1/9})$, so the cumulative violation grows only sublinearly as $\mathcal{O}(T^{7/9})$. A matching $\Omega(T^{7/9})$ lower bound certifies that this rate is not improvable under certain conditions.
\end{enumerate}

{\bf Notations:} The sets of integers and real numbers are denoted as $\mathbb{Z}$ and $\mathbb{R}$, respectively. For any given integer $n\in\mathbb{Z}_{\ge1}$, let $[n]:=\{1,2,\cdots,n\}$ and $\mathbf{1}_n$ denote the~$n$-dimensional one vector. For a set $S$, let $|S|$ be the cardinality of the set. For two vectors $\mathbf{x},\mathbf{y} \in \mathbb{R}^n$, the inequality $\mathbf{x} \preceq \mathbf{y}$ is defined element-wise. $\mathbbm{1}\{\cdot\}$ denotes the indicator function.

\section{Problem Statement and Preliminaries}

\subsection{Distributed Online Submodular Maximization}
Consider a network of $I$ agents, where each agent $i\in [I]$ maintains a local action subset $\mathcal{A}_i \subseteq \mathcal{A} := \cup_{i=1}^I\mathcal{A}_i$ with $|\mathcal{A}_i| = N_i$ and $|\mathcal{A}| = N$. The objective of each agent is to select at most $\kappa_i \in \mathbb{Z}_{\ge 1}$ actions from $\mathcal{A}_i$, such that a sequence of time-varying or even \emph{adversarial} objective functions $f^1(\mathcal{S})$, $f^2(\mathcal{S}), \cdots, f^T(\mathcal{S})$ is cooperatively maximized over the time horizon $T \in \mathbb{Z}_{\ge 1}$. Such a problem can be formulated as the distributed maximization,
\begin{subequations}\label{submodularMax}
    \begin{align}
        \text{maximize}\;\;\, &f^t(\mathcal{S})\label{objFunc}\\
        \text{subject to}\;\;\,         & \mathcal{S} \in \mathcal{I} :=\{ \mathcal{S} \,\big\vert\, |\mathcal{S} \cap \mathcal{A}_i |\le \kappa_i, \;i \in [I]  \}.\label{constr}
    \end{align}
\end{subequations}
Notice that, by assuming $\mathcal{A}_i \cap \mathcal{A}_j =\emptyset,\,\forall i\neq j$, the set $\mathcal{I}$ in~\eqref{constr} is known as a \emph{partition matroid} \cite{robey2021optimal}. Moreover, we assume that each function $f^t(\cdot): 2^\mathcal{A} \to \mathbb{R}_{\ge0}$ is \emph{monotone}, \emph{submodular}, and \emph{normalized}, whose definitions are provided as follows.

\begin{definition}[Monotonicity]
    A set function $f(\cdot)$ is monotone non-decreasing if, for $\forall \mathcal{S}_1 \subseteq \mathcal{S}_2 \subseteq \mathcal{A}$, it holds that $f(\mathcal{S}_1) \le f(\mathcal{S}_2)$.
\end{definition}

\begin{definition}[Submodularity]
   A set function $f(\cdot)$ is submodular if, for $\forall \mathcal{S}_1 \subseteq \mathcal{S}_2 \subseteq \mathcal{A}$ and $\forall a \in \mathcal{A} \setminus \mathcal{S}_2$, it holds that $f(\{a\} \cup \mathcal{S}_1) - f(\mathcal{S}_1) \ge f(\{a\} \cup \mathcal{S}_2) - f(\mathcal{S}_2)$.
\end{definition}

In addition to the properties of monotonicity and submodularity, it is often useful if the set function is also \emph{normalized}, i.e., $f(\emptyset) = 0$. Without loss of generality, this property can be enforced by adding an appropriate constant to the function \cite{chamon2020approximate}.

In this work, we focus on an online setting of the submodular maximization problem where the agents have no a priori knowledge of the objective functions. Specifically, at each time $t \in [T]$, each agent selects a subset of actions $\mathcal{S}^t_i \subseteq \mathcal{A}_i$ such that $|\mathcal{S}^t_i| \le \kappa_i$. After all agents commit to their decisions $\mathcal{S}^t_i$'s, a potential adversary selects an objective function $f^t(\cdot)$ and certain information regarding the function is revealed to the agents. In the full-information feedback model, the agents have access to the full knowledge of $f^t(\cdot)$ and thus can evaluate the function value for any selections of actions. By contrast, in the bandit feedback model, only the function value of selected decisions, i.e., $f^t(\cup_{i=1}^I \mathcal{S}_i^t)$, can be observed, leading to a more challenging setup for the decision-making of agents. Although the agents have no knowledge of $f^t(\cdot)$ when selecting $\mathcal{S}_i^t$, they may exploit previously revealed functions to guide their decisions. The problem where the sequence $\{f^t(\cdot)\}_{1\le t\le T}$ is arbitrary is often referred to as an \emph{adversarial online setting} \cite{chen2018projection}. In this setting, the goal of agents is to cooperatively minimize the \emph{adversarial cumulative regret} given by
\begin{align}\label{eq:regret}
    \mathcal{R}_T := (1-1/e)\cdot \max_{\mathcal{S} \in \mathcal{I}} \;\sum_{t=1}^Tf^t(\mathcal{S}) - \sum_{t=1}^T f^t(\cup_{i=1}^I \mathcal{S}_i^t).
\end{align}
The notion of regret in \eqref{eq:regret} is defined with respect to a joint reward $f^t(\cup_{i=1}^I \mathcal{S}_i^t)$ received by all agents. Such a reward model has been widely used by many works on the distributed submodular maximization problem, see e.g., \cite{rezazadeh2023distributed,gharesifard2017distributed,du2022jacobi}. In addition, the factor $1\!-\!1/e$ in \eqref{eq:regret} is due to the $\mathrm{NP}$-hardness of the standard submodular maximization problem and therefore only approximate solutions can be expected in general by algorithms running in reasonable time \cite{feige1998threshold}.

Moreover, we consider that the communication topology of the $I$-agent network is modeled by a time-invariant undirected graph $\mathcal{G}:=\{\mathcal{N}, \mathcal{E}\}$ where $\mathcal{N} := [I]$ and $\mathcal{E} \subset \mathcal{N} \times \mathcal{N}$ denote the sets of nodes and edges, respectively. The agent $i$ receives information from the agent $j$ if $(i,j) \in \mathcal{E}$. Particularly, we let $\mathcal{N}_i:=\{j \in [I] \;\vert\; (i,j) \in \mathcal{E}\}$ represent the (in-)neighborhood of the agent $i$, i.e., the set of agents that can send information to the $i$-th agent directly. In addition, it is also assumed that the undirected graph $\mathcal{G}$ is \emph{connected}, i.e., every agent $i$ can receive information from all others through a path. We denote by $d(\mathcal{G})$ the diameter of $\mathcal{G}$, i.e., the greatest shortest-path distance over all pairs of agents $(i,j)$. Due to the connectedness of $\mathcal{G}$, it is easy to verify that $d(\mathcal{G}) \le I$.

\subsection{Multi-Linear Extension}

To achieve the tightest approximation bound for solving the classical submodular maximization problems, a suite of well-studied methods uses the notion of multi-linear extension \cite{vondrak2008optimal}. To provide a formal definition of the multi-linear extension of a submodular set function,  let us assume without loss of any generality that the ground set is given by $\mathcal{A}=\{1,2,\cdots, N\}$ and the local subsets $\mathcal{A}_i$'s contain consecutive integers and are ordered such that, if $ \mathcal{A}_i = \{p,p+1,\cdots, q\}$, then one can have $p-1 \in \mathcal{A}_{i-1}$ and $q+1 \in \mathcal{A}_{i+1}$. Associated with $\mathcal{A}$, we define a continuous variable $\mathbf{x} = [x(1),x(2),\cdots, x(N)]^\top \in [0,1]^N$. Further, we let the $i$-th block of the vector $\mathbf{x}$ be $[\mathbf{x}]_i \in [0,1]^{N_i}$ whose entries are associated with the local subset $\mathcal{A}_i$. Using the notations defined above, we can express the multi-linear extension of each function $f^t(\cdot)$ as
\begin{align}
  F^t(\mathbf{x}):=\sum_{\mathcal{S} \subseteq \mathcal{A}} f^t(\mathcal{S}) \cdot\prod_{n \in \mathcal{S}}x(n)\cdot\prod_{n \notin \mathcal{S}}\big(1-x(n)\big).
\end{align}
Corresponding to the partition matroid constraint \eqref{submodularMax}, we introduce the matroid polytope $\mathcal{M} \subseteq [0,1]^N$ where
\begin{align}\label{eq:def-M}
  \mathcal{M} := \{ \mathbf{x} \in [0,1]^N \,\vert\, \mathbf{1}_{N_i}^\top[\mathbf{x}]_i \le \kappa_i, \; \forall i \in [I]\}.
\end{align}
Similarly, we define the local polytope $\mathcal{M}_i \subseteq [0,1]^{N_i}$ for each agent $i\in[I]$ as
\begin{align}\label{eq:def-Mi}
  \mathcal{M}_i := \{ \mathbf{x} \in [0,1]^{N_i} \,\vert\, \mathbf{1}_{N_i}^\top \mathbf{x} \le \kappa_i \}.
\end{align}
It is straightforward to have that $\mathcal{M} = \prod_{i \in[I]} \mathcal{M}_i$.

The above notion of multi-linear extension transfers the classical submodular maximization problem from the discrete domain to the continuous domain. Precisely, the solution to $\max_{\mathcal{S} \in \mathcal{I}} f(\mathcal{S})$ is equivalent to solving the following continuous optimization \cite{vondrak2010submodularity},
\begin{align}\label{submodularMax-continuous}
    \mathop {\text{maximize} }\limits_{\mathbf{x} \in \mathcal{M}}  \;\;F(\mathbf{x}).
\end{align}
In addition, even if the suboptimal solution $\widetilde{\mathbf{x}}$ to \eqref{submodularMax-continuous} is obtained, one can adopt various rounding schemes to recover an approximate discrete solution $\widetilde{\mathcal{S}}$ that ensures \mbox{$f(\widetilde{\mathcal{S}}) \ge F(\widetilde{\mathbf{x}})$}. Instances of such \emph{lossless} methods include pipage rounding \cite{calinescu2011maximizing}, central rounding \cite{chekuri2011submodular}, etc.

\subsection{Frank-Wolfe Algorithm and Tracking the Expert}\label{subsec:FW-expert}

To solve the maximization problem \eqref{submodularMax-continuous} in the continuous domain, a well-studied method is Frank-Wolfe \cite{frank1956algorithm}, which performs $\mathbf{x}^{k+1} = \mathbf{x}^k + \eta^k \mathbf{v}^k$ where $\mathbf{x}^0$ is initialized as $\mathbf{0}$ and $\eta^k \in \mathbb{R}_{>0}$ is the step-size. Specifically, the updating direction $\mathbf{v}^k \in \mathbb{R}^N$ can be obtained by maximizing a linear function under constraints, i.e., 
\begin{align}\label{frank-wolfe}
    \mathbf{v}^k = \arg\max_{\mathbf{v} \in \mathcal{M}}\;\langle \mathbf{v}, \nabla F(\mathbf{x}^k) \rangle.
\end{align}
It has been shown in \cite{bian2017guaranteed} that the iterates $\mathbf{x}^k$ produced by Frank-Wolfe can be arbitrarily close to the $(1-1/e)$ approximate solution, i.e., $F(\mathbf{x}^k) \hspace{-0.5pt}\to\hspace{-0.5pt} (1 - 1/e)F(\mathbf{x}^\star)$ as \mbox{$k \hspace{-0.5pt}\to\hspace{-0.5pt} \infty$} where $\mathbf{x}^\star$ denotes the optimal solution to \eqref{submodularMax-continuous}.

Although the Frank-Wolfe algorithm provides a promising solution method to the continuous optimization \eqref{submodularMax-continuous}, a critical drawback is that it heavily relies on the gradient $\nabla F(\cdot)$ during the iterations. However, under the \emph{adversarial online setting}, one can only have access to the knowledge of function $F^t(\cdot)$ after committing to the decisions at this round. Therefore, to perform the Frank-Wolfe update, it is required to first obtain a sequence of approximate solutions to the maximization problem \eqref{frank-wolfe} without knowing the gradient information \cite{zhang2019online}. 

To resolve the above issue, a natural idea is to use the technique of \emph{tracking the expert} which has been widely adopted in solving online optimization problems. The expert problem involves an agent selecting actions $a_t \in \mathcal{A}$ to maximize the cumulative reward $\sum_{t =1}^T r_t(a_t)$ over a finite time horizon $T \in \mathbb{Z}_{\ge 1}$. The main challenge of such a problem lies in the fact that the time-varying reward $r_t(\cdot)$ can be unknown to the agent before committing to the action $a_t$, which is analogous to our adversarial online setting. Consequently, to solve this problem, the agent needs to track an expert's decision, i.e., the action $a_T^\star \in$ $ \arg\max_{a \in \mathcal{A}}\, \sum_{t=1}^Tr_t(a)$ leading to the highest cumulative reward in hindsight. Indeed, there have been various approaches proposed in the literature, e.g., Regularized Follow the Leader (RFTL) \cite{hazan2016introduction}, Follow the Perturbed Leader (FPL) \cite{cohen2015following}, and the online gradient descent \cite{zinkevich2003online}, which all manage to track the expert. In this work, we specifically adopt the RFTL algorithm, whose performance is guaranteed by the following result (see Theorem 5.1 in \cite{hazan2016introduction}): there exists a constant $C_0 >0$ such that the generated sequence of actions $\{a_t\}_{1\le t\le T}$ has
\begin{align}\label{ineq:expert-regret}
\max_{a\in\mathcal{A}} \;\sum_{t=1}^T r_t(a) - \sum_{t=1}^T \mathbb{E} [r_t(a_t)] \le C_0 \sqrt{T},
\end{align}
where the expectation is taken with respect to the randomness of the generation of $a_t$.

\section{Algorithm with Full-Information Feedback}\label{sec:full-info}

In this section, we first introduce a vanilla version of the distributed online algorithm under the full-information feedback setting, which also establishes analytical~tools for the subsequent extensions.

\subsection{Distributed Online Algorithm: A Vanilla Version}\label{subsec:algo-full-info}

The distributed online algorithm with full-information feedback is outlined in Algorithm \ref{algo:full-info-feedback}, which consists of two phases. In the distributed Frank-Wolfe phase, each~agent  performs a local block-coordinate Frank-Wolfe~update (see Line~6) and then reaches a maximum consensus~on the local variables $\mathbf{x}_i^q(k)$'s with its neighbors (see Line~7), thereby building coordination over the network. The~online learning phase, inspired by \cite{zhang2019online}, requires only a one-shot gradient per function. Following a random permutation $\{t_i^{q,1}, \cdots, t_i^{q,K}\}$~of the indices within block $q\in[Q]$ (see Line~9), each agent applies~the lossless rounding on $[\mathbf{x}_{i}^q(K+1)]_i$ to obtain its discrete decision $\mathcal{S}_i^t$, which is executed for all time-steps within the block $q$ (see Line 11). With an appropriate lossless rounding, e.g., the stochastic pipage rounding \cite{calinescu2011maximizing}, the joint action $\mathcal{S}^t = \cup_{i\in [I]} \mathcal{S}_i^t$ is always feasible, i.e., $\mathcal{S}^t \in \mathcal{I}$ for $\forall t \in [T]$, and the global reward $f^t(\mathcal{S}^t)$ is received by all agents. The gradients $\nabla F^{t_i^{q,k}}\hspace{-1pt}(\mathbf{x}_i^q(k))$ evaluated along the permutation mimic the stochastic gradient of the block-averaged function $\bar{F}(\cdot) = 1/K\cdot\sum_{k=1}^K F^{(q-1)K+k} (\cdot)$ (see Line~13), and an averaging technique \cite{mokhtari2020stochastic} is adopted to reduce the variance of the gradient estimate $\mathbf{d}_i^q(k)$ (see Line~15).

\begin{algorithm}
\textbf{Input:} Horizon length $T$; block size $K$; number of blocks $Q = T/K$; step-size $\rho_k$; communication topology $\mathcal{G}$; local constraint $\mathcal{M}_i$.
\caption{-- Distributed online algorithm (full information feedback)}\label{algo:full-info-feedback}
\begin{algorithmic}[1]
\State Each agent $i\in[I]$ initializes {experts} $\mathcal{E}_i(k), \, k \in [K]$  for maximizing linear cost over $\mathcal{M}_i$.
\For{$q=1,2,\cdots,Q$}

\hspace{-20pt}\# \texttt{Distributed Frank-Wolfe Phase}
\State Initialize $\mathbf{x}_i^q (1) = \mathbf{0}$ and $\mathbf{d}_i^q(0) = \mathbf{0}$.
\For{$k = 1,2,\cdots,K$}
\State Let $\mathbf{v}_i^q(k) \in \mathcal{M}_i$ be the output of $\mathcal{E}_i(k)$;
\State Update 
$[{\mathbf{x}}_i^q(k+1)]_i = [\mathbf{x}_i^q (k)]_i + \mathbf{v}_i^q(k)/K;$
\State Receive $[{\mathbf{x}}_j^q(k+1)]_{i'}$ from neighbors $j \in \mathcal{N}_i$, and update
$$[{\mathbf{x}}_i^q(k+1)]_{i'} = \max_{j \in \mathcal{N}_i \cup \{i\}}[{\mathbf{x}}_j^q(k+1)]_{i'}, \quad \forall i'\neq i.$$
\EndFor

\hspace{-20pt}\# \texttt{Online Learning Phase}
\State Let $\{t_i^{q,1}, t_i^{q,2}, \cdots, t_i^{q,K}\}$ be a random permutation of $\{(q-1)K+1, (q-1)K+2, \cdots, qK\}$.
\For{$t = (q-1)K+1, (q-1)K+2, \cdots, qK$}
\State Generate and execute the discrete decision $\mathcal{S}_i^t$ by applying the \textbf{\emph{lossless rounding}} on $\mathbf{s}_i(t) := [\mathbf{x}_{i}^q(K+1)]_i$, and receive a global reward $f^t(\mathcal{S}^t)$ where $\mathcal{S}^t:= \cup_{i\in [I]} \mathcal{S}_i^t$;
\State {Observe $f^t(\cdot)$ and its multi-linear extension function $F^t(\cdot)$;}
\State {Find $k \in [K]$ such that $t_i^{q,k} = t$, and compute the gradient ${\nabla} {F^{t}}(\mathbf{x}_i^q(k))$.}
\EndFor
\State For $k \in [K]$, let $\mathbf{d}_i^q(k) = (1-\rho_k)\mathbf{d}_i^q(k-1) +$ $ \rho_k \nabla F^{t_i^{q,k}}(\mathbf{x}_i^q(k))$, and feed $ [\mathbf{d}_i^q(k)]_i$ back to $\mathcal{E}_i(k)$.
\EndFor
\end{algorithmic}
\end{algorithm}

We note that the required gradient $\nabla F^{t_i^{q,k}}\hspace{-2pt}(\mathbf{x}_i^q(k))$ can~be conceptually obtained under the full-information model, since each agent has full knowledge of $f^t(\cdot)$ (or $F^t(\cdot)$). Nevertheless, by the definition of multi-linear extension, evaluating the exact gradient is combinatorial in nature and thus computationally intractable even for moderately large problems. A standard remedy is Monte Carlo sampling \cite{rezazadeh2023distributed}, which still requires full-information feedback; we relax this requirement in the next section.

\subsection{Regret Analysis}\label{subsec:reget-analysis-full-info}

We first characterize the consensus performance of~the distributed Frank-Wolfe phase in the following lemma.

\begin{lemma}\label{lemma:consensus-full-info}
 Let $\bar{\mathbf{x}}^q(k):= \big[[\mathbf{x}_1^q(k)]_1^\top,  \cdots\!, [\mathbf{x}_I^q(k)]_I^\top\big]^\top$~where $\mathbf{x}_i^q(k)$ is generated by Algorithm \ref{algo:full-info-feedback}, the following statements hold for $\forall q \in [Q]$ and $k \in [K+1]$:
 \begin{enumerate}[i)]
     \item $\bar{\mathbf{x}}^q(k) \in \mathcal{M}$ and $\mathbf{x}_i^q(k) \in \mathcal{M}$, $\forall i\in[I]$;
     \item $\|\bar{\mathbf{x}}^q(k) - \mathbf{x}_i^q(k)\| \hspace{-1pt}\le\hspace{-1pt} \kappa d(\mathcal{G})/K$, $\forall i\hspace{-1pt}\in\hspace{-1pt}[I]$, $\kappa = \sum_{i=1}^I \kappa_i$.
 \end{enumerate}
\end{lemma}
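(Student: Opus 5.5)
The plan is to first describe exactly what every copy holds, and then derive both statements from that description. Each block $i'$ changes in only one place: agent $i'$ adds $\mathbf{v}_{i'}^q(k)/K$ to its own block at every step. Every other agent updates block $i'$ only through max-consensus.

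Write $\mathbf{y}_{i'}(k) := [\mathbf{x}_{i'}^q(k)]_{i'}$ for the owner's copy. Then
$$\mathbf{y}_{i'}(k)=\tfrac1K\sum_{s<k}\mathbf{v}_{i'}^q(s).$$
Since each $\mathbf{v}_{i'}^q(s)\in\mathcal{M}_{i'}\subseteq[0,1]^{N_{i'}}$, the sequence $\mathbf{y}_{i'}(k)$ is entrywise nondecreasing in $k$. It is an average of $k-1\le K$ points of the convex set $\mathcal{M}_{i'}$ that contains $\mathbf{0}$, so $\mathbf{y}_{i'}(k)\in\mathcal{M}_{i'}$. Because $\mathcal{M}=\prod_{i}\mathcal{M}_i$, this already gives $\bar{\mathbf{x}}^q(k)\in\mathcal{M}$.

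\textbf{Claim, by induction on $k$:} for every agent $i$ and every block $i'\neq i$,
$$[\mathbf{x}_i^q(k)]_{i'}=\mathbf{y}_{i'}\big(\max\{1,\,k-\mathrm{dist}(i,i')\}\big),$$
where $\mathrm{dist}$ is the graph distance. In words, agent $i$ holds the owner's value delayed by its distance to the owner, or $\mathbf{0}$ if the information has not yet arrived.

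The base case $k=1$ holds because everything is initialized at $\mathbf{0}$. For the inductive step, note that block $i'$ of agent $i$ is not modified at Line~6 when $i\neq i'$, and the max at Line~7 is taken over $\mathcal{N}_i\cup\{i\}$.
\begin{itemize}
\item Agent $i'$ itself supplies its freshly updated $\mathbf{y}_{i'}(k+1)$.
\item A neighbor $j\neq i'$ supplies its copy of block $i'$, which, as received at step $k+1$, still carries the previous-step value $\mathbf{y}_{i'}(\max\{1,k-\mathrm{dist}(j,i')\})$.
\end{itemize}
By monotonicity of $\mathbf{y}_{i'}$ in time, the max selects the least-delayed value. That is the neighbor with smallest distance to $i'$, namely $\mathrm{dist}(i,i')-1$, which gives delay $\mathrm{dist}(i,i')$.

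The main care point is this step. One has to fix the timing convention at Line~7 (whether neighbors send pre- or post-Line-6 copies of foreign blocks). One also has to check that the entrywise max of the candidate vectors equals one of them. Both follow from monotonicity: the candidates are all points of the same nondecreasing sequence, so they form a chain. If the convention gives delay $\mathrm{dist}(i,i')$ or $\mathrm{dist}(i,i')-1$, either bound suffices below, since $\mathrm{dist}\le d(\mathcal{G})$.

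\textbf{Proof of (i).} By the claim, each foreign block of $\mathbf{x}_i^q(k)$ equals some $\mathbf{y}_{i'}(m)\in\mathcal{M}_{i'}$. The own block is $\mathbf{y}_i(k)\in\mathcal{M}_i$. Hence $\mathbf{x}_i^q(k)\in\prod_{i'}\mathcal{M}_{i'}=\mathcal{M}$.

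\textbf{Proof of (ii).} The two vectors $\bar{\mathbf{x}}^q(k)$ and $\mathbf{x}_i^q(k)$ agree on block $i$. On each block $i'\neq i$ they differ by $\mathbf{y}_{i'}(k)-\mathbf{y}_{i'}(m)$ with $k-m\le d(\mathcal{G})$, which equals $\frac1K\sum_{s=m}^{k-1}\mathbf{v}_{i'}^q(s)$, a nonnegative vector. Its $\ell_1$ norm is at most $d(\mathcal{G})\kappa_{i'}/K$, because $\mathbf{1}^\top\mathbf{v}_{i'}^q(s)\le\kappa_{i'}$ and the entries are nonnegative.

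Summing over $i'$ and using $\|\cdot\|\le\|\cdot\|_1$ for the Euclidean norm (or reading $\|\cdot\|$ as $\ell_1$), we get
$$\|\bar{\mathbf{x}}^q(k)-\mathbf{x}_i^q(k)\|\le \frac{d(\mathcal{G})}{K}\sum_{i'}\kappa_{i'}=\frac{\kappa\, d(\mathcal{G})}{K}.$$
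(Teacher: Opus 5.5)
Your proof is correct. It uses the same mechanism as the paper: owner blocks are entrywise nondecreasing, max-consensus relays them within $d(\mathcal{G})$ hops, and each step adds a nonnegative vector of $\ell_1$ mass at most $\kappa_{i'}/K$. The difference is that you package this as an exact delayed-copy formula, whereas the paper works with a pair of inequalities and never identifies the foreign copies. For (i) the paper uses only $\mathbf{0}\preceq[\mathbf{x}_i^q(k)]_{i'}\preceq[\mathbf{x}_{i'}^q(k)]_{i'}$ and downward-closedness of $\mathcal{M}_{i'}$. For (ii) it uses that same domination to replace the norm by $\mathbf{1}_N^\top(\bar{\mathbf{x}}^q(k)-\mathbf{x}_i^q(k))$, together with the lower sum bound $\mathbf{1}_{N_{i'}}^\top[\mathbf{x}_i^q(k)]_{i'}\ge\mathbf{1}_{N_{i'}}^\top[\mathbf{x}_{i'}^q(k-d(\mathcal{G}))]_{i'}$. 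That lower bound is justified only by the remark that every agent is within $d(\mathcal{G})$ hops. Your induction makes this propagation step rigorous, and your $\max\{1,\cdot\}$ handles the early steps $k\le d(\mathcal{G})$, which the paper glosses over. The price is that you must fix the communication timing and check that the entrywise max lands on one of the candidates.

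On the timing, take the convention of your own first bullet: the owner sends its post-Line-6 value $\mathbf{y}_{i'}(k+1)$, which is the literal reading of Line~7. Under it, the equality you state is off by one. The owner's neighbors already hold $\mathbf{y}_{i'}(k+1)$, not $\mathbf{y}_{i'}(k)$, and the shift propagates, so the exact delay is $\mathrm{dist}(i,i')-1$. You flag this and correctly note that either delay suffices. A convention-free way to state the invariant is the sandwich $\mathbf{y}_{i'}(\max\{1,k-\mathrm{dist}(i,i')\})\preceq[\mathbf{x}_i^q(k)]_{i'}\preceq\mathbf{y}_{i'}(k)$. It follows from the same induction, does not need the chain property, and is exactly what (i) and (ii) use.
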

\begin{proof}
    See Appendix I-A.
\end{proof}

With the aid of Lemma~\ref{lemma:consensus-full-info}, we are now ready to provide the regret bound of Algorithm \ref{algo:full-info-feedback} in the following theorem. Note that we here assume, for simplicity of presentation, that $K$ is an even number. This condition can be relaxed by more involved algebraic calculations.

\begin{theorem}\label{thm:regret-full-info}
    Suppose that the function $f^t(\cdot),\, \forall t \in [T]$ is monotone, submodular, and normalized. Let $K = T^{3/5}$ and $\rho_k$ be selected by $\rho_k = 2/(k+3)^{2/3}$ for $1 \le k \le K/2+1$ and $\rho_k = 1.5/(K-k+2)^{2/3}$ for $K/2+2 \le k \le K$, then the cumulative regret of Algorithm \ref{algo:full-info-feedback} satisfies
    \begin{equation}
     \begin{aligned}
        \mathbb{E}[\mathcal{R}_T] &\le \big(C_1 + C_0I+ D^2\big) \cdot T^{4/5} + \big(L_2D^2/2 + L_2\kappa d(\mathcal{G})\sum_{i=1}^I D_i\big)\cdot T^{2/5},
    \end{aligned}   
    \end{equation}
    where $C_1 = 4L_1^2+32\cdot(2L_1+DL_2)^2$, $D_i =\sqrt{ \min\{2\kappa_i, N_i\}}$, $D = \sqrt{\sum_{i=1}^I D_i^2}$, $L_1 = 2 \sqrt{N}f^\text{max}$ and $L_2 = 4Nf^\text{max}$ with $f^\text{max}$ satisfying $ \max_{\mathcal{S} \in \mathcal{I}} f^t (\mathcal{S}) \le  f^\text{max}, \forall t\in[T]$.
\end{theorem}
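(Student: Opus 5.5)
We follow the Mono-Frank-Wolfe analysis of \cite{zhang2019online}, adapted to the block structure and the consensus error of Lemma~\ref{lemma:consensus-full-info}. Fix a block $q$ and write $\bar F_q := \frac1K\sum_{t=(q-1)K+1}^{qK} F^t$. Let $\mathbf{x}^\star$ be the indicator vector of the best fixed set in hindsight. Because pipage rounding is lossless, the expected reward collected during block $q$ is at least $K\,\bar F_q(\bar{\mathbf{x}}^q(K+1))$. Note that the executed point is $\mathbf{s}_i(t)=[\mathbf{x}_i^q(K+1)]_i$, and stacking these blocks gives exactly $\bar{\mathbf{x}}^q(K+1)$. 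It therefore suffices to lower bound $\bar F_q(\bar{\mathbf{x}}^q(K+1))$ against $(1-1/e)\bar F_q(\mathbf{x}^\star)$.

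\textbf{Step 1: one-step ascent.} The update has the form $\bar{\mathbf{x}}^q(k+1)=\bar{\mathbf{x}}^q(k)+\mathbf{v}^q(k)/K$, where $\mathbf{v}^q(k)$ stacks the $\mathbf{v}_i^q(k)$. Since $F^t$ is $L_2$-smooth, with gradient bounded by $L_1$, we have
\[
\bar F_q(\bar{\mathbf{x}}^q(k+1))\ge \bar F_q(\bar{\mathbf{x}}^q(k))+\tfrac1K\langle \mathbf{v}^q(k),\nabla\bar F_q(\bar{\mathbf{x}}^q(k))\rangle-\tfrac{L_2D^2}{2K^2},
\]
using $\|\mathbf{v}\|\le D$ on $\mathcal{M}$. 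We then replace $\nabla\bar F_q(\bar{\mathbf{x}}^q(k))$ blockwise by $[\nabla\bar F_q(\mathbf{x}_i^q(k))]_i$. Smoothness together with Lemma~\ref{lemma:consensus-full-info}\,ii) makes the resulting error at most $L_2\kappa d(\mathcal{G})D_i/K$ per block. Next we split
\[
[\nabla\bar F_q(\mathbf{x}_i^q(k))]_i=[\mathbf{d}_i^q(k)]_i+[\nabla\bar F_q(\mathbf{x}_i^q(k))-\mathbf{d}_i^q(k)]_i
\]
and write $\langle \mathbf{v}_i,[\mathbf{d}_i]_i\rangle=\langle [\mathbf{x}^\star]_i,[\mathbf{d}_i]_i\rangle-(\text{expert gap})$. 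We convert $\langle\mathbf{x}^\star,\nabla\bar F_q\rangle$ back into $\bar F_q(\mathbf{x}^\star)-\bar F_q(\bar{\mathbf{x}}^q(k))$ using monotonicity and the concavity of $F$ along nonnegative directions. Applying Young's inequality to the error terms of the form $\langle \mathbf{v}-\mathbf{x}^\star,\nabla-\mathbf{d}\rangle$ yields $\tfrac{1}{2\beta}\|\nabla\bar F_q-\mathbf{d}\|^2+\tfrac\beta2 D^2$, and we take $\beta\sim K^{-1/3}$. The recursion in $h_k:=\bar F_q(\mathbf{x}^\star)-\bar F_q(\bar{\mathbf{x}}^q(k))$ is then $h_{k+1}\le(1-1/K)h_k+\cdots$. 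Unrolling it gives $(1-1/e)\bar F_q(\mathbf{x}^\star)-\bar F_q(\bar{\mathbf{x}}^q(K+1))$ bounded by $\tfrac1K\sum_k[\text{expert gap}_k+\text{variance}_k/\beta]+\beta D^2+\tfrac{L_2D^2}{2K}+\tfrac{L_2\kappa d\sum_iD_i}{K}$.

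\textbf{Step 2: variance of $\mathbf{d}_i^q(k)$.} Because the permutation is uniformly random, conditioned on the first $k-1$ drawn indices, $\nabla F^{t_i^{q,k}}(\mathbf{x}_i^q(k))$ is an unbiased estimate of the average gradient over the \emph{remaining} functions. It is not an unbiased estimate of $\nabla\bar F_q$. We follow Lemma~2 of \cite{zhang2019online}, building on \cite{mokhtari2020stochastic}, and show that $\mathbb{E}\|\nabla\bar F_q(\mathbf{x}_i^q(k))-\mathbf{d}_i^q(k)\|^2\le C_1/(k+4)^{2/3}$ for the first half of the block, with the symmetric bound $C_1/(K-k+1)^{2/3}$ for the second half. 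This symmetry is why $\rho_k$ is split at $K/2$. The drift term $\|\mathbf{x}_i^q(k)-\mathbf{x}_i^q(k-1)\|\le D/K$ produces the $(2L_1+DL_2)^2$ factor in $C_1$. This step is the main obstacle, since it requires handling sampling without replacement and choosing the two-regime $\rho_k$. Summing gives $\sum_k\text{variance}_k=\mathcal{O}(C_1K^{1/3})$.

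\textbf{Step 3: summation over blocks.} Expert $\mathcal{E}_i(k)$ faces the linear rewards $\langle\cdot,[\mathbf{d}_i^q(k)]_i\rangle$ over $q=1,\dots,Q$. By \eqref{ineq:expert-regret}, $\sum_q\text{expert gap}\le C_0\sqrt Q$ for each $i$ and $k$, so after summing over agents and multiplying by $K\cdot\frac1K$ this contributes $C_0 I K\sqrt Q$ in total. Multiplying the block bound by $K$ and summing over the $Q=T/K$ blocks, the total regret is bounded by $C_0IK\sqrt{Q}+C_1 Q K^{2/3}+D^2QK^{2/3}+Q(L_2D^2/2+L_2\kappa d\sum_iD_i)$. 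With $K=T^{3/5}$ and $Q=T^{2/5}$, we get $K\sqrt Q=T^{4/5}$ and $QK^{2/3}=T^{4/5}$, while the remaining terms are of order $Q=T^{2/5}$. This matches the stated bound.
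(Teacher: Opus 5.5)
Your skeleton matches the paper's: the lossless-rounding reduction, Frank--Wolfe descent on the block average at $\bar{\mathbf{x}}^q(k)$, consensus via smoothness and Lemma~\ref{lemma:consensus-full-info}, Young's inequality, and expert regret summed over blocks. Step~2, however, is a genuine gap as written.

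You claim $\mathbb{E}\|\nabla\bar F_q(\mathbf{x}_i^q(k))-\mathbf{d}_i^q(k)\|^2\le C_1/(k+4)^{2/3}$ (resp.\ $C_1/(K-k+1)^{2/3}$) against the \emph{full} block average, and you attribute it to the Mono-Frank--Wolfe variance lemma. That lemma, like the paper's Lemma~\ref{lemma:regret-1-stochastic-gradient-error-bound}, bounds the error against the average of the \emph{remaining} functions, $\bar F_i^{q,k-1}=\frac{1}{K-k+1}\sum_{l\ge k}F^{t_i^{q,l}}$. That is the only target for which $\nabla F^{t_i^{q,k}}$ is conditionally unbiased given the first $k-1$ draws, as you note yourself. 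For your fixed target, the averaging recursion acquires the bias $\nabla\bar F_i^{q,k-1}-\nabla\bar F_q$. This bias is determined by the first $k-1$ draws, is of order $L_1$ near the end of the block, and is correlated with the accumulated error. So the cited argument does not establish your bound.

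Your explanations show the same mismatch. The two-regime $\rho_k$ is not about symmetry. It compensates for the target $\bar F_i^{q,k-1}$ drifting by up to $2L_1/(K-k+2)$ per step as fewer functions remain. That drift is where the $2L_1$ in $(2L_1+DL_2)^2$ comes from; the movement of $\mathbf{x}$ only contributes $DL_2$.

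The missing idea is the one behind Lemma~\ref{lemma:regret-1-key-step-1}. The points $\bar{\mathbf{x}}^q(k)$, $\bar{\mathbf{v}}^q(k)$ and $\mathbf{x}^\star$ are all fixed before the block-$q$ permutations are drawn, so for such points $\mathbb{E}[\nabla\bar F_i^{q,k-1}(\mathbf{x})]=\nabla\bar F_q(\mathbf{x})$. In expectation you can therefore replace $\langle\bar{\mathbf{v}}^q(k)-\mathbf{x}^\star,\nabla\bar F_q(\bar{\mathbf{x}}^q(k))\rangle$ by the same pairing with $\nabla\bar F_i^{q,k-1}(\bar{\mathbf{x}}^q(k))$ \emph{before} applying Young's inequality. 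Only then does the cited variance bound apply. The alternative is a separate proof of your full-average bound from the without-replacement covariance structure, which is a different argument from the one you cite.

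A second, smaller discrepancy is in the constants. Each agent controls only its own block, so your Young step produces $\sum_i\|[\nabla\bar F_q(\mathbf{x}_i^q(k))-\mathbf{d}_i^q(k)]_i\|^2$. Bounding each summand by a per-agent full-vector variance bound gives $IC_1$, not the $C_1$ you carry into Step~3. The paper avoids the factor $I$ as follows:
\begin{itemize}
\item it introduces the auxiliary recursion $\bar{\mathbf{d}}_i^q(k)$, evaluated at the common point $\bar{\mathbf{x}}^q(k)$;
\item it applies the single-agent inequality with the full vector and averages over $i$;
\item it uses exchangeability of the independent permutations, $\mathbb{E}[\widetilde{\mathbf{d}}^q(k)-\bar{\mathbf{d}}_i^q(k)]=0$, to pass to the own-block pairings fed to the experts.
\end{itemize}
Consensus then enters through $\|\bar{\mathbf{d}}_i^q(k)-\mathbf{d}_i^q(k)\|\le L_2\kappa d(\mathcal{G})/K$ (Lemma~\ref{lemma:regret-1-consensus-error-bound}, by induction), rather than at the gradient level as in your Step~1. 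Your gradient-level swap is fine in itself. Also, a constant $\beta\sim K^{-1/3}$ yields slightly different constants than $C_1+D^2$. The paper uses $\beta(k)=(k+4)^{-1/3}$ and $(K-k+1)^{-1/3}$, matched to the two-regime variance profile. These points affect only constants, but the stated bound is explicit in them.
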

\begin{proof}
    See Appendix I-B.
\end{proof}

\section{Algorithm with Bandit Feedback}

In this section, we adapt the algorithm of Sec. \ref{sec:full-info} to the bandit feedback model and establish its sublinear regret.

\subsection{Distributed Online Bandit Algorithm}

Considering that only bandit feedback is available to~the agents, a crucial step for the algorithm development here is to provide a proper estimation of the gradient as required by the previous algorithm. Towards this end, we use the classical one-point gradient estimator \cite{zhang2019online,hazan2016introduction}, which is briefly reviewed as follows. For a function $F(\cdot): \mathbb{R}^{N} \to \mathbb{R}$, its $\delta$-smoothed approximation can be defined as $\widehat{F}_\delta(\mathbf{x}):= \mathbb{E}_{\mathbf{v} \sim \text{Uni}(\mathbb{B}^N)}\big[F(\mathbf{x} + \delta\mathbf{v})\big]$ where $\mathbf{v} \in \mathbb{R}^N$ is a sample uniformly drawn from the unit ball $\mathbb{B}^N$. Then, the gradient of the approximation function is shown to be $\nabla \widehat{F}_\delta(\mathbf{x}):= N/\delta\cdot \mathbb{E}_{\mathbf{u} \sim \text{Uni}(\mathbb{S}^{N-1})}\big[F(\mathbf{x} + \delta\mathbf{u})\mathbf{u}\big]$, where $\mathbb{S}^{N-1}$ denotes the unit sphere (see Lemma 6.4 in \cite{hazan2016introduction}). Therefore, the value of $N/\delta\cdot F(\mathbf{x} + \delta\mathbf{u})\mathbf{u}$ can be applied as a one-point unbiased stochastic estimate of the gradient $\nabla F(\mathbf{x})$. It should be noted that, to ensure the feasibility of the sampled point, i.e., $\mathbf{x} + \delta\mathbf{u} \in \mathcal{M}$, the gradient can only be estimated when $\mathbf{x}$ is taken from the interior of $\mathcal{M}$. Specifically, a notion of $\delta$-interior of $\mathcal{M}$ is defined in \cite{zhang2019online}, i.e., $\widetilde{\mathcal{M}} := \alpha \mathcal{M} + \delta \cdot \mathbf{1}_N$. It is shown (see~Lemma~1 in \cite{zhang2019online}) that $\mathbf{x} + \delta \mathbf{u} \in \mathcal{M}$ holds for $\forall \mathbf{x} \in \widetilde{\mathcal{M}}$ and $\mathbf{u} \in \mathbb{S}^{N-1}$, if $\alpha$ is chosen as $\alpha = 1- (\sqrt{N}+1)\delta/\gamma$ where $\gamma>0$ denotes the radius of a ball whose positive orthant is contained in $\mathcal{M}$. Thus, to validate the stochastic~gradient estimation, we next restrict $\mathbf{x}$ to be in the $\delta$-interior set $\widetilde{\mathcal{M}}$.

\begin{algorithm}[ht]
\textbf{Input:} Horizon length $T$; block size $L$; number of blocks $Q = T/L$; number of gradient estimation steps per block \mbox{$K \le L$}; step-sizes $\rho_k$; communication graph $\mathcal{G}$; constraint $\mathcal{M}_i$; and parameters $\delta$, $\alpha$.
\caption{-- Distributed online algorithm (bandit feedback)}\label{algo:bandit-feedback}
\begin{algorithmic}[1]
\State Each agent $i\in[I]$ initializes {experts} $\mathcal{E}_i(k), \, k \in [K]$  for maximizing linear cost over the local constraint $\widetilde{\mathcal{M}}_i:= \alpha \mathcal{M}_i + \delta \cdot \mathbf{1}_{N_i}$.
\For{$q=1,2,\cdots,Q$}

\hspace{-20pt}\# \texttt{Distributed Frank-Wolfe Phase}
\State Initialize $\mathbf{x}_i^q (1) = \delta\cdot\mathbf{1}_{N}$ and $\mathbf{d}_i^q(0) = \mathbf{0}$.
\For{$k = 1,2,\cdots,K$}
\State Let $\mathbf{v}_i^q(k) \in \widetilde{\mathcal{M}}_i$ be the output of $\mathcal{E}_i(k)$;
\State Update 
$$[{\mathbf{x}}_i^q(k+1)]_i = [\mathbf{x}_i^q (k)]_i +  \big(\mathbf{v}_i^q(k) - \delta\cdot \mathbf{1}_{N_i}\big)/K;$$
\State Receive $[{\mathbf{x}}_j^q(k+1)]_{i'}$ from neighbors $j \in \mathcal{N}_i$, and update
$$[{\mathbf{x}}_i^q(k+1)]_{i'} = \max_{j \in \mathcal{N}_i \cup \{i\}}[{\mathbf{x}}_j^q(k+1)]_{i'}, \quad \forall i'\neq i.\vspace{-7pt}$$
\EndFor

\hspace{-20pt}\# \texttt{Online Learning Phase}
\State Let $\{t_i^{q,1}, t_i^{q,2}, \cdots, t_i^{q,L}\}$ be a random permutation of $\{(q-1)L+1, (q-1)L+2, \cdots, qL\}$.
\For{$t = (q-1)L+1, (q-1)L+2, \cdots, qL$}
\State Generate and execute the discrete decision $\mathcal{S}_i^t$ by applying the \textbf{\emph{lossless rounding}} on $\mathbf{s}_i(t) := [\mathbf{x}_{i}^q(K+1)]_i$, and receive a global reward $f^t(\mathcal{S}^t)$ where $\mathcal{S}^t:= \cup_{i\in [I]} \mathcal{S}_i^t$.
\If{$t \in \{t_i^{q,1}, t_i^{q,2}, \cdots, t_i^{q,K}\}$}
\State  {Find $k \in [K]$ such that $t_i^{q,k} = t$, and generate discrete variable $\mathcal{X}^q_i(k)$ by applying \textbf{\emph{random}} \textbf{\emph{rounding}} on $\mathbf{x}_{i}^q(k) + \delta\mathbf{u}_{i}^q(k)$ where $\mathbf{u}_{i}^q(k)$ is sampled uniformly from $\mathbb{S}^{N-1}$, i.e., $\mathbf{u}_{i}^q(k) \sim \text{Uni}(\mathbb{S}^{N-1})$;
\State Receive $R_i(k) = f^t(\mathcal{X}^q_i(k))$}. 
\EndIf
\EndFor
\State For $k \in [K]$, let $\mathbf{d}_i^q(k) = (1-\rho_k)\mathbf{d}_i^q(k-1) +$ $ \rho_k  N_i R_i(k)\mathbf{u}_i^q(k)/\delta$, and feed $[\mathbf{d}_i^q(k)]_i$ to $\mathcal{E}_i(k)$.
\EndFor
\end{algorithmic}
\end{algorithm}

Applying the above one-point gradient estimator, the distributed online algorithm with bandit feedback is outlined in Algorithm \ref{algo:bandit-feedback}. The distributed Frank-Wolfe phase follows that of Algorithm \ref{algo:full-info-feedback}, except that the local variable is restricted to the $\delta$-interior $\widetilde{\mathcal{M}}_i := \alpha \mathcal{M}_i + \delta \cdot \mathbf{1}_{N_i}$ of $\mathcal{M}_i$. In the online learning phase, each block $q$ has length $L$, and while the global reward $f^t(\mathcal{S}^t)$ is received at every time-step (see Line~11), the one-point gradient estimation is performed only at $K$ (out of $L$) steps (see Lines 13 and 14). The discrete variable $\mathcal{X}^q_i(k)$ is generated by applying the \emph{random rounding} on the sample $\mathbf{x}_{i}^q(k) + \delta\mathbf{u}_{i}^q(k)$, which ensures the unbiasedness $\mathbb{E}[f^t(\mathcal{X}^q_i(k))]=$ $F(\mathbf{x}_{i}^q(k) + \delta\mathbf{u}_{i}^q(k))$. As before, an averaging technique is applied to $\mathbf{d}_i^q(k)$ for variance reduction (see Line~17).

\subsection{Regret Analysis}
Similar to the analysis of Algorithm \ref{algo:full-info-feedback}, we here first show the consensus result obtained by the distributed Frank-Wolfe phase in Lemma \ref{lemma:consensus-bandit}, and then provide the regret bound of Algorithm \ref{algo:bandit-feedback} in Theorem \ref{thm:regret-bandit}. Note that the following Lemma \ref{lemma:consensus-bandit} is analogous to Lemma \ref{lemma:consensus-full-info} in Sec. \ref{subsec:reget-analysis-full-info}, while the only difference is that the iterates $\mathbf{x}_i^q(k)$ are shown to be in the $\delta$-interior set $\widetilde{\mathcal{M}}:=\alpha \mathcal{M} + \delta\cdot\mathbf{1}_N$.
\begin{lemma}\label{lemma:consensus-bandit}
     Let $\bar{\mathbf{x}}^q(k):= \big[[\mathbf{x}_1^q(k)]_1^\top,  \cdots\!, [\mathbf{x}_I^q(k)]_I^\top\big]^\top$~where $\mathbf{x}_i^q(k)$ is generated by Algorithm \ref{algo:bandit-feedback}, the following statements hold for $\forall q \in [Q]$ and $k \in [K+1]$:
 \begin{enumerate}[i)]
     \item $\bar{\mathbf{x}}^q(k) \in \widetilde{\mathcal{M}}$ and $\mathbf{x}_i^q(k) \in \widetilde{\mathcal{M}}$, $\forall i\in[I]$;
     \item $\|\bar{\mathbf{x}}^q(k) - \mathbf{x}_i^q(k)\| \hspace{-1pt}\le\hspace{-1pt} \alpha \kappa d(\mathcal{G})/K$, $\forall i\hspace{-1pt}\in\hspace{-1pt}[I]$.
 \end{enumerate}
\end{lemma}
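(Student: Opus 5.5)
The plan is to track the iterates block by block, using the fact that only agent $i$ ever writes to block $i$; everything else is max-consensus copying. I would argue by induction on $k$ for a fixed $q$. At $k=1$ all vectors equal $\delta\mathbf{1}_N$, and $\delta\mathbf{1}_N = \alpha\cdot\mathbf{0}+\delta\mathbf{1}_N \in \widetilde{\mathcal{M}}$.

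The owner's block evolves by
\[
[\mathbf{x}_i^q(k+1)]_i = [\mathbf{x}_i^q(k)]_i + \big(\mathbf{v}_i^q(k)-\delta\mathbf{1}_{N_i}\big)/K .
\]
Write $\mathbf{v}_i^q(k)=\alpha\mathbf{w}_i^q(k)+\delta\mathbf{1}_{N_i}$ with $\mathbf{w}_i^q(k)\in\mathcal{M}_i$. Then
\[
[\mathbf{x}_i^q(k)]_i=\delta\mathbf{1}_{N_i}+\frac{\alpha}{K}\sum_{s<k}\mathbf{w}_i^q(s).
\]
Since $\mathcal{M}_i$ is convex and contains $\mathbf{0}$, the term $\frac1K\sum_{s<k}\mathbf{w}_i^q(s)=\frac{k-1}{K}\cdot(\text{average})$ lies in $\mathcal{M}_i$ for $k\le K+1$. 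So $[\bar{\mathbf{x}}^q(k)]_i\in \alpha\mathcal{M}_i+\delta\mathbf{1}_{N_i}=\widetilde{\mathcal{M}}_i$, and because $\widetilde{\mathcal{M}}=\prod_i\widetilde{\mathcal{M}}_i$ this gives $\bar{\mathbf{x}}^q(k)\in\widetilde{\mathcal{M}}$. The same computation shows the increment $\alpha\mathbf{w}_i^q(k)/K$ is entrywise nonnegative, so the owner's block $[\bar{\mathbf{x}}^q(k)]_i$ is nondecreasing in $k$.

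For the copies $[\mathbf{x}_j^q(k)]_i$ with $j\ne i$, I would show by induction on $k$ that each copy equals the owner's value at some earlier time:
\[
[\mathbf{x}_j^q(k)]_i=[\bar{\mathbf{x}}^q(k-\tau)]_i \quad\text{for some } 0\le\tau\le \min\{d_{ij},k-1\},
\]
where $d_{ij}$ is the graph distance, and with the convention that a stale copy is the initial value $\delta\mathbf{1}$, i.e., the owner's value at time $1$. This uses monotonicity: all copies held by neighbors are earlier versions of a nondecreasing sequence and are therefore totally ordered. The entrywise max of such copies is then the most recent version available, and information travels one hop per iteration. Hence each block of $\mathbf{x}_j^q(k)$ lies in the corresponding $\widetilde{\mathcal{M}}_i$, so $\mathbf{x}_j^q(k)\in\widetilde{\mathcal{M}}$, which proves i).

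For ii), fix block $i'$. The discrepancy is
\[
[\bar{\mathbf{x}}^q(k)]_{i'}-[\mathbf{x}_i^q(k)]_{i'}=\frac{\alpha}{K}\sum_{s=k-\tau}^{k-1}\mathbf{w}_{i'}^q(s),\qquad \tau\le d(\mathcal{G}).
\]
Each $\mathbf{w}\in\mathcal{M}_{i'}$ is nonnegative with entry sum at most $\kappa_{i'}$, so its Euclidean norm is at most its $\ell_1$ norm, which is at most $\kappa_{i'}$. This bounds the block error by $\alpha\kappa_{i'}d(\mathcal{G})/K$. Summing over blocks, and using $\|\cdot\|_2\le$ the sum of the block norms, gives $\alpha\kappa d(\mathcal{G})/K$. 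This mirrors Lemma~\ref{lemma:consensus-full-info}, with the extra factor $\alpha$ coming from the scaling in $\widetilde{\mathcal{M}}_i$.

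The main obstacle is making the ``delayed copy'' claim precise. One must check that the entrywise max over neighbors yields an actual earlier iterate rather than a mixture of entries from different times. I would rely on the nonnegativity of the increments to guarantee the copies form a chain in the partial order $\preceq$, so the max is attained by a single neighbor's copy.
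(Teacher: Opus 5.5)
Your proposal is correct and follows essentially the paper's route. The owner's block is a scaled convex combination lying in $\widetilde{\mathcal{M}}_i$, and max-consensus with entrywise nonnegative increments limits staleness to $d(\mathcal{G})$ steps, each costing at most $\alpha\kappa_{i'}/K$ in $\ell_1$, which bounds the Euclidean norm. The only difference is that you prove the exact delayed-copy identity, whereas the paper (reusing the proof of Lemma~\ref{lemma:consensus-full-info}) needs only $[\mathbf{x}_i^q(k)]_{i'} \preceq [\bar{\mathbf{x}}^q(k)]_{i'}$ and a lower bound on $\mathbf{1}_{N_{i'}}^\top[\mathbf{x}_i^q(k)]_{i'}$ by the owner's block sum $d(\mathcal{G})$ steps earlier. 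So the ``main obstacle'' you flag is not actually needed, although your identity does make membership of the copies in $\widetilde{\mathcal{M}}$ immediate.
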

\begin{proof}
    See Appendix II-A.
\end{proof}

\begin{theorem}\label{thm:regret-bandit}
    Suppose that the function $f^t(\cdot), \forall t \in [T]$ is monotone, submodular, and normalized. Let $L \hspace{-1pt}=\hspace{-1pt} T^{7/9}$, $K\hspace{-3pt}=\hspace{-1pt}T^{2/3}$, $\alpha\hspace{-1pt}=\hspace{-1pt}1-(\sqrt{N}+1)\delta/\gamma$, $\delta = T^{-1/9}\gamma/(\sqrt{N}+2)$, $\gamma = \min_{i \in [I]}\big\{\min\{1, \kappa_i / \sqrt{N_i}\}\big\}$,
    and $\rho_k = 2/(k+3)^{2/3}$ for $\forall k \in [K]$, then the cumulative regret of Algorithm \ref{algo:bandit-feedback} satisfies
    \begin{equation}\label{ineq:regret-bound-bandit}
     \begin{aligned}
        \mathbb{E}[\mathcal{R}_T] &\le\Big((1-1/e)L_1C_\mathcal{M}C_\delta + (2-1/e)L_1 C_\delta  + 3/4\cdot C_2/C_\delta + C_0 I + 3/4\cdot {D}^2/C_\delta\Big)\cdot T^{8/9}\\
          & \quad+3/4\cdot C_3C_\delta\cdot  T^{2/3} + L_1 \kappa d(\mathcal{G})N \sum_{i=1}^I {D}_i/C_\delta \cdot T^{4/9} +L_2{D}^2 /2 \cdot T^{1/3},
    \end{aligned}   
    \end{equation}
    where $C_\mathcal{M} = (\sqrt{\kappa}/\gamma + 1)\sqrt{N} + \sqrt{\kappa}/\gamma$, $C_\delta = \gamma/(\sqrt{N}+2)$, $C_2 = 2\sqrt[3]{16} L_1^2N^2 $, and $C_3 = \sqrt[3]{16}\big(2L_1^2 + (2L_1 + 3{D}L_2)^2\big)$.
\end{theorem}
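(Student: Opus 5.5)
The plan is to follow the template of Theorem~\ref{thm:regret-full-info}, that is, the bandit Mono-Frank-Wolfe analysis of \cite{zhang2019online}, adapted to the distributed block-coordinate updates through Lemma~\ref{lemma:consensus-bandit}. First I split the horizon into the $Q=T/L$ blocks and write the per-block regret against $(1-1/e)\sum_{t}f^t(\mathcal{S}^\star)$. Let $\mathbf{x}^\star$ be the indicator vector of the hindsight optimum in $\mathcal{M}$, and let $\widetilde{\mathbf{x}}^\star:=\alpha\mathbf{x}^\star+\delta\mathbf{1}_N\in\widetilde{\mathcal{M}}$ be its shrunk version. Three terms are lost before any optimization analysis starts:
\begin{enumerate}[(a)]
\item The comparator mismatch $F^t(\mathbf{x}^\star)-F^t(\widetilde{\mathbf{x}}^\star)\le L_1\|\mathbf{x}^\star-\widetilde{\mathbf{x}}^\star\|\le L_1\big((1-\alpha)\sqrt{\kappa}+\delta\sqrt{N}\big)$. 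Using $1-\alpha=(\sqrt N+1)\delta/\gamma$ this is $L_1C_\mathcal{M}\delta$, which explains the $(1-1/e)L_1C_\mathcal{M}C_\delta T^{8/9}$ term since $\delta=C_\delta T^{-1/9}$.
\item The $L-K$ exploitation rounds per block. There the lossless rounding gives $\mathbb{E}f^t(\mathcal{S}^t)\ge F^t(\bar{\mathbf{x}}^q(K+1))$ (the executed point is $\bar{\mathbf{x}}^q(K+1)$ by construction of $\mathbf{s}_i(t)$). So no loss arises beyond the Frank-Wolfe gap.
\item The $K$ exploration rounds, where the point actually played differs from the sampled perturbed point by $O(\delta)$. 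This yields the $(2-1/e)L_1C_\delta T^{8/9}$ term together with the $\delta$-smoothing error $|F-\widehat F_\delta|\le L_1\delta$.
\end{enumerate}

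Next comes the core Frank-Wolfe step. Apply $L_2$-smoothness of $F^t$ to the smoothed block-average $\bar F_q=\frac1L\sum_{t\in\text{block }q}\widehat F^t_\delta$ along $\bar{\mathbf{x}}^q(k+1)=\bar{\mathbf{x}}^q(k)+(\bar{\mathbf{v}}^q(k)-\delta\mathbf{1})/K$. This gives the standard recursion
\[
\bar F_q(\widetilde{\mathbf{x}}^\star)-\bar F_q(\bar{\mathbf{x}}^q(k+1))\le(1-\tfrac1K)\big(\bar F_q(\widetilde{\mathbf{x}}^\star)-\bar F_q(\bar{\mathbf{x}}^q(k))\big)-\tfrac1K\langle\bar{\mathbf{v}}^q(k)-\widetilde{\mathbf{x}}^\star,\nabla\bar F_q(\bar{\mathbf{x}}^q(k))\rangle+\tfrac{L_2D^2}{2K^2}.
\]
The inner product is then split blockwise into three pieces:
\begin{enumerate}[(a)]
\item the expert term $\langle \mathbf{v}_i^q(k)-[\widetilde{\mathbf{x}}^\star]_i,[\mathbf{d}_i^q(k)]_i\rangle$;
\item the consensus error $\nabla\bar F_q(\bar{\mathbf{x}}^q(k))-\nabla\bar F_q(\mathbf{x}_i^q(k))$, which Lemma~\ref{lemma:consensus-bandit}(ii) and $L_2$-Lipschitzness of the gradient (with the $N/\delta$ factor arising from the smoothed estimator's scaling) bound, giving the $L_1\kappa d(\mathcal G)N\sum_iD_i/C_\delta\cdot T^{4/9}$ term;
\item the gradient-tracking error $\|\nabla\bar F_q(\mathbf{x}_i^q(k))-\mathbf{d}_i^q(k)\|$.
\end{enumerate}
Unrolling over $k$ gives $(1-1/e)$ times the comparator, and the term $L_2D^2/(2K)\cdot Q\cdot L=L_2D^2T^{1/3}/2$.

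For the gradient-tracking error I would use the variance-reduction lemma of \cite{mokhtari2020stochastic} with $\rho_k=2/(k+3)^{2/3}$. Since the indices $t_i^{q,k}$ form a random permutation, $\nabla\widehat F_\delta^{t_i^{q,k}}$ is drawn without replacement and is an unbiased sample of $\nabla\bar F_q$ conditioned on the past. The one-point estimate $N_iR_i(k)\mathbf{u}/\delta$ has second moment at most $(L_1N/\delta)^2$, and consecutive targets drift by $O(L_2D/K)$. This should give $\mathbb{E}\|\cdot\|^2\le Q_0/(k+4)^{2/3}$ with $Q_0$ of order $C_2/\delta^2+C_3$. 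Applying Young's inequality $\langle a,b\rangle\le \frac{\beta}{2}\|a\|^2+\frac{1}{2\beta}\|b\|^2$ with $\beta$ tuned to $\delta$ and summing $\sum_k k^{-1/3}\sim \frac32K^{2/3}$ produces the $\frac34C_2/C_\delta$, $\frac34D^2/C_\delta$ and $\frac34C_3C_\delta T^{2/3}$ coefficients. This is the main obstacle. The subtlety is that the permutation is shared per agent but each agent queries a different perturbed point with a different $t$, so the martingale/unbiasedness argument must be done agent-by-agent on its own block. I would handle it exactly as in Lemma 2 of \cite{zhang2019online}, applied per agent, and then sum over $i$ using $\sum_i D_i^2=D^2$.

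Finally, the expert terms are linear rewards fed to RFTL over $\widetilde{\mathcal M}_i$ across the $Q$ blocks. By \eqref{ineq:expert-regret} each $\mathcal{E}_i(k)$ loses at most $C_0\sqrt Q$ (after rescaling). Summing over $k\in[K]$ with weight $L/K$ and over $i\in[I]$ gives $C_0I\,L\sqrt{Q}=C_0IT^{8/9}$. Collecting all terms with $L=T^{7/9}$, $K=T^{2/3}$ and $\delta=C_\delta T^{-1/9}$ yields \eqref{ineq:regret-bound-bandit}.
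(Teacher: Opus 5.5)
Your skeleton matches the paper's: the comparator loss $(1-1/e)L_1C_{\mathcal M}\delta T$, the smoothed Frank--Wolfe recursion along $\bar{\mathbf x}^q(k)$, Young's inequality with $\beta(k)=(k+3)^{-1/3}/\delta$, and the expert term $C_0IL\sqrt Q$. Your direct use of $\alpha\mathbf x^\star+\delta\mathbf 1_N\in\widetilde{\mathcal M}$ is, if anything, more careful than the paper's bound on $\|\mathbf x^\star-\mathbf x^\star_\delta\|$, which compares two different maximizers. The real difference is the consensus step.

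The paper builds a virtual tracker $\bar{\mathbf d}_i^q(k)$ from the same samples $t_i^{q,k},\mathbf u_i^q(k)$, but evaluated at $\bar{\mathbf x}^q(k)$. The tracking lemma of \cite{zhang2019online} then applies verbatim along the exact Frank--Wolfe path. It then bounds $\|\bar{\mathbf d}_i^q(k)-\mathbf d_i^q(k)\|$ through $L_1$-Lipschitzness of the function values inside the one-point estimator. That is the sole origin of $N/\delta$ and of the $T^{4/9}$ term.

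In your split, the consensus piece $[\nabla\bar F_q(\bar{\mathbf x}^q(k))-\nabla\bar F_q(\mathbf x_i^q(k))]_i$ is a difference of true gradients. So $L_2$-smoothness and Lemma~\ref{lemma:consensus-bandit} give $L_2\alpha\kappa d(\mathcal G)/K$ with no $N/\delta$; your parenthetical mixes the two routes. This yields the smaller term $L_2\kappa d(\mathcal G)\sum_iD_i\,T^{1/3}$. It also absorbs the rounding noise of $R_i(k)=f^t(\mathcal X_i^q(k))$ directly into the estimator variance, which the paper's estimator-level bound glosses over.

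Your route has two costs.
\begin{itemize}
\item You must run the variance-reduction lemma at the local iterate. That needs $\|\mathbf x_i^q(k+1)-\mathbf x_i^q(k)\|\le\alpha D/K$. This is true, because max-consensus over componentwise nondecreasing sequences makes each $[\mathbf x_i^q(k)]_{i'}$ a delayed copy of $[\mathbf x_{i'}^q(\cdot)]_{i'}$, but you have to show it.
\item Summing the per-agent block errors over $i$ naively costs a factor $I$. The paper avoids this by averaging its per-agent inequalities at the common point $\bar{\mathbf x}^q(k)$ and using $\mathbb E[\widetilde{\mathbf d}^q(k)-\bar{\mathbf d}_i^q(k)]=0$.
\end{itemize}
So you recover the same rates, but the exact constants only with extra care.

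Two steps of your bookkeeping are wrong and should be removed.

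First, there is no exploration loss. Line~11 of Algorithm~\ref{algo:bandit-feedback} executes $\mathcal S^t$, the lossless rounding of $\bar{\mathbf x}^q(K+1)$, at every $t$ of the block. The queried sets $\mathcal X_i^q(k)$ never enter $\mathcal R_T$. Nor is $\bar{\mathbf x}^q(K+1)$ within $\mathcal O(\delta)$ of $\mathbf x_i^q(k)+\delta\mathbf u_i^q(k)$ in general; note $\mathbf x_i^q(1)=\delta\mathbf 1_N$. The $(2-1/e)L_1\delta T$ term is exactly the two smoothing swaps $F^t\to\widehat F^t$: one at the comparator with weight $1-1/e$, and one at $\bar{\mathbf s}^t$ with weight $1$.

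Second, given the earlier draws in block $q$, $t_i^{q,k}$ is uniform over the remaining indices. So the one-point estimate is conditionally unbiased for $\nabla\bar F_i^{q,k-1}$, the average of the remaining smoothed functions, not for $\nabla\bar F_q$. The tracking target must therefore be $\nabla\bar F_i^{q,k-1}$, as in the paper's $\Delta_i^1(k)$. You return to $\bar F_q$ in the recursion only in expectation, using that $\mathbf x_i^q(k)$ and $\mathbf v_i^q(k)$ are fixed before block $q$'s permutation is drawn.
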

\begin{proof}
    See Appendix II-B.
\end{proof}

Theorem \ref{thm:regret-bandit} shows that Algorithm \ref{algo:bandit-feedback} also achieves sublinear expected regret. Compared to~Theorem~\ref{thm:regret-full-info}, however, the rate degrades to $\mathcal{O}(T^{8/9})$ as only bandit feedback is available. Before the end of this section, we would~also like to add a few remarks regarding the algorithm and~its regret analysis.

\begin{remark}
    Two pieces of information are received as feedback in Algorithm \ref{algo:bandit-feedback} during the iteration. While the~reward $f^t(\mathcal{S}^t)$ is observed at each time $t \in [T]$, the function value $f^{t_i^{q,k}}\!\big(\mathcal{X}^q_i(k)\big)$ is only received at a subset of the time horizon, i.e., for $\forall k \in [K], q \in[Q]$. However, it should be noted that the observations of $f^t(\mathcal{S}^t)$ are used solely for counting the cumulative reward/regret, but not involved in the algorithmic updates. Therefore, we would like to argue that our algorithm still operates under the bandit feedback setting, in which only the function value at one single queried point is revealed at each iteration. In contrast, the regret analysis in \cite{zhang2019online} counts $f^{t_i^{q,k}}\!\big(\mathcal{X}^q_i(k)\big)$, rather than $f^t(\mathcal{S}^t)$, as the reward received by each agent at time $t = t_i^{q,k}$. However, under the conditions in Theorem \ref{thm:regret-bandit}, the two formulations are inherently equivalent, since the resulting difference in cumulative regret is naturally bounded by a sublinear term with respect to $T$, i.e., $\sum_{q=1}^Q \sum_{k=1}^K |f^{t_i^{q,k}}(\mathcal{S}^{t_i^{q,k}}) - f^{t_i^{q,k}}\!\big(\mathcal{X}^q_i(k)\big)| \le \mathcal{O}(T^{8/9})$.
\end{remark}

\begin{remark}\label{remark:violating-sample}
    Though it has been ensured by Lemma \ref{lemma:consensus-bandit} and the definition of $\widetilde{\mathcal{M}}$ that $ \mathbf{x}_{i}^q(k) + \delta\mathbf{u}_{i}^q(k) \in \mathcal{M}$, the feasibility of the resulting discrete variable $\mathcal{X}^q_i(k)$ by random rounding, i.e.,  $\mathcal{X}^q_i(k) \in \mathcal{I}$, is not guaranteed. In fact, it is shown in \cite{zhang2019online} that there is no proper rounding scheme which can simultaneously preserve the feasibility $\mathcal{X}^q_i(k) \in \mathcal{I}$ and the unbiasedness of gradient estimation $\mathbb{E}[f^t(\mathcal{X}^q_i(k))]=$ $F(\mathbf{x}_{i}^q(k) + \delta\mathbf{u}_{i}^q(k))$. Hence, the so-called responsive bandit algorithm is developed in \cite{zhang2019online}, which requires bandit feedback in the form of function values queried at points that violate the constraint, i.e., $\mathcal{X}^q_i(k) \notin \mathcal{I}$. We deal with such a sampling violation issue in the next section.
\end{remark}

\section{Bounding the Sampling Violation}
In this section, we extend the previous algorithm development by incorporating a bounded stochastic pipage rounding (\texttt{B-SPR}) technique. By designing a sequence of progressively relaxed bounds, we show that the probability of occurrence of violating samples vanishes as $T\to \infty$, and furthermore, the cumulative sampling violation is sublinear in $T$.

\subsection{Bounded Stochastic Pipage Rounding}
The \texttt{B-SPR} scheme, as outlined in Algorithm \ref{algo:pipage-rounding}, is~a~generalized variant of the stochastic pipage rounding method. Precisely, the standard pipage rounding scheme converts any $M$-dimensional continuous vector $\mathbf{x} \in [0,1]^M$ to~a discrete one $\mathbf{z} \in \{0,1\}^M$ (given that $\mathbf{1}_M^\top \mathbf{x}$ is an integer). In contrast, our \texttt{B-SPR} scheme, denoted as~$\mathcal{R}_{b_l}^{b_u}(\cdot)$, generalizes the upper and lower bounds to any fractional numbers $b_l$ and $b_u$ such that $0 \le b_l < b_u \le 1$. We show in the following theorem that the promising properties of the stochastic pipage rounding are preserved by the \texttt{B-SPR} procedure.

\begin{algorithm}
\textbf{Input:} Lower and upper bounds $0 \le b_l<b_u \le 1$; input vector $\mathbf{x} \in [b_l,b_u]^M$.
\caption{-- Bounded Stochastic Pipage Rounding }\label{algo:pipage-rounding}
\begin{algorithmic}[1]
\State Initialize $\mathbf{y}^0 = \mathbf{x}$ and $\tau = 0$.
\While{there are two entries of $\mathbf{y}^\tau$ in $(b_l, b_u)$}
\State Randomly select $y^\tau(m), y^\tau(n) \in (b_l, b_u)$;
\State Denote $\Delta^\tau(m) = \min\big\{y^\tau(m)-b_l, b_u - y^\tau(n)\big\}$ and $\Delta^\tau(n) = \min\big\{y^\tau(n)-b_l, b_u - y^\tau(m)\big\}$;
\State Perform the randomized swapping:
\vspace{-5pt}
\begin{align*}
&\hspace{-10pt}\left\{\begin{matrix}
         y^{\tau + 1}(m) =  y^{\tau}(m) - \Delta^\tau(m), \\
         y^{\tau + 1}(n) =  y^{\tau}(n) + \Delta^\tau(m).
        \end{matrix}\right. \; \text{ w. p.} \;\;\frac{\Delta^\tau(n)}{\Delta^\tau(m)+\Delta^\tau(n)};\\
&\hspace{-10pt}\left\{\begin{matrix}
         y^{\tau + 1}(m) =  y^{\tau}(m) + \Delta^\tau(n), \\
         y^{\tau + 1}(n) =  y^{\tau}(n) - \Delta^\tau(n).
        \end{matrix}\right. \; \text{ w. p.} \;\;\frac{\Delta^\tau(m)}{\Delta^\tau(m)+\Delta^\tau(n)};
\end{align*}
\vspace{-10pt}
\State Update $y^{\tau + 1} (s)= y^{\tau} (s),\;\forall s \neq m, n$;
\State Let $\tau \leftarrow \tau +1$ and continue.
\EndWhile
\end{algorithmic}
\textbf{Output: $\mathbf{y}^\tau$}.
\end{algorithm}

\begin{theorem}\label{thm:pipage-rounding}
    For any integer $M \in \mathbb{Z}_{\ge 1}$ and any input vector $\mathbf{x} \in [b_l,b_u]^M$, the \texttt{B-SPR} procedure $\mathcal{R}_{b_l}^{b_u}(\cdot)$ in Algorithm \ref{algo:pipage-rounding} has to terminate in finite time $\mathcal{T} \le M-1$ and the following statements hold for \mbox{$\mathbf{z} = \mathcal{R}_{b_l}^{b_u}(\mathbf{x}) \in [b_l,b_u]^M$}: i) $\mathbf{1}_M^\top \mathbf{z} = \mathbf{1}_M^\top \mathbf{x} $; ii) $\sum_{m=1}^M \mathbbm{1}\{z(m) \in (b_l,b_u)\} \le 1$; and iii) $\mathbb{E} [ F(\mathbf{z})] \ge F(\mathbf{x})$ where the expectation is taken with respect to the stochasticity in $\mathcal{R}_{b_l}^{b_u}(\cdot)$ and $F(\cdot)$ is the multi-linear extension of a monotone submodular function.
\end{theorem}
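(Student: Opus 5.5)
We will analyze one swap step of Algorithm~\ref{algo:pipage-rounding} and then induct over the iterations. The argument has three parts: a counting argument for termination, conservation of the coordinate sum at each swap, and convexity of $F$ along the swap direction.

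\textbf{Termination and statement ii).} Consider an iteration in which $y^\tau(m),y^\tau(n)\in(b_l,b_u)$ are selected. If the first branch is taken, the step size is $\Delta^\tau(m)=\min\{y^\tau(m)-b_l,\,b_u-y^\tau(n)\}$. Hence either $y^{\tau+1}(m)=b_l$ or $y^{\tau+1}(n)=b_u$. Symmetrically, in the second branch either $y^{\tau+1}(n)=b_l$ or $y^{\tau+1}(m)=b_u$.

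Both updated entries remain in $[b_l,b_u]$, since each step is bounded by the distance to the relevant endpoint. Therefore every iteration moves at least one fractional entry to $\{b_l,b_u\}$, and no entry at a bound ever becomes fractional again, because only fractional entries are selected. So the number of entries in $(b_l,b_u)$ strictly decreases at each step.

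The loop starts with at most $M$ fractional entries and stops as soon as at most one remains. It therefore runs at most $M-1$ times, which gives $\mathcal{T}\le M-1$. The stopping condition is exactly statement ii), and the invariance of $[b_l,b_u]^M$ gives $\mathbf{z}\in[b_l,b_u]^M$.

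\textbf{Statement i).} Each swap adds $\pm\Delta$ to one coordinate and subtracts the same amount from another. Thus $\mathbf{1}_M^\top\mathbf{y}^{\tau+1}=\mathbf{1}_M^\top\mathbf{y}^\tau$ holds deterministically, and i) follows by induction.

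\textbf{Statement iii).} This is the main point. Condition on $\mathbf{y}^\tau$ and on the chosen pair $(m,n)$, and write $\mathbf{e}=\mathbf{e}_m-\mathbf{e}_n$. The two outcomes are $\mathbf{y}^\tau-\Delta^\tau(m)\mathbf{e}$, with probability $\Delta^\tau(n)/(\Delta^\tau(m)+\Delta^\tau(n))$, and $\mathbf{y}^\tau+\Delta^\tau(n)\mathbf{e}$, with probability $\Delta^\tau(m)/(\Delta^\tau(m)+\Delta^\tau(n))$.

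\begin{enumerate}[(a)]
    \item \emph{Mean preservation.} The weighted displacement is $-\Delta^\tau(m)\Delta^\tau(n)+\Delta^\tau(n)\Delta^\tau(m)=0$, so $\mathbb{E}[\mathbf{y}^{\tau+1}\mid\mathbf{y}^\tau]=\mathbf{y}^\tau$.
    \item \emph{Convexity along $\mathbf{e}$.} Define $g(s)=F(\mathbf{y}^\tau+s\mathbf{e})$. Since $F$ is multilinear, $g$ is a polynomial of degree at most two in $s$, with leading coefficient $-\partial^2F/\partial x_m\partial x_n$. For the multilinear extension of a submodular function the mixed partials satisfy $\partial^2F/\partial x_m\partial x_n\le 0$ for $m\neq n$ (this is the standard fact used for stochastic pipage rounding \cite{calinescu2011maximizing}). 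Hence $g$ is convex.
    \item \emph{Jensen.} The random scalar $s\in\{-\Delta^\tau(m),\,\Delta^\tau(n)\}$ has mean zero by (a). Jensen's inequality then gives $\mathbb{E}[g(s)]\ge g(0)$, that is, $\mathbb{E}[F(\mathbf{y}^{\tau+1})\mid\mathbf{y}^\tau,m,n]\ge F(\mathbf{y}^\tau)$.
\end{enumerate}

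Averaging over the random choice of the pair shows that $F(\mathbf{y}^\tau)$ is a submartingale. Because the loop stops after at most $M-1$ steps, the stopping time is bounded, and the tower property gives $\mathbb{E}[F(\mathbf{z})]\ge F(\mathbf{x})$.

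Monotonicity of $F$ is not needed for this step; submodularity alone suffices. The only step that is more than bookkeeping is (b), which I would establish by writing $F$ explicitly as linear in each coordinate and computing $g''(s)=-2\,\partial^2F/\partial x_m\partial x_n\ge0$. The remaining care is to check that the new bounds $b_l,b_u$ enter only through the step sizes $\Delta$ and do not affect the mean-zero property in (a).
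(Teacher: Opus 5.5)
Your proposal is correct and follows the paper's proof essentially step for step. You use sum preservation at each swap for i), the strictly decreasing count of entries in $(b_l,b_u)$ to get termination within $M-1$ steps and statement ii), and convexity of $F$ along $\mathbf{e}_m-\mathbf{e}_n$ to obtain $\mathbb{E}[F(\mathbf{y}^{\tau+1})\mid\mathbf{y}^\tau]\ge F(\mathbf{y}^\tau)$ followed by the tower property, as the paper does; the only difference is that you make explicit the mean-zero displacement and the sign of the mixed partial, which the paper takes from the cited directional-convexity fact.
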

\begin{proof}
    See Appendix III-A.
\end{proof}

Theorem \ref{thm:pipage-rounding} establishes a few fundamental properties of the \texttt{B-SPR} procedure. First, statement i) verifies that the procedure terminates in finite time while preserving the sum of the input vector. Second, statement ii) ensures that at most one element of the output vector $\mathbf{z}$ takes a value other than $b_l$ and $b_u$. More importantly, statement iii) shows that, for any multi-linear extension function, the expected function value does not decrease after applying the \texttt{B-SPR} procedure. This property is crucial for the development of algorithms in the sequel. Intuitively, \texttt{B-SPR} mirrors standard stochastic pipage rounding: it repeatedly transfers mass between two fractional coordinates while preserving their sum. The difference is that each coordinate is rounded toward the box $[b_l,b_u]$ rather than $\{0,1\}$. The swap probabilities are calibrated so that the multi-linear value does not decrease in expectation. This box-valued generalization precisely enables the progressive-bounding mechanism~developed~next.

\subsection{Distributed Online Algorithm with Progressively Bounded Stochastic Pipage Rounding}\label{subsec:PB-SPR}
Equipped with the above \texttt{B-SPR} technique, we are now able to tackle the issue of sampling violations as noted in Remark \ref{remark:violating-sample}. The enhanced distributed online algorithm is outlined in Algorithm \ref{algo:vs}, in which the distributed Frank-Wolfe phase is integrated with a progressively bounded stochastic pipage rounding (\texttt{PB-SPR}) scheme while the online learning phase follows exactly the same steps as in Algorithm \ref{algo:bandit-feedback}. The key idea here is to drive the local variable $[\mathbf{x}_i^q (k)]_i$ to be as close as possible to the vertex of $\widetilde{\mathcal{M}}_i$. To do so, we let the upper bound $b_u$ of the \texttt{PB-SPR} procedure increase progressively as $b_u = \delta+\alpha k/K$, while the lower bound remains unchanged as $b_l = \delta$ (see Line 6). Due to the features of the \texttt{B-SPR} scheme in Theorem \ref{thm:pipage-rounding}, it holds that $\mathbf{1}_{N_i}^\top[\mathbf{x}_i^q(k+1)]_i \le \delta N_i + \alpha \kappa_i k/K$. That is, there are at least $N_i - \kappa_i$ elements of the vector $[\mathbf{x}_i^q (k+1)]_i$ whose values are $\delta$. Given that $\delta$ vanishes as the time horizon $T \to \infty$, it is expected that the probability of generating a violating sample also vanishes accordingly.

Before proceeding to the probability analysis, we should highlight two challenges that need to be addressed. First, due to the \texttt{PB-SPR} procedure, the elementwise monotonicity of the local variables, i.e., $[\mathbf{x}_i^q (k+1)]_i \succeq [\mathbf{x}_i^q (k)]_i$, no longer holds. As a result, the maximum consensus scheme employed in Algorithms \ref{algo:full-info-feedback} and \ref{algo:bandit-feedback} is not applicable in this case. Nevertheless, thanks to the sum-preserving property of \texttt{B-SPR} (see Theorem \ref{thm:pipage-rounding} statement i)), it holds that $\mathbf{1}_{N_i}^\top[\mathbf{x}_i^q (k+1)]_i \ge \mathbf{1}_{N_i}^\top[\mathbf{x}_i^q (k)]_i$. As such, we adopt~a sum-maximum consensus (see Line 7 in Algorithm \ref{algo:vs}), whereby each local variable $[\mathbf{x}_i^q(k+1)]_{i'}$ is updated by selecting, from its neighbors, the vector with the largest aggregate value. 

More importantly, for the same reason, the discrepancy $\|\bar{\mathbf{x}}^q(k)-\mathbf{x}_i^q(k)\|$ can no longer be bounded as directly as in the previous analysis. Fortunately, we are able to show that the difference between $\mathbf{x}_i^q(k+1)$ and $\mathbf{x}_i^q(k)$ is bounded in expectation by the term $\mathcal{O}(1/K)$. We next establish such a result in Lemma \ref{lemma:rounding-norm-bound}, and then use it to show the desired consensus property in Lemma \ref{lemma:consensus-vs}.

\begin{algorithm}
\textbf{Input:} Horizon length $T$; block size $L$; number of blocks $Q = T/L$; number of gradient estimation steps per block \mbox{$K \le L$}; step-sizes $\rho_k$; communication graph $\mathcal{G}$; constraint $\mathcal{M}_i$; and parameters $\delta, \alpha$.
\caption{-- Distributed online algorithm (bounded violating samples)}\label{algo:vs}
\begin{algorithmic}[1]
\State Each agent $i\in[I]$ initializes {experts} $\mathcal{E}_i(k), \, k \in [K]$  for maximizing linear cost over the local constraint $\widetilde{\mathcal{M}}_i:= \alpha \mathcal{M}_i + \delta \cdot \mathbf{1}_{N_i}$.
\For{$q=1,2,\cdots,Q$}

\hspace{-20pt}\# \texttt{Distributed Frank-Wolfe Phase}
\State Initialize $\mathbf{x}_i^q (1) = \delta \cdot\mathbf{1}_{N}$ and $\mathbf{d}_i^q(0) = \mathbf{0}$.
\For{$k = 1,2,\cdots,K$}
\State Let $\mathbf{v}_i^q(k) \in \widetilde{\mathcal{M}}_i$ be the output of $\mathcal{E}_i(k)$;
\State Update 
% \vspace{-10pt}
$$\hspace{-10pt}\big[{\mathbf{x}}_i^q(k+1)\big]_i = \mathcal{R}_\delta^{\delta+\alpha k/K} \Big(\big[\mathbf{x}_i^q (k)\big]_i + \big(\mathbf{v}_i^q(k) - \delta\cdot\mathbf{1}_{N_i}\big)/K\Big);$$
\State Receive $\big[{\mathbf{x}}_j^q(k+1)\big]_{i'}$ from neighbors $j \in \mathcal{N}_i$ and update
$$\big[{\mathbf{x}}_i^q(k+1)\big]_{i'} = \big[{\mathbf{x}}_{j^\star}^q(k+1)\big]_{i'}, \quad \forall i' \neq i,$$ where $j^\star \in \arg\max_{j \in \mathcal{N}_i \cup \{i\}}\mathbf{1}_{N_{i'}}^\top\big[{\mathbf{x}}_j^q(k+1)\big]_{i'}.$
\EndFor

\hspace{-20pt}\# \texttt{Online Learning Phase}
\State Execute the steps as Lines 9--17 in Algorithm \ref{algo:bandit-feedback}.
\EndFor
\end{algorithmic}
\end{algorithm}

\begin{lemma}\label{lemma:rounding-norm-bound}
    For any integer $M \in \mathbb{Z}_{\ge 1}$ and $k \in [K]$, let  $\widetilde{\mathbf{z}}(k+1) = \mathbf{z}(k) + \mathbf{v}(k)/K$ where $\mathbf{v}(k) \in [0,\alpha]^M$ and $\mathbf{z}(k) \in \mathbb{R}^M$ be the output of $\mathcal{R}_{\delta}^{\delta+\alpha (k-1)/K }(\cdot)$, then it holds 
  \begin{align}
    \mathbb{E}\Big[\|\mathbf{z}(k+1) - \mathbf{z}(k)\|_1\Big] \le 2 \alpha M^2/K,
  \end{align}
  where $\mathbf{z}(k+1) = \mathcal{R}_{\delta}^{\delta+\alpha k/K }\big(\widetilde{\mathbf{z}}(k+1)\big)$ and the expectation is taken with respect to the randomness in $\mathcal{R}_{\delta}^{\delta+\alpha k/K }(\cdot)$.
\end{lemma}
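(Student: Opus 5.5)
The plan is to split the displacement with the triangle inequality:
\begin{align*}
\|\mathbf{z}(k+1)-\mathbf{z}(k)\|_1 \le \|\widetilde{\mathbf{z}}(k+1)-\mathbf{z}(k)\|_1 + \|\mathbf{z}(k+1)-\widetilde{\mathbf{z}}(k+1)\|_1 .
\end{align*}
The first term is deterministic. Since $\mathbf{v}(k)\in[0,\alpha]^M$, it is at most $\alpha M/K$. All the work is in the second term, which is the total movement produced by one call of $\mathcal{R}_{\delta}^{\delta+\alpha k/K}(\cdot)$. The goal is to bound it in expectation by roughly $\alpha M(M-1)/K$, so that the sum is at most $2\alpha M^2/K$.

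\textbf{Step 1: shape of the input.} By Theorem~\ref{thm:pipage-rounding} ii), all but at most one entry of $\mathbf{z}(k)$ equal either $\delta$ or $\delta+\alpha(k-1)/K$. Adding $\mathbf{v}(k)/K$ has two effects:
\begin{itemize}
\item entries at $\delta$ move into $[\delta,\delta+\alpha/K]$;
\item entries at the old upper bound move into $[\delta+\alpha(k-1)/K,\,\delta+\alpha k/K]$.
\end{itemize}
So every entry of $\widetilde{\mathbf{z}}(k+1)$ lies in the new box $[b_l,b_u]=[\delta,\delta+\alpha k/K]$, and the input is a valid argument for \texttt{B-SPR}. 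Define the distance of a coordinate to the endpoints, $d(y):=\min\{y-b_l,\,b_u-y\}$. Then every entry except possibly one special entry satisfies $d\le \alpha/K$.

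\textbf{Step 2: expected movement of one swap.} At iteration $\tau$ only coordinates $m,n$ change. With probability $\Delta^\tau(n)/(\Delta^\tau(m)+\Delta^\tau(n))$ the $\ell_1$ movement is $2\Delta^\tau(m)$; otherwise it is $2\Delta^\tau(n)$. The conditional expected movement is therefore
\begin{align*}
\frac{4\Delta^\tau(m)\Delta^\tau(n)}{\Delta^\tau(m)+\Delta^\tau(n)}\le 4\min\{\Delta^\tau(m),\Delta^\tau(n)\}.
\end{align*}
Next I will check, case by case on which endpoint $y(m)$ and $y(n)$ are near, that $\min\{\Delta^\tau(m),\Delta^\tau(n)\}$ is at most the larger of the non-special distances $d(y^\tau(m))$, $d(y^\tau(n))$. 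When both coordinates are near the same endpoint, the relevant $\Delta$ is directly one of those distances. When they are near opposite endpoints, one of $\Delta^\tau(m),\Delta^\tau(n)$ is large, but the other equals a small distance. The minimum is what makes this work, so the special, possibly far-from-endpoint, coordinate cannot inflate the bound as long as its partner is near an endpoint.

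\textbf{Step 3: control distances along the run.} I will prove by induction on $\tau$ a bound of the form $d(y^\tau(s))\le c\,\alpha\tau/K$ (or better) for every non-special coordinate $s$. The induction rests on two facts:
\begin{itemize}
\item each swap sends one of the two coordinates to an endpoint, so its distance becomes $0$;
\item the partner's distance changes by at most the transferred amount, because $d$ is $1$-Lipschitz.
\end{itemize}
Combining this with the termination bound $\mathcal{T}\le M-1$ from Theorem~\ref{thm:pipage-rounding} and summing the per-swap bound of Step 2 over at most $M-1$ swaps, the tower property gives $\mathbb{E}\|\mathbf{z}(k+1)-\widetilde{\mathbf{z}}(k+1)\|_1\lesssim \alpha M^2/K$. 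After adjusting constants, this plus Step 0 yields the claimed $2\alpha M^2/K$.

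\textbf{Main obstacle.} The hard part is the special fractional coordinate inherited from $\mathbf{z}(k)$, and the possibility that a sequence of swaps accumulates mass onto one coordinate so that its distance grows beyond $\mathcal{O}(\alpha/K)$. I would first try a potential argument: show that the expected sum of distances $\sum_s d(y^\tau(s))$ is nonincreasing, since each swap zeroes one distance and increases the other by at most $\min\{\Delta^\tau(m),\Delta^\tau(n)\}$. Its initial value is at most $\alpha M/K$ plus the special term. If the special coordinate still causes trouble, I would fall back to a crude per-swap bound of $2\alpha M/K$ on the expected movement, which the $M^2$ in the statement seems designed to absorb.
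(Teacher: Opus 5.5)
Your outline matches the paper's: the same triangle-inequality split, the same per-swap expected movement $4\Delta^\tau(m)\Delta^\tau(n)/(\Delta^\tau(m)+\Delta^\tau(n))$, a bound of order $\alpha(\tau+1)/K$ on the distances of the non-special coordinates, and summation over at most $M-1$ swaps. Step 2 is fine. In fact $\min\{\Delta^\tau(m),\Delta^\tau(n)\}=\min\{d(y^\tau(m)),d(y^\tau(n))\}$ exactly, which handles the special coordinate more cleanly than the paper's five cases.

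\textbf{The gap is Step 3, and it is the heart of the lemma.} The per-coordinate induction you describe does not close.
\begin{itemize}
\item \emph{Same endpoint.} If $m,n$ are non-special and near the same endpoint with distances $a,b$, the transferred amount is $a$ or $b$, and the survivor ends at distance $a+b$ from that endpoint. From $d\le c\alpha\tau/K$ for both you only get $2c\alpha\tau/K$, so the bound doubles per swap instead of growing by $\alpha/K$.
\item \emph{Opposite endpoints.} One of the two possible transfers is about $\alpha k/K$, so the $1$-Lipschitz argument gives nothing. You have to compute directly that the survivor's distance is $|a-b|$.
\end{itemize}

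What is missing is a sum-type invariant, for instance: for every $B$, the $B$ largest non-special distances at step $\tau$ sum to at most $\alpha(\tau+B)/K$. A same-side merge replaces two distances by at most their sum, so the bound for $B$ at step $\tau+1$ follows from the bound for $B+1$ at step $\tau$. Every other swap type only shrinks the multiset of distances. This is exactly what the paper's intermediate lemma (statements ii) and iii)) encodes for the sets near each endpoint, for $k>M$; the case $k\le M$ is handled there by the trivial bound $\alpha Mk/K$.

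With $B=1$ the invariant gives $d\le\alpha(\tau+1)/K$, hence a per-swap bound $4\alpha(\tau+1)/K$ and $\sum_{\tau=0}^{M-2}4\alpha(\tau+1)/K=2\alpha M(M-1)/K$. Adding $\alpha M/K$ gives at most $2\alpha M^2/K$. Because the constant $2$ is part of the statement, you need exactly this $\tau$-dependent bound; ``adjusting constants'' is not available.

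\textbf{Your fallback does not rescue the constant either.}
\begin{itemize}
\item As stated, your potential includes the special coordinate. Its distance can be of order $\alpha k/K$, which makes the potential useless when $k$ is comparable to $K$. It must be restricted to non-special coordinates, with the special label handed to whichever of the two swapped coordinates stays interior.
\item Your justification is also wrong. In a same-side swap the survivor gains the full distance of the zeroed coordinate, which may be the larger of the two, not at most $\min\{\Delta^\tau(m),\Delta^\tau(n)\}$. What is preserved is the sum.
\item Done correctly, the potential is non-increasing and gives $d\le\alpha M/K$ uniformly. That yields only $4\alpha M/K$ per swap and a total of order $4\alpha M^2/K$.
\item The ``crude $2\alpha M/K$ per swap'' bound is false for $M\ge 3$. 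Once one coordinate has absorbed the excess $\alpha(M-1)/K$ of the others, swapping it with a special coordinate near the middle of $[\delta,\delta+\alpha k/K]$ moves about $4\alpha(M-1)/K$ in expectation when $k\gg M$.
\end{itemize}
Recovering the constant $2$ from the potential would need an amortized charging argument, which the proposal does not contain.
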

\begin{proof}
    See Appendix III-B.
\end{proof}

\begin{lemma}\label{lemma:consensus-vs}
     Let $\bar{\mathbf{x}}^q(k):= \big[[\mathbf{x}_1^q(k)]_1^\top,  \cdots\!, [\mathbf{x}_I^q(k)]_I^\top\big]^\top$~where $\mathbf{x}_i^q(k)$ is generated by Algorithm \ref{algo:vs}, the following statements hold for $\forall q \in [Q]$ and $k \in [K+1]$:
 \begin{enumerate}[i)]
     \item $\bar{\mathbf{x}}^q(k) \in \widetilde{\mathcal{M}}$ and $\mathbf{x}_i^q(k) \in \widetilde{\mathcal{M}}$, $\forall i\in[I]$;
     \item $\mathbb{E}\big[\|\bar{\mathbf{x}}^q(k) - \mathbf{x}_i^q(k)\|\big] \hspace{-1pt}\le\hspace{-1pt} 2\alpha N^2 d(\mathcal{G})/K$, $\forall i \in[I]$.
 \end{enumerate}
\end{lemma}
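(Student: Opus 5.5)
We prove both parts by induction on $k$, tracking for each block $i'$ the entire history of its agent's local iterates. Fix $q$ and drop the superscript. The structural fact we rely on is that, under the sum-maximum consensus in Line~7 of Algorithm~\ref{algo:vs}, every block $[\mathbf{x}_j(k)]_{i'}$ held by agent $j$ equals a past iterate $[\mathbf{x}_{i'}(k')]_{i'}$ of the owner $i'$, with $k' \in \{k-d(\mathcal{G}),\dots,k\}$. We establish this by induction on $k$, using three observations. First, the owner's block sums are nondecreasing in $k$: by Theorem~\ref{thm:pipage-rounding} i), $\mathbf{1}^\top[\mathbf{x}_{i'}(k+1)]_{i'} = \mathbf{1}^\top[\mathbf{x}_{i'}(k)]_{i'} + \mathbf{1}^\top(\mathbf{v}_{i'}(k)-\delta\mathbf{1})/K$, and $\mathbf{v}_{i'}(k)-\delta\mathbf{1}\succeq 0$. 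Second, every agent at hop distance $h$ from $i'$ has received information originating from $i'$ at time $k-h$ or later. Third, when the maximal-sum copy is selected, the result is always some owner iterate, and its sum is at least that of the owner's iterate from time $k-h$. If there are ties in the sums, we have to argue that the selected copy is still an owner iterate from a recent time. If several owner iterates share the same sum, the increments $\mathbf{v}-\delta\mathbf{1}$ between them are zero; the copies could nevertheless differ because of the rounding, and we handle this by interpreting ``lag'' as the latest index carrying that sum. This tie-handling is the first delicate point.

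For part i), it suffices to show that each owner iterate $[\mathbf{x}_{i'}(k)]_{i'}$ lies in $\widetilde{\mathcal{M}}_{i'}$; part i) then follows because $\widetilde{\mathcal{M}} = \prod_{i'}\widetilde{\mathcal{M}}_{i'}$ and, by the structural fact above, every $\mathbf{x}_i(k)$ is assembled from owner iterates. Since the B-SPR output lies in $[\delta, \delta+\alpha(k-1)/K]^{N_{i'}}$, each coordinate is at least $\delta$ and at most $\delta+\alpha$. The sum is controlled by
$$\mathbf{1}^\top[\mathbf{x}_{i'}(k)]_{i'} = \delta N_{i'} + \sum_{s<k}\mathbf{1}^\top(\mathbf{v}_{i'}(s)-\delta\mathbf{1})/K \le \delta N_{i'} + \alpha\kappa_{i'}(k-1)/K,$$
where we used $\mathbf{v}_{i'}(s)\in\widetilde{\mathcal{M}}_{i'}$. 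Setting $\mathbf{y} = ([\mathbf{x}]_{i'}-\delta\mathbf{1})/\alpha$, we get $\mathbf{y}\in[0,1]^{N_{i'}}$ and $\mathbf{1}^\top\mathbf{y}\le\kappa_{i'}$, so $[\mathbf{x}]_{i'}\in\widetilde{\mathcal{M}}_{i'}$.

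For part ii), write $\|\bar{\mathbf{x}}(k)-\mathbf{x}_i(k)\| \le \sum_{i'\neq i}\|[\mathbf{x}_{i'}(k)]_{i'} - [\mathbf{x}_{i'}(k')]_{i'}\|_1$, where $k-k'\le d(\mathcal{G})$. Telescoping each term gives
$$\|[\mathbf{x}_{i'}(k)]_{i'} - [\mathbf{x}_{i'}(k')]_{i'}\|_1 \le \sum_{s=k'}^{k-1}\|[\mathbf{x}_{i'}(s+1)]_{i'}-[\mathbf{x}_{i'}(s)]_{i'}\|_1.$$
We then apply Lemma~\ref{lemma:rounding-norm-bound} with $M=N_{i'}$ and $\mathbf{z}(s)=[\mathbf{x}_{i'}(s)]_{i'}$, taking as direction $\mathbf{v}_{i'}(s)-\delta\mathbf{1}\in[0,\alpha]^{N_{i'}}$. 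This holds because $\mathbf{v}\in\alpha\mathcal{M}_{i'}+\delta\mathbf{1}$, and it matches the lemma's input requirement since $\mathbf{z}(s)$ is the output of $\mathcal{R}_\delta^{\delta+\alpha(s-1)/K}$; for $s=1$, $\mathbf{z}(1)=\delta\mathbf{1}$, which is trivially such an output. Conditioning on the past and using the tower property, each summand has expectation at most $2\alpha N_{i'}^2/K$. Summing over at most $d(\mathcal{G})$ steps and over $i'$, and using $\sum_{i'}N_{i'}^2\le N^2$, yields $2\alpha N^2 d(\mathcal{G})/K$.

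The main obstacle is that the lag $k-k'$ is itself random, since it depends on the rounding outcomes through the sum comparisons. We therefore avoid putting expectations on a random-length sum. Instead we bound deterministically by the maximum over $k'\in[k-d(\mathcal{G}),k]$ and then dominate that maximum by the full telescoping sum $\sum_{s=k-d(\mathcal{G})}^{k-1}\|\mathbf{z}(s+1)-\mathbf{z}(s)\|_1$. Every $k'$ in the window is reachable from $k$ by a subset of these increments, and the sum over the fixed window has expectation bounded term by term, as above.
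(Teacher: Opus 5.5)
Your route is the paper's. Part i) uses the box $[\delta,\delta+\alpha(k-1)/K]$ of the \texttt{B-SPR} output, sum preservation, and the fact that every non-owned block is a stale copy of its owner's iterate. Part ii) telescopes Lemma~\ref{lemma:rounding-norm-bound} over at most $d(\mathcal{G})$ steps and uses $\sum_{i'}N_{i'}^2\le N^2$. The paper simply asserts $[\mathbf{x}_i^q(k)]_{i'}=[\mathbf{x}_{i'}^q(k-t)]_{i'}$ with $t\le d(\mathcal{G})$ and then cites ``the triangle inequality.'' You are more careful in two places. You derive the stale-copy structure from the monotone block sums. You also avoid taking expectations of a random-length telescoping sum by dominating it with the fixed window $\sum_{s=k-d(\mathcal{G})}^{k-1}\|\mathbf{z}(s+1)-\mathbf{z}(s)\|_1$, which is a real gain in rigor.

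The step that fails is your tie remedy. Suppose agent $j$ holds $[\mathbf{x}_{i'}(k_1)]_{i'}$ and the owner's sums are constant on $[k_1,k_2]$ with $k_2\ge k-d(\mathcal{G})$. Relabeling the lag as $k-k_2$ does not change the quantity you must bound, namely $\|[\mathbf{x}_{i'}(k)]_{i'}-[\mathbf{x}_{i'}(k_1)]_{i'}\|_1$, and telescoping over $[k_2,k]$ does not control it.

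This is not cosmetic. Ties occur exactly when $\mathbf{v}_{i'}(s)=\delta\mathbf{1}$, and \texttt{B-SPR} with the enlarged upper bound keeps moving mass even then. For example, let $\kappa_{i'}$ be even, and take $\mathbf{1}^\top\mathbf{v}_{i'}(s)=\delta N_{i'}+\alpha\kappa_{i'}$ for $s\le K/2$ and $\mathbf{v}_{i'}(s)=\delta\mathbf{1}$ afterwards.
\begin{itemize}
\item The owner's iterate at $K/2+1$ has exactly $\kappa_{i'}$ entries equal to $\delta+\alpha/2$, with the rest equal to $\delta$.
\item The owner's iterate at $K+1$ has exactly $\kappa_{i'}/2$ entries equal to $\delta+\alpha$, with the rest equal to $\delta$.
\item These two iterates are therefore at $\ell_1$ distance at least $\alpha\kappa_{i'}/2$, independent of $K$.
\end{itemize}
A neighbor that resolves every tie in the $\arg\max$ of Line~7 in favor of its own copy keeps the former iterate, and statement ii) then fails for large $K$.

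You therefore need an additional ingredient. One option is a tie-breaking rule that favors the most recent copy, e.g., carrying the time index, or preferring on ties a neighbor closer to the owner. The other is assuming $\mathbf{v}_{i'}(s)\neq\delta\mathbf{1}$, so that block sums strictly increase. With either, your argument goes through as written. The paper's proof relies on the same unstated assumption.
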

\begin{proof}
    See Appendix III-C.
\end{proof}

With the help of the consensus result in Lemma~\ref{lemma:consensus-vs}, we~are now ready to show the cumulative regret bound of Algorithm \ref{algo:vs} in the following theorem.

\begin{theorem}\label{thm:regret-vs}
    Under the same conditions in Theorem \ref{thm:regret-bandit}, the expected cumulative regret of Algorithm~\ref{algo:vs} satisfies
    \begin{equation}\label{ineq:regret-bound-vs}
     \begin{aligned}
        \mathbb{E}[\mathcal{R}_T] &\le\Big((1-1/e)L_1C_\mathcal{M}C_\delta + (2-1/e)L_1 C_\delta  + 3/4\cdot C_2/C_\delta + C_0 I + 3/4\cdot {D}^2/C_\delta\Big)\cdot T^{8/9}\\
          & \quad +3/4\cdot C_3C_\delta\cdot  T^{2/3} + 2L_1 N^3d(\mathcal{G}) \sum_{i=1}^I {D}_i/C_\delta \cdot T^{4/9} +L_2{D}^2 /2 \cdot T^{1/3}.
    \end{aligned}   
    \end{equation}
\end{theorem}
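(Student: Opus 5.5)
The plan is to reuse the proof of Theorem~\ref{thm:regret-bandit} essentially line by line. Only two ingredients change: how the Frank-Wolfe iterate evolves, and the consensus bound. Those two ingredients are supplied by Theorem~\ref{thm:pipage-rounding} and Lemmas~\ref{lemma:rounding-norm-bound}--\ref{lemma:consensus-vs}.

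\textbf{Online learning phase.} This phase of Algorithm~\ref{algo:vs} is identical to that of Algorithm~\ref{algo:bandit-feedback}. Two consequences follow.
\begin{itemize}
\item The smoothing error from $\widehat{F}_\delta$ versus $F$ is unchanged; it produces the $(2-1/e)L_1C_\delta$ term.
\item The comparison of the optimum over $\widetilde{\mathcal{M}}$ with that over $\mathcal{M}$ is also unchanged; it produces the $(1-1/e)L_1C_\mathcal{M}C_\delta$ term.
\end{itemize}
Likewise, the variance-reduction recursion for $\mathbf{d}_i^q(k)$ with $\rho_k = 2/(k+3)^{2/3}$ and the one-point estimator of norm $\le N_i f^{\max}/\delta$ goes through verbatim. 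This yields the $C_2/C_\delta$ and $C_3 C_\delta$ terms. The expert regret \eqref{ineq:expert-regret} summed over agents still gives $C_0 I$.

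\textbf{Per-step ascent inequality.} This is the step where the argument must be redone. In the bandit proof, for fixed $q,k$, one uses smoothness ($L_2$) of $\bar F$ along $\bar{\mathbf{x}}^q(k+1)=\bar{\mathbf{x}}^q(k)+(\bar{\mathbf{v}}^q(k)-\delta\mathbf{1})/K$. Now the iterate is instead $\bar{\mathbf{x}}^q(k+1)=\mathcal{R}(\widetilde{\mathbf{x}}^q(k+1))$, blockwise with independent rounding per agent. The rounding acts blockwise on the agents' own blocks, and the multilinear extension is multilinear, so statement iii) of Theorem~\ref{thm:pipage-rounding} can be applied conditionally block by block. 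The key step is:
\begin{itemize}
\item Condition on all randomness up to the pre-rounding point $\widetilde{\mathbf{x}}^q(k+1)$.
\item Apply statement iii) of Theorem~\ref{thm:pipage-rounding}, with the remaining blocks fixed, to each summand $F^t$ of the block-averaged $\bar F$. This gives $\mathbb{E}[\bar F(\bar{\mathbf{x}}^q(k+1))]\ge \bar F(\widetilde{\mathbf{x}}^q(k+1))$.
\end{itemize}
Hence the usual inequality $\bar F(\widetilde{\mathbf{x}}^{q}(k+1))\ge \bar F(\bar{\mathbf{x}}^q(k)) + \langle \nabla \bar F(\bar{\mathbf{x}}^q(k)), \bar{\mathbf{v}}-\delta\mathbf{1}\rangle/K - L_2D^2/(2K^2)$ transfers in expectation to the rounded iterate. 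Statement i) of Theorem~\ref{thm:pipage-rounding} together with Lemma~\ref{lemma:consensus-vs}(i) keeps all points in $\widetilde{\mathcal{M}}$, so the sphere samples remain in $\mathcal{M}$. Telescoping over $k$ then reproduces the $(1-1/e)$ recursion and the $L_2D^2/2\cdot T^{1/3}$ term exactly as before.

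\textbf{Consensus error.} The remaining difference is that the gradient estimates are evaluated at $\mathbf{x}_i^q(k)$ rather than $\bar{\mathbf{x}}^q(k)$. In the bandit proof this gave a term $L_1\cdot\|\bar{\mathbf{x}}^q(k)-\mathbf{x}_i^q(k)\|\cdot D_i$ times the factor $N/\delta$ per step. Summed over $k\le K$ and $q\le Q$, it produced $L_1\kappa d(\mathcal{G})N\sum_i D_i/C_\delta\cdot T^{4/9}$ from the bound $\alpha\kappa d(\mathcal{G})/K$ with $\alpha\le1$.

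Here I replace that deterministic bound by the expected bound $2\alpha N^2 d(\mathcal{G})/K$ of Lemma~\ref{lemma:consensus-vs}(ii). The error enters linearly, through Lipschitzness of $F$ and $\|\nabla F\|$, so taking total expectation suffices. The substitution $\kappa\to 2N^2$ turns $L_1\kappa d(\mathcal{G})N$ into $2L_1N^3 d(\mathcal{G})$, giving exactly the stated $T^{4/9}$ coefficient.

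\textbf{Main obstacle.} I expect the main difficulty to be the measurability and conditioning in the per-step ascent inequality. The rounding randomness affects subsequent $\mathbf{x}_i^q(k)$ and thus the expert inputs $\mathbf{d}_i^q(k)$. One must check three things:
\begin{itemize}
\item the expert regret bound still holds for the adaptively generated linear rewards;
\item the variance-reduction lemma only requires conditional unbiasedness given the past;
\item the random permutation keeps $\nabla F^{t_i^{q,k}}$ conditionally unbiased for $\nabla\bar F$ given $\mathcal{F}_k$, which now includes the rounding outcomes.
\end{itemize}
The permutation is drawn independently of the rounding, so I would handle this by defining the filtration to include all rounding coins up to step $k$ and verifying each lemma's conditional form.
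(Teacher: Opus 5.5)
Your outline is the paper's own proof (Appendix III-D). You rerun the proof of Theorem~\ref{thm:regret-bandit}, get the ascent step from Theorem~\ref{thm:pipage-rounding}\,iii) at the pre-rounding point $\widetilde{\mathbf{x}}^q(k+1)$, and replace the consensus bound by Lemma~\ref{lemma:consensus-vs}\,ii). That bound enters linearly and turns $L_1\kappa d(\mathcal{G})N$ into $2L_1N^3d(\mathcal{G})$. Both modifications are sound. Strictly, the summands of $\bar F_i^{q,k}$ are the smoothed $\widehat F^t$, not $F^t$. But iii) still applies, because its proof only uses convexity along $\mathbf{e}_n-\mathbf{e}_m$, and every translate $F^t(\cdot+\delta\mathbf{v})$ keeps that property: the \texttt{B-SPR} iterates have entries in $[\delta,\delta+\alpha]$ and $\alpha+2\delta\le 1$.

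\textbf{The gap.} It is the step you say goes through verbatim. Lemma~\ref{lemma:regret-2-stochastic-gradient-error-bound} rests on the variance-reduction lemma of \cite{zhang2019online}. That lemma needs the targets $\mathbf{a}_k=\nabla\bar F_i^{q,k-1}(\bar{\mathbf{x}}^q(k))$ to satisfy $\|\mathbf{a}_k-\mathbf{a}_{k-1}\|\le G/(k+3)$ pathwise. For Algorithm~\ref{algo:bandit-feedback} this follows from $\|\bar{\mathbf{x}}^q(k)-\bar{\mathbf{x}}^q(k-1)\|=\|\bar{\mathbf{v}}^q(k-1)-\delta\mathbf{1}_N\|/K\le D/K$; this is where the $DL_2$ term in $C_3$ comes from.

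Under \texttt{PB-SPR} this displacement bound fails. Take $\kappa_i=1$ and $2\le k\le K$. Let $\mathcal{E}_i(1),\dots,\mathcal{E}_i(k-1)$ all output $\delta\mathbf{1}_{N_i}+\alpha\mathbf{e}_c$, so that $[\mathbf{x}_i^q(k)]_i=\delta\mathbf{1}_{N_i}+\alpha\frac{k-1}{K}\mathbf{e}_c$ with no rounding ever triggered. Let $\mathcal{E}_i(k)$ output $\delta\mathbf{1}_{N_i}+\alpha\mathbf{e}_j$ with $j\neq c$. The rounding then sees the fractional entries $\delta+\alpha(k-1)/K$ and $\delta+\alpha/K$, with $\Delta^0(c)=\alpha(k-1)/K$ and $\Delta^0(j)=\alpha/K$. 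With probability $1/k$ it moves all the mass from $c$ to $j$, so the iterate jumps by about $\sqrt2\,\alpha k/K$. That is of order one when $k\sim K$.

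Lemma~\ref{lemma:rounding-norm-bound} only controls the expected $\ell_1$ displacement. That suffices for the linear consensus term but not here. In this example $\mathbb{E}\|\bar{\mathbf{x}}^q(k+1)-\bar{\mathbf{x}}^q(k)\|^2\approx 2\alpha^2k/K^2$, not $O(1/K^2)$. Feeding this into the Young-inequality recursion behind the lemma only bounds the error by order $k^{7/3}/K^2$, which grows like $K^{1/3}$ near $k=K$ instead of decaying like $(k+3)^{-2/3}$.

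So the $C_2/C_\delta$ and $C_3C_\delta$ contributions are not established by reuse. The paper's proof makes the same unexamined claim (``all arguments \ldots\ naturally follow by taking additional expectation''), so this is a hole in the shared route rather than a departure from it. Still, it is the real obstacle, not the filtration issues you list. Those are harmless: the whole block-$q$ trajectory, roundings included, is generated before that block's permutation and sphere samples are drawn, and RFTL's regret bound holds for adaptively chosen linear rewards.

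Closing the gap needs a new variance-reduction argument. The natural handle is that \texttt{B-SPR} is mean-preserving: each swap changes $y(m)$ by $-\Delta(m)$ or $+\Delta(n)$ with probabilities giving conditional mean zero. The rounding-induced changes of $\mathbf{a}_k$ can then be split off as nearly zero-mean martingale increments. Their contributions add in mean square, without the $1/\rho_k$ factor that Young's inequality charges to a deterministic drift. A rough count suggests this recovers the $O(T^{8/9})$ rate, but the explicit constants in the stated bound would have to be rederived.
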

\begin{proof}
    See Appendix III-D.
\end{proof}

It has been shown by Theorem \ref{thm:regret-vs} that Algorithm \ref{algo:vs} retains the same $\mathcal{O}(T^{8/9})$ regret rate as Algorithm \ref{algo:bandit-feedback} in Theorem \ref{thm:regret-bandit}. Compared to \eqref{ineq:regret-bound-bandit}, the only difference is the second last term of \eqref{ineq:regret-bound-vs}, which reflects the consensus result of \texttt{PB-SPR} procedure (see Lemma \ref{lemma:consensus-vs} statement ii)). Since this term grows as $\mathcal{O}(T^{4/9})$, it is non-dominant and the incorporation of \texttt{PB-SPR} does not compromise the regret bound. We next show that, thanks to \texttt{PB-SPR}, the probability of sampling violation vanishes as $T \to \infty$.

\begin{theorem}\label{thm:vs-complexity}
    Under the same conditions in Theorem \ref{thm:regret-bandit}, the probability of a violating sample for each agent $i\in[I]$ at each step of Algorithm \ref{algo:vs} is bounded by 
    \begin{align}\label{ineq:vs-prob-bound}
        \mathbb{P}\Big(\mathcal{X}^q_i(k) \notin \mathcal{I}\Big) \le 2(N {-} \kappa)C_\delta \cdot T^{-1/9},\;\forall k\in[K], q\in[Q].
    \end{align}
    Furthermore, the cumulative sampling violation satisfies
    \begin{align}\label{ineq:vs-regret-bound}
        \sum_{q=1}^Q \sum_{k=1}^K  \mathbb{E} \Big[\mathbbm{1}\big\{\mathcal{X}^q_i(k) \notin \mathcal{I}\big\}\Big] \le 2(N - \kappa)C_\delta \cdot T^{7/9}.
    \end{align}
\end{theorem}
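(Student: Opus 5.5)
Our plan is to bound the probability that the random rounding of $\mathbf{x}_i^q(k)+\delta\mathbf{u}_i^q(k)$ selects too many actions in some block, using the structure that \texttt{PB-SPR} imposes on $\mathbf{x}_i^q(k)$.

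\textbf{Structure of the iterates.} By Theorem~\ref{thm:pipage-rounding} ii), every block $[\mathbf{x}_{j}^q(k)]_{j}$ produced in Line~6 of Algorithm~\ref{algo:vs} has all entries equal to $\delta$ or $\delta+\alpha(k-1)/K$, except at most one fractional entry. The same holds for $[\mathbf{x}_i^q(k)]_{j}$ with $j\neq i$, since the sum-maximum consensus copies a whole block from some agent. Combining the sum bound $\mathbf{1}^\top[\mathbf{x}]_j\le \delta N_j+\alpha\kappa_j$ from Lemma~\ref{lemma:consensus-vs} i) with the fact that the upper value $\delta+\alpha(k-1)/K\ge\delta$ is attained by non-$\delta$ entries, we will argue that in each block $j$ at most $\kappa_j$ coordinates differ from $\delta$. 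Equivalently, at least $N_j-\kappa_j$ coordinates equal exactly $\delta$. The perturbation $\delta\mathbf{u}$ with $\|\mathbf{u}\|=1$ changes each coordinate by at most $\delta$, so these coordinates of the sampled point lie in $[0,2\delta]$.

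\textbf{Violation event.} Random rounding includes coordinate $n$ independently with probability equal to its value. If the at most $\kappa_j$ ``large'' coordinates of block $j$ are all selected, a violation $|\mathcal{X}\cap\mathcal{A}_j|>\kappa_j$ still requires at least one of the small coordinates to be selected. We therefore use the containment
\[
\{\mathcal{X}^q_i(k)\notin\mathcal{I}\}\subseteq\bigcup_{j}\,\bigcup_{n\in\text{small}(j)}\{n\in\mathcal{X}^q_i(k)\},
\]
which holds because the large coordinates alone number at most $\kappa_j$. A union bound then gives
\[
\mathbb{P}\bigl(\mathcal{X}^q_i(k)\notin\mathcal{I}\bigr)\le\sum_j (N_j-\kappa_j)\cdot 2\delta=2(N-\kappa)\delta .
\]
Substituting $\delta=C_\delta T^{-1/9}$ yields \eqref{ineq:vs-prob-bound}. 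This bound holds conditionally on the iterate and on $\mathbf{u}$, so it also holds after taking the expectation over the randomness of \texttt{PB-SPR}.

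\textbf{Cumulative bound.} Summing over $k\in[K]$ and $q\in[Q]$ with $QK=(T/L)K=T^{1-7/9+2/3}=T^{8/9}$ gives
\[
\sum_{q=1}^{Q}\sum_{k=1}^{K}\mathbb{E}\bigl[\mathbbm{1}\{\mathcal{X}^q_i(k)\notin\mathcal{I}\}\bigr]\le 2(N-\kappa)C_\delta\,T^{8/9}\,T^{-1/9}=2(N-\kappa)C_\delta\,T^{7/9},
\]
which is \eqref{ineq:vs-regret-bound}.

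\textbf{Main obstacle.} The delicate step is the counting claim that at most $\kappa_j$ entries per block exceed $\delta$. The difficulty arises when the upper value $\delta+\alpha(k-1)/K$ is small (small $k$): then the sum constraint alone allows many entries at the upper bound. For example, at $k=2$ the value is $\delta+\alpha/K$, and up to $N_j$ entries could sit there. I would first check whether, during early iterations, the mass $\mathbf{1}^\top([\mathbf{x}]_j-\delta\mathbf{1})\le\alpha\kappa_j(k-1)/K$ together with the upper value $\alpha(k-1)/K$ forces at most $\kappa_j$ (plus one fractional) entries above $\delta$. Indeed, the count of entries at the upper value is at most (mass)$/$(upper excess)$\le\kappa_j$. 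The fractional entry needs an extra argument: when there is a fractional entry, I expect the integrality-type slack to still give count $\le\kappa_j$. Failing that, I would absorb one extra term into the constant, treating the at most one extra fractional coordinate per block as part of the large set.
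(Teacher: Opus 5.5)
Your proposal is correct and follows essentially the paper's route: sum preservation plus Theorem~\ref{thm:pipage-rounding}~ii) leave at least $N_j-\kappa_j$ entries equal to $\delta$ in every block (so at most $2\delta$ after the perturbation). A violation then forces one of them to be selected, which gives $2\delta(N-\kappa)$; the paper gets this via $1-(1-2\delta)^{N_j-\kappa_j}$ and Bernoulli's inequality, you via a plain union bound that does not even need independence of the rounding. Multiplying by $QK=T^{8/9}$ then gives $T^{7/9}$.

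The step you left hedged does close, so your fallback is unnecessary. It would also fail, since $\kappa_j+1$ large coordinates can produce a violation with no small one selected. With $a$ entries at excess $u=\alpha(k-1)/K$ and one fractional entry of excess $f\in(0,u)$, the mass bound $au+f\le\kappa_j u$ forces the integer $a\le\kappa_j-1$. Take the small set to be exactly $N_j-\kappa_j$ of the $\delta$-entries, so that your union bound has $N_j-\kappa_j$ terms per block rather than one for every $\delta$-entry.
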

\begin{proof}
    See Appendix III-E.
\end{proof}

Theorem \ref{thm:vs-complexity} shows that, for each agent $i \in [I]$, the probability of generating a violating sample at each step vanishes at the rate $\mathcal{O}(T^{-1/9})$, so the cumulative sampling violation grows sublinearly as $\mathcal{O}(T^{7/9})$. Since sampling occurs only at $K$ out of $L$ steps per block, with $K = T^{2/3}$ and $Q =T^{2/9}$ the cumulative violation would be naturally bounded by $\mathcal{O}(T^{8/9})$; the vanishing violation probability in \eqref{ineq:vs-prob-bound} improves this to $\mathcal{O}(T^{7/9})$ as in \eqref{ineq:vs-regret-bound}. Moreover, as shown next, $T^{7/9}$ also serves as a lower bound in certain scenarios, implying that the bound \eqref{ineq:vs-regret-bound} is nearly optimal and not improvable under certain conditions.
\begin{proposition}\label{prop:vs-lower-bound}
    Suppose that $\mathbf{1}_{N_i}^\top\hspace{-1pt} \mathbf{v}_i^q(k) \!=\! \delta N_i + \alpha\kappa_i$, then for any operator $\mathcal{P}(\cdot): [\delta,\alpha +\delta]^{N_i} \to [\delta,\alpha +\delta]^{N_i}$ with the sum-preserving property such that 
    \begin{align}
        \big[{\mathbf{x}}_i^q(k+1)\big]_i = \mathcal{P} \Big(\big[\mathbf{x}_i^q (k)\big]_i + \big(\mathbf{v}_i^q(k) - \delta\cdot\mathbf{1}_{N_i}\big)/K\Big),
    \end{align}
    the cumulative sampling violation satisfies
    \begin{align}
        \sum_{q=1}^Q \sum_{k=1}^K \mathbb{E} \Big[\mathbbm{1}\big\{\mathcal{X}^q_i(k) \notin \mathcal{I}\big\}\Big] \ge\Omega(T^{7/9}).
    \end{align}
\end{proposition}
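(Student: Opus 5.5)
The plan is to show that, whatever sum-preserving operator $\mathcal{P}$ is used, the block $[\mathbf{x}_i^q(k)]_i$ carries total mass $\delta N_i + \alpha\kappa_i (k-1)/K$. This mass is pinned by the hypothesis $\mathbf{1}_{N_i}^\top\mathbf{v}_i^q(k) = \delta N_i+\alpha\kappa_i$. Once the mass is fixed, we lower bound the probability that random rounding of $\mathbf{x}_i^q(k)+\delta\mathbf{u}_i^q(k)$ produces more than $\kappa_i$ elements of $\mathcal{A}_i$.

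\textbf{Step 1 (mass identity).} Since $\mathbf{x}_i^q(1)=\delta\mathbf{1}_N$, the update adds $(\mathbf{1}^\top\mathbf{v}_i^q(k)-\delta N_i)/K=\alpha\kappa_i/K$ per step, and $\mathcal{P}$ preserves the sum. Induction therefore gives, deterministically,
\begin{align*}
S_k:=\mathbf{1}_{N_i}^\top[\mathbf{x}_i^q(k)]_i=\delta N_i+\alpha\kappa_i(k-1)/K .
\end{align*}
Every entry lies in $[\delta,\alpha+\delta]$ by the range of $\mathcal{P}$. Note that the lower bound $\delta$ is forced on every entry, regardless of how $\mathcal{P}$ distributes mass.

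\textbf{Step 2 (per-step violation probability).} Conditioned on $\mathbf{x}:=\mathbf{x}_i^q(k)+\delta\mathbf{u}_i^q(k)$, the count $Z=|\mathcal{X}_i^q(k)\cap\mathcal{A}_i|$ is a sum of independent Bernoulli$(x(n))$ variables, $n\in\mathcal{A}_i$. Its mean is $S_k+\delta\mathbf{1}^\top[\mathbf{u}]_i$, and by Step 1 we have $S_k\ge\delta N_i$. I would restrict to $k\ge K/2$, where $S_k\ge \alpha\kappa_i/2+\delta N_i$ (and near $k=K$ the mean is about $\kappa_i+\Theta(\delta)$ whenever $N_i>\kappa_i$). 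The aim is to show $\mathbb{P}(Z\ge\kappa_i+1)\ge c\,\delta$ for a constant $c>0$ on a constant fraction of the steps $k$. The cleanest route is the ``one extra element'' event. Some $\kappa_i$-subset of coordinates carries most of the mass, and any coordinate outside it has value at least $\delta-\delta|u(n)|\ge$ order $\delta$ with constant probability over $\mathbf{u}$. Concretely, for the top $\kappa_i$ coordinates (values $\ge$ some constant when $k$ is near $K$, since their mass must be $\ge S_k-(N_i-\kappa_i)(\alpha+\delta)$... ) I would use a more robust argument instead:
\begin{align*}
\mathbb{P}(Z\ge\kappa_i+1)\ge \mathbb{P}(\text{all of a fixed } \kappa_i \text{ largest coords chosen})\cdot \mathbb{P}(\text{one of the rest chosen}).
\end{align*}
This holds by independence. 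The second factor is $\ge 1-\prod(1-x(n))\gtrsim(N_i-\kappa_i)\delta$ after averaging over $\mathbf{u}$, because $\mathbb{E}[u(n)]=0$. The first factor is bounded below by a constant for $k\ge(1-c')K$ in the adversarial instance; there the mass $S_k\ge\alpha\kappa_i(1-c')$ can only be realized with top coordinates of constant size. Making this first factor uniformly bounded away from zero, independent of how $\mathcal{P}$ spreads mass, is the main obstacle. If spreading is a concern, I would instead apply a Paley--Zygmund or anti-concentration bound to $Z$ when its mean is at least $\kappa_i+\Omega(\delta)$, or simply exhibit the special case $\kappa_i=N_i-1$ (or $\mathcal{P}$ equal to the identity). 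The ``under certain conditions'' clause of the statement allows such a choice.

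\textbf{Step 3 (summation).} With $\delta=C_\delta T^{-1/9}$, Step 2 yields $\Omega(T^{-1/9})$ per step for $\Theta(K)$ steps per block. Summing gives
\begin{align*}
\sum_{q=1}^Q\sum_{k=1}^K \mathbb{E}[\mathbbm{1}\{\mathcal{X}_i^q(k)\notin\mathcal{I}\}]\ge Q\cdot\Theta(K)\cdot\Omega(T^{-1/9})=T^{2/9}T^{2/3}T^{-1/9}=\Omega(T^{7/9}),
\end{align*}
which matches Theorem~\ref{thm:vs-complexity}.
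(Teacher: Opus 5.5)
Your Steps 1 and 3 are fine, but Step 2, where the arbitrariness of $\mathcal{P}$ actually matters, is left open. You call the uniform lower bound on the first factor the main obstacle and then fall back on options that do not work.

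The Paley--Zygmund route needs the mean of $Z$ to reach $\kappa_i+\Omega(\delta)$. Your claim that it is about $\kappa_i+\Theta(\delta)$ near $k=K$ is false under the paper's parameters. Since $\gamma\le\kappa_i/\sqrt{N_i}$, one has $(1-\alpha)\kappa_i=(\sqrt{N}+1)\delta\kappa_i/\gamma\ge\delta(N_i+\sqrt{N_i})$. Hence $S_k+\delta\mathbf{1}_{N_i}^\top[\mathbf{u}]_i<\delta N_i+\delta\sqrt{N_i}+\alpha\kappa_i\le\kappa_i$ for every $k$. The expected count always stays strictly below the budget, which is exactly why the per-step violation probability is $\Theta(\delta)$ rather than a constant. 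An anti-concentration bound keyed to the mean exceeding $\kappa_i$ therefore has nothing to act on.

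The other fallback, taking $\mathcal{P}$ to be the identity or specializing to $\kappa_i=N_i-1$, proves only an existential or special-case claim. The proposition is universally quantified over $\mathcal{P}$. The ``certain conditions'' in the text refer to the hypothesis $\mathbf{1}_{N_i}^\top\mathbf{v}_i^q(k)=\delta N_i+\alpha\kappa_i$, not to a choice of operator.

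Your main route can be completed, though, in two moves.

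First, the coupling of your two factors through $\mathbf{u}$ disappears; note that ``$\mathbb{E}[u(n)]=0$'' alone does not handle it. The conditional probability that block $i$ overflows is a multilinear polynomial $h$ in the rounded vector. Since $\mathrm{Uni}(\mathbb{S}^{N-1})$ is invariant under flipping the sign of any coordinate, $\mathbb{E}_{\mathbf{u}}\big[h([\mathbf{x}_i^q(k)]_i+\delta[\mathbf{u}]_i)\big]=h([\mathbf{x}_i^q(k)]_i)$.

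Second, the missing uniform bound follows from a pigeonhole argument using the cap $\alpha+\delta$ on the range of $\mathcal{P}$. The $\kappa_i$-th largest entry of $[\mathbf{x}_i^q(k)]_i$ is at least the average of the $N_i-\kappa_i+1$ smallest entries. It is therefore at least $\big(S_k-(\kappa_i-1)(\alpha+\delta)\big)/(N_i-\kappa_i+1)\ge\alpha\big(1-\kappa_i(K-k+1)/K\big)/(N_i-\kappa_i+1)$. This is at least $c_0:=\alpha/\big(2(N_i-\kappa_i+1)\big)$ on the last $\lfloor K/(2\kappa_i)\rfloor$ steps of each block.

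On those steps, $h([\mathbf{x}_i^q(k)]_i)\ge c_0^{\kappa_i}\cdot\delta$: take the event that all top entries are chosen plus any one of the remaining entries, each of which is at least $\delta$. This needs $N_i>\kappa_i$, which is implicit in the statement. Summing gives $\Omega(QK\delta)=\Omega(T^{7/9})$.

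The paper takes a different path. It invokes a majorization lemma (Hoeffding/Gleser): under a sum constraint below $\kappa_i$, the overflow probability is minimized when at most one entry lies strictly between the bounds. It then sums the fractional parts $s_k-\sigma_k$ over the steps where $\kappa_i-1$ entries can sit at the top. The completed pigeonhole argument reaches the same $\Omega(QK\delta)$ more elementarily, with explicit constants and without any extremal-configuration lemma.
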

\begin{proof}
    See Appendix III-F.
\end{proof}

It has been shown in Proposition \ref{prop:vs-lower-bound} that, under the condition $\mathbf{1}_{N_i}^\top\hspace{-1pt} \mathbf{v}_i^q(k) =  \delta N_i + \alpha\kappa_i$ and for any sum-preserving operator $\mathcal{P}(\cdot)$ used to replace \texttt{PB-SPR} in Algorithm \ref{algo:vs}, the cumulative sampling violation is bounded by $\Omega(T^{7/9})$. It is worth noting that $\mathbf{1}_{N_i}^\top\hspace{-1pt} \mathbf{v}_i^q(k) =  \delta N_i + \alpha\kappa_i$ often holds due to the nature of Frank-Wolfe. To elaborate on this, recall in \eqref{frank-wolfe} that the vector $\mathbf{v}_i^q(k)$ is obtained by maximizing a linear function under the polytope constraint. Hence, $\mathbf{v}_i^q(k)$ typically corresponds to an extreme point of $\widetilde{\mathcal{M}}_i$, which naturally satisfies the condition as considered in Proposition \ref{prop:vs-lower-bound}. In fact, when considering the full-information feedback model as discussed in Sec. \ref{sec:full-info}, the gradient $\nabla F^t(\cdot)$ is componentwise nonnegative, making the resulting vector $\mathbf{v}_i^q(k)$ readily satisfy the above condition.

\section{Numerical Example}

In this section, we evaluate the performance of the proposed distributed online algorithms through numerical examples. We consider the multi-agent information harvesting problem studied in \cite{rezazadeh2023distributed}. Specifically, let $\mathcal{D} \subset \mathbb{R}^2$ denote a set of information sources and $\mathcal{B} \subset \mathbb{R}^2$ denote a set of information retrieval points. A team of $I$ agents is deployed, where each agent $i$ is allowed to install at most $\kappa_i \in \mathbb{Z}_{>0}$ information harvesting devices at locations selected from its candidate subset $\mathcal{B}_i \subset \mathcal{B}$. The objective of agents is to determine the device deployment~so as to minimize the aggregate distance between all information sources and devices, 
which is equivalent to maximizing a normalized monotone submodular function \cite{rezazadeh2023distributed}.

 \begin{figure}[H]
  \centering
  \subfloat[$4$-agent network]{\includegraphics[width=0.35\linewidth]{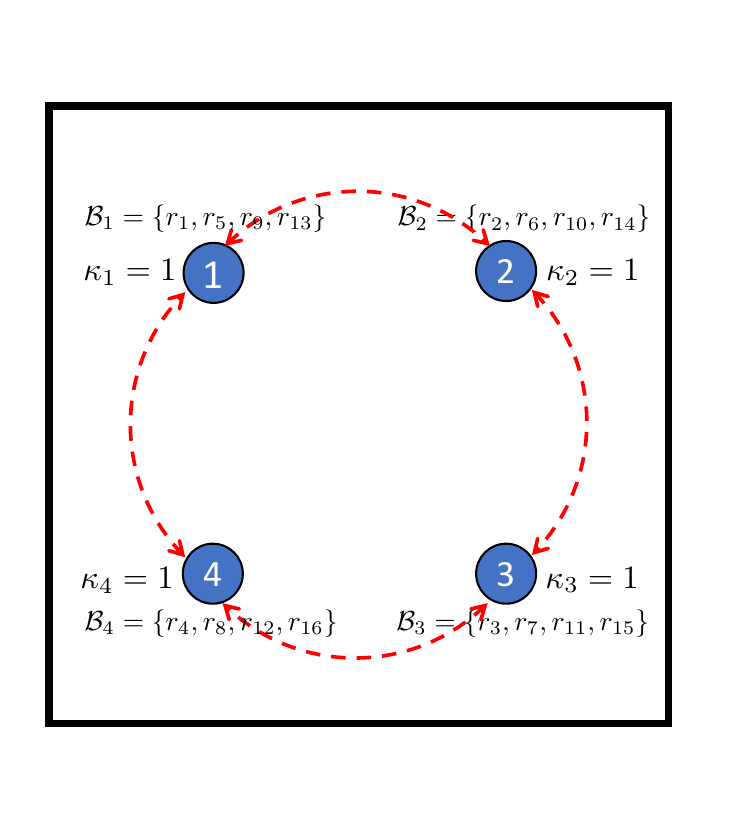} \label{subfig:1-a}}
  \subfloat[Info \& retrieval points]{\includegraphics[width=0.35\linewidth]{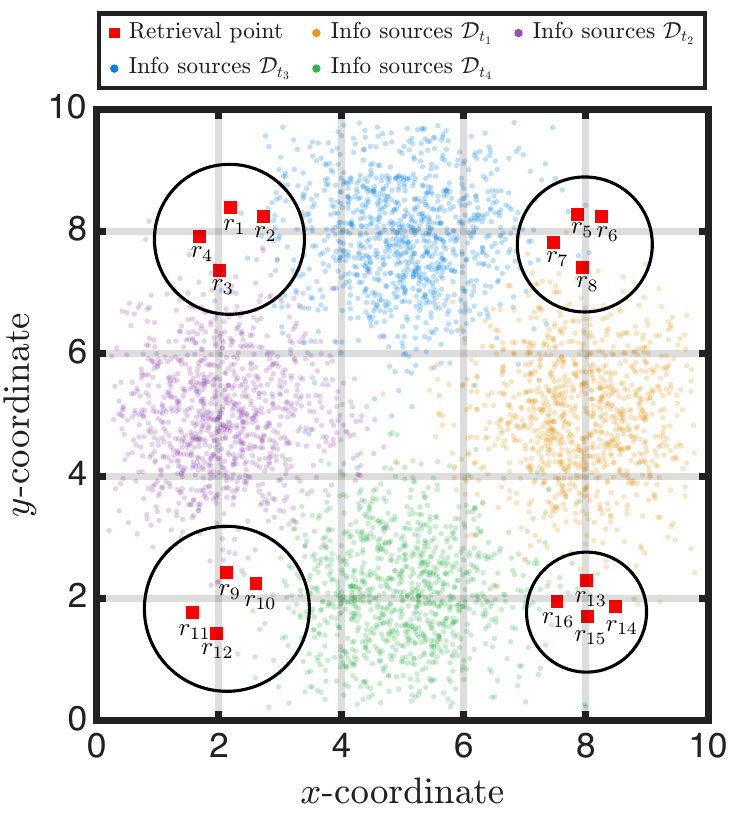} \label{subfig:1-b}}
  \caption{Simulation setup: $4$ agents connected via a ring graph, where each agent selects at most one retrieval point from its candidate set $\mathcal{B}_i$ for device deployment. 
  The retrieval points are partitioned into four clusters. 
  To maximize the overall objective value, agents are encouraged to cooperatively avoid overlap, i.e., to avoid selecting retrieval points from the same cluster.
  }
  \label{fig:sim-setup}
\end{figure}

In this simulation, we consider $|\mathcal{D}| = 1000$ information sources and $|\mathcal{B}| = 16$ retrieval points. The positions of information sources and retrieval points are spread over a $10 \times 10$ region, as shown in Fig.~\ref{fig:sim-setup}\subref{subfig:1-b}. To conform with the online problem setting, we allow the configuration of information sources $\mathcal{D}_t$ to vary randomly over time $t \in [T]$. More specifically, at each time $t$, the set $\mathcal{D}_t$ is generated according to a Gaussian kernel whose mean moves continuously within the region over the time horizon. The knowledge of $\mathcal{D}_t$ is not revealed to the agents prior to making decisions at the time $t$. A team of $I =4$ agents is employed, whose communication network and local candidate set $\mathcal{B}_i$ are shown in Fig.~\ref{fig:sim-setup}\subref{subfig:1-a}. Note that all online algorithms implemented in this section apply RFTL \cite{hazan2016introduction} as the scheme of tracking the~expert. In addition, the cumulative regret $\widetilde{\mathcal{R}}_T$ is computed against the best decision in hindsight\footnote{The results obtained in this example consistently outperform the $(1-1/e)$-approximate solution. Hence, for a clearer empirical comparison, we compute regret w.r.t. the optimal solution, rather than the $(1-1/e)$-approximate one.}, by a brute-force search.

\begin{figure*}
    \centering
    \subfloat[regret performance against baselines]{\includegraphics[width=0.33\linewidth]{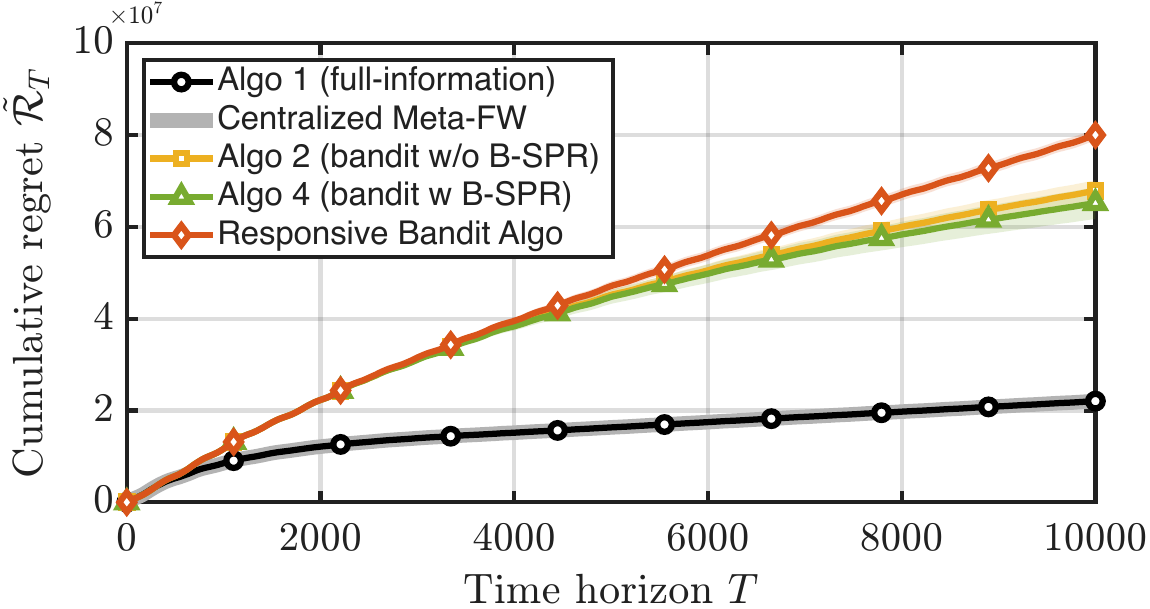}\label{subfig:regret}}
  \subfloat[\# of violating samples per round]{\includegraphics[width=.32\linewidth]{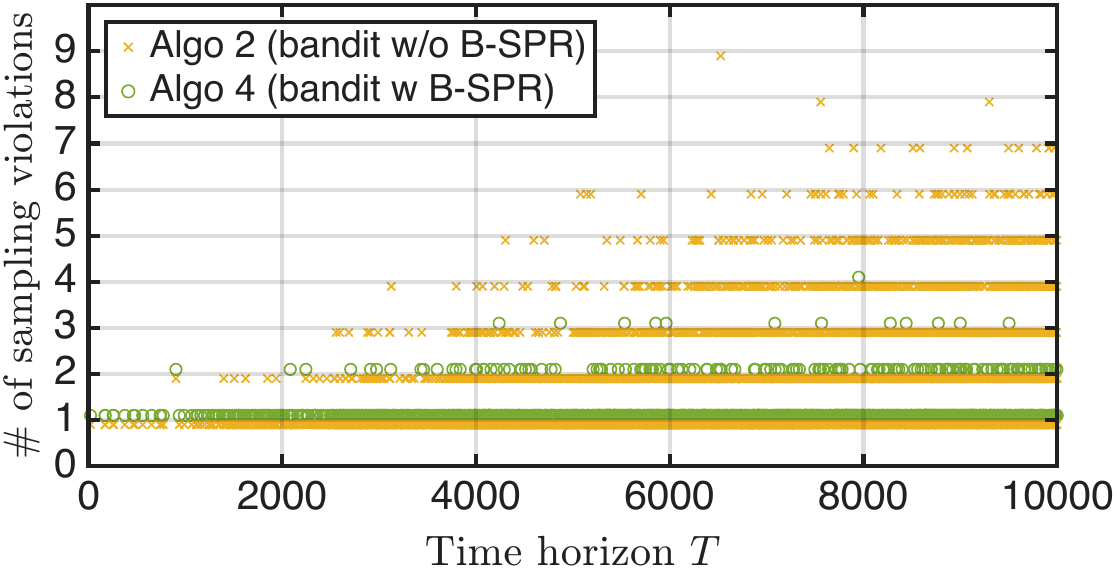} \label{subfig:vs-a}}
  \subfloat[\# of violating samples per block]{\includegraphics[width=0.335\linewidth]{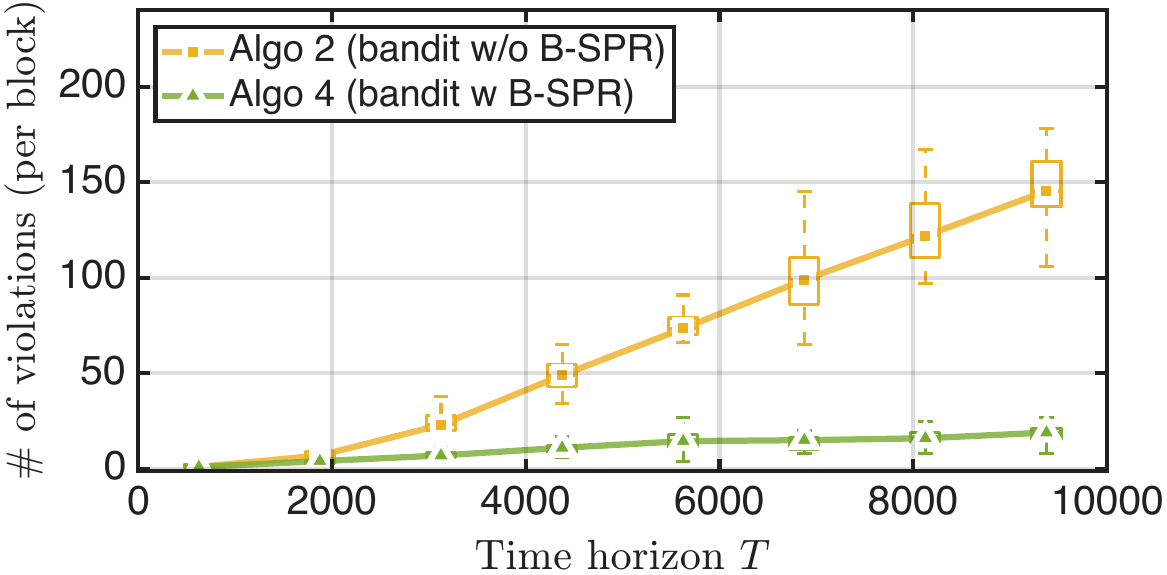} \label{subfig:vs-b}}
  \caption{Performance of the proposed algorithms against centralized baselines: (a) cumulative regrets of the proposed algorithms (all curves are averaged over $20$ independent simulation trials); (b) number of violating samples per round $t$ (aggregated over $20$ independent trials); (c) cumulative number of violating samples per block $q$.}
  \label{fig:performance}
\end{figure*}

We first examine the performance of Algorithms \ref{algo:full-info-feedback}, \ref{algo:bandit-feedback}~and \ref{algo:vs}, whose cumulative regrets and~sampling violation behaviors are shown in Fig. \ref{fig:performance}. By {sampling violation}, we mean a gradient-estimation query whose~sample~is~infeasible. We note that such infeasible queries are~an unavoidable by-product of the one-point~gradient~estimator~\cite{zhang2019online}, and the role of our \texttt{PB-SPR} scheme is precisely to drive their frequency toward zero. In addition, we~also compare the performance of our~algorithms against two centralized baselines: i) the classical Meta-Frank-Wolfe, i.e., the centralized counterpart of Algorithm \ref{algo:full-info-feedback}; and ii) the responsive bandit algorithm in~\cite{zhang2019online}, which operates in a bandit feedback setting where an infeasible query is permitted but earns zero reward. It can be observed from Fig.~\ref{fig:performance}\subref{subfig:regret} that the cumulative regrets of our algorithms all grow sublinearly with respect to $T$. As expected, the full-information Algorithm \ref{algo:full-info-feedback} attains the lowest regret, while the bandit Algorithms~\ref{algo:bandit-feedback} and \ref{algo:vs} incur a moderately larger regret owing to the more restrictive feedback. Importantly, the incorporation of \texttt{PB-SPR} into the bandit algorithm does not compromise the regret performance. In fact, as suggested in Theorem~\ref{thm:pipage-rounding}, the function value~is non-decreasing in expectation after the \texttt{B-SPR} operation, which is reflected by a slight improvement of~Algorithm~\ref{algo:vs} over Algorithm \ref{algo:bandit-feedback}. In addition, it is also observed from Fig.~\ref{fig:performance}\subref{subfig:regret} that~Algorithm~\ref{algo:full-info-feedback} and the centralized Meta-FW achieve essentially \mbox{coincident}~regret,~which~implies~that operating over the distributed~network incurs negligible loss. Moreover, since the responsive bandit algorithm frequently
queries and thus is charged for infeasible~samples, it incurs a higher regret than our bandit Algorithms~\ref{algo:bandit-feedback} and~\ref{algo:vs}. Most importantly, thanks to \texttt{PB-SPR}, the occurrence of sampling violations is substantially reduced, by roughly $84\%$ on this simulation instance, as shown in Fig.~\ref{fig:performance}\subref{subfig:vs-a}. As the experts progressively concentrate the iterates towards a vertex, the per-block number of violating samples grows for the standard bandit algorithm, whereas \texttt{PB-SPR} keeps it consistently and increasingly lower, as shown in Fig.~\ref{fig:performance}\subref{subfig:vs-b}.

To further study how the communication graph shapes the behavior of our distributed algorithms, we next vary the number of agents by $I\in\{4,8,12,16,24,32,40\}$ and consider three different network topologies: i) an Erd\H{o}s--R\'enyi \emph{random graph}, which remains well-connected~with small diameter; ii) a two-dimensional \emph{grid graph} with medium diameter; and iii) a \emph{ring graph} with the largest diameter, as shown in Fig.~\ref{fig:sim-setup}\subref{subfig:1-a}. We note that for each instance, the ground set scales as $N = 4I$. Hence, a brute-force search for the optimal solution may be intractable, and thus the regret here is measured against an offline greedy benchmark. Besides the regret performance and sampling-violation behavior, we also track the metric of \emph{consensus error}, which quantifies how closely the agents agree on the 
global decision. Precisely, the consensus error $\mathcal{J}$ is defined by
\begin{align}\label{eq:consensus-error}
  \mathcal{J} := \frac{1}{(I-1)N}\sum_{j=1}^{I}\sum_{i\neq j}
  \big\| [\mathbf{x}_{i}^q(K{+}1)]_j - [\mathbf{x}_{j}^q(K{+}1)]_j\big\|_2,
\end{align}
evaluated at all executed iterates. By the definition, we know that the consensus error
$\mathcal{J}=0$ if and only if all agents hold the same iterate.

The numerical results in Fig.~\ref{fig:topology} reveal three effects.~First, the
consensus error is topology-sensitive, and it increases as the diameter of the graph becomes large.  Second, the
regret is remarkably robust to the topology. Specifically, the per-round regret of the three
topologies differs by less than $1\%$ at every instance. This is mainly due to the fact that each agent's executed
reward is dominated by its own block, which it knows exactly, while the
topology only delays its estimate of the others' blocks. Third, the
sampling-violation rate is driven by problem scale rather than topology. It
rises with $I$, consistent with the
$(N-\kappa)$ dependence in Theorem \ref{thm:vs-complexity}, with only a mild
secondary topology effect.
The price of this robustness is that \texttt{PB-SPR} incurs a larger consensus
error than the standard bandit method, consistent with the $\mathcal{O}(N^2/K)$ consensus-error
term in the analysis.

\begin{figure}
  \centering
  \subfloat[consensus error]{\includegraphics[width=0.45\linewidth]{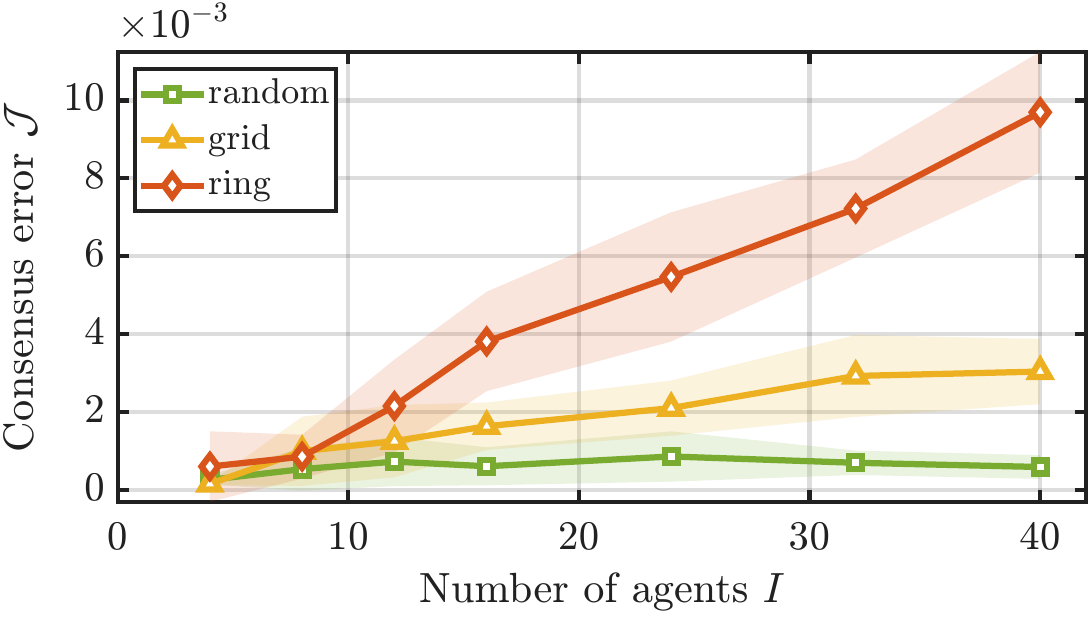}\label{subfig:topo-ce}}
  \subfloat[regret \& violation rate]{\includegraphics[width=0.52\linewidth]{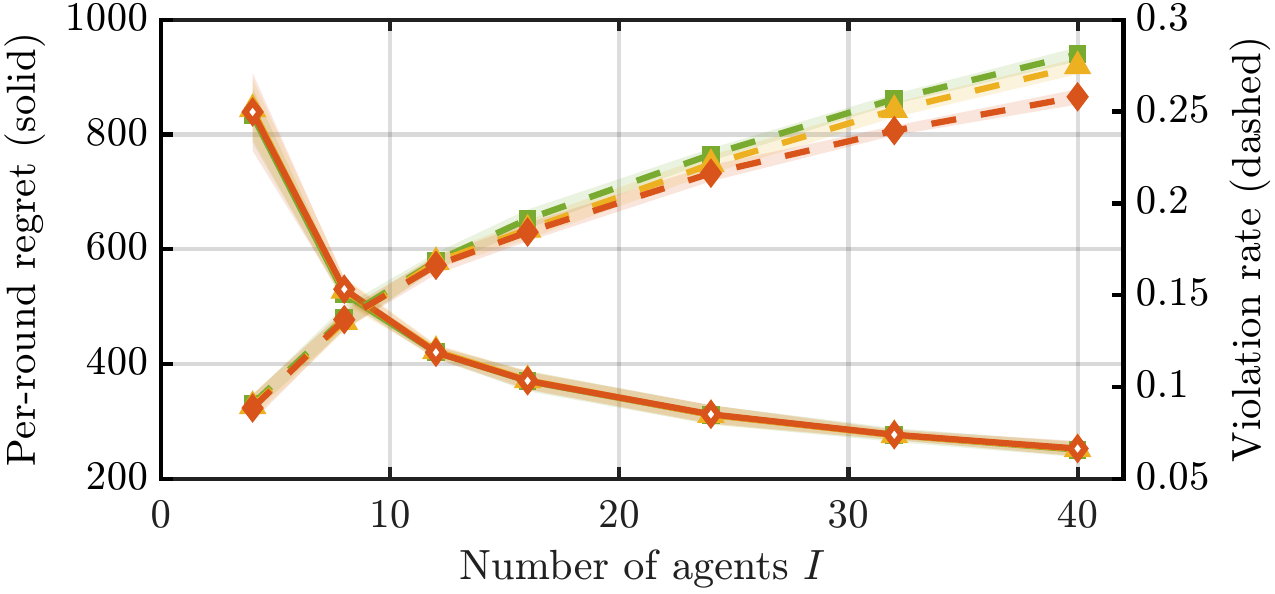}\label{subfig:topo-rv}}
  \caption{Effect of the communication topology on Algorithm~\ref{algo:vs} with
  \texttt{PB-SPR} (all curves are averaged over 10 independent simulation trials):
  (a) consensus error $\mathcal{J}$, as defined in \eqref{eq:consensus-error}; and
  (b) per-round regret (left axis, solid lines) and sampling violation rate (right
  axis, dashed lines).}
  \label{fig:topology}
\end{figure}

\section{Conclusion}
This paper studied the distributed submodular maximization problem in the online setting. An algorithmic framework was proposed which accommodates both full-information and bandit feedback models. Sublinear $(1-1/e)$-regret was established in both settings, matching the state-of-the-art centralized rates. To address the sampling violation issue which is inherent to bandit feedback with continuous relaxation, we further developed the \texttt{B-SPR} scheme and proved that the probability of sampling violation vanishes asymptotically. Numerical results validated the effectiveness of our algorithms. Future work will be devoted to developing distributed online algorithms that ensure hard per-round feasibility, as demanded by safety-critical applications. It would be of interest to extend the present framework beyond the partition matroid to more general constraints.

\bibliographystyle{unsrt}
\bibliography{ref}

\newpage
\onecolumn
\renewcommand{\thesubsection}{\Alph{subsection}}
\setcounter{subsection}{0}
\section*{Appendix}
{
\hypersetup{linkcolor=blue}
  \tableofcontents
}

\clearpage
\section*{Appendix I: Proofs Pertaining to Algorithm~\ref{algo:full-info-feedback}}
\addcontentsline{toc}{section}{Appendix I: Proofs Pertaining to Algorithm~\protect\ref{algo:full-info-feedback}}\label{append:A}
\subsection{Proof of Lemma~\ref{lemma:consensus-full-info}}
By the initialization $\mathbf{x}_i^q(1) = \mathbf{0}$ and $\mathbf{v}_i^q(k) \in \mathcal{M}_i$,~it is easy to see that $[\mathbf{x}_i^q(k)]_i \in \mathcal{M}_i$ for $\forall i \in [I]$, and thus $\bar{\mathbf{x}}^q(k) \in \mathcal{M}$~by the definition of $\bar{\mathbf{x}}^q(k)$. In addition, due to the updating rule~of $[\mathbf{x}_i^q(k)]_{i'}$ for $\forall i' \neq i$ (see Line~7~in Algorithm~\ref{algo:full-info-feedback}), we can have that $\mathbf{0} \preceq [\mathbf{x}_i^q(k)]_{i'} \preceq [\mathbf{x}_{i'}^q(k)]_{i'}$.~As a result,  $[\mathbf{x}_i^q(k)]_{i'} \in \mathcal{M}_{i'}$ holds for $\forall i' \neq i$ and thus $\mathbf{x}_i^q(k) \in \mathcal{M}$ for $\forall i \in [I]$. 

Notice that $\mathbf{1}_{N_i}^\top[\mathbf{x}_i^q(k)]_i \le \mathbf{1}_{N_i}^\top[\mathbf{x}_i^q(k-1)]_i + \kappa_i/K$. Hence, applying the inequality for consecutive $d(\mathcal{G})$ steps yields,
\begin{align}\label{ineq:proof-lemma-1-1}
    \mathbf{1}_{N_i}^\top[\mathbf{x}_i^q(k)]_i \le \mathbf{1}_{N_i}^\top[\mathbf{x}_i^q(k-d(\mathcal{G}))]_i + \kappa_id(\mathcal{G})/K.
\end{align}
Recall that $d(\mathcal{G})$ denotes the diameter of the graph $\mathcal{G}$, meaning that every agent $i' \neq i$ can be reached from the agent $i$ by at most $d(\mathcal{G})$-hop communications. Thus, it holds for $\forall i' \neq i$ that $\mathbf{1}_{N_i}^\top[\mathbf{x}_{i'}^q(k)]_{i} \ge \mathbf{1}_{N_i}^\top[\mathbf{x}_{i}^q(k -d(\mathcal{G}))]_{i}$. Combining this inequality together with~\eqref{ineq:proof-lemma-1-1} gives
\begin{align}\label{ineq:proof-lemma-1-2}
\mathbf{1}_{N_i}^\top[\mathbf{x}_{i'}^q(k)]_{i} \ge \mathbf{1}_{N_i}^\top[\mathbf{x}_{i}^q(k)]_{i}  - \kappa_id(\mathcal{G})/K.
\end{align}
Therefore, it follows that
\begin{equation}
\begin{aligned}
  \|\bar{\mathbf{x}}^q(k) - \mathbf{x}_i^q(k)\| \hspace{-1pt} \le \hspace{-1pt} \mathbf{1}_{N}^\top\big(\bar{\mathbf{x}}^q(k) - \mathbf{x}_i^q(k)\big) \hspace{-1pt}\le\hspace{-1pt} \kappa d(\mathcal{G})/K,
\end{aligned}
\end{equation}
where the first inequality is due to the facts that $\|\mathbf{x}\| \le \|\mathbf{x}\|_1$ and  $[\mathbf{x}_i^q(k)]_{i'} \preceq [\mathbf{x}_{i'}^q(k)]_{i'}$ holds for $\forall i' \neq i$, and the second one follows by summing \eqref{ineq:proof-lemma-1-2} for $\forall i \in [I]$ and $\kappa = \sum_{i \in [I]} \kappa_i$. The proof of Lemma~\ref{lemma:consensus-full-info} is completed.

\subsection{Proof of Theorem~\ref{thm:regret-full-info}}

To facilitate the subsequent proofs, let us first notice by the definitions in~\eqref{eq:def-M} and~\eqref{eq:def-Mi} that, the diameters of polytopes~$\mathcal{M}_i$ and $\mathcal{M}$ have $D_i \hspace{-1pt}=\hspace{-1pt} \max_{\mathbf{x}, \mathbf{y} \in \mathcal{M}_i} \|\mathbf{x} - \mathbf{y}\| \hspace{-1pt}=\hspace{-1pt}\sqrt{ \min\{2\kappa_i, N_i\}}$~and $D \hspace{-1pt}=\hspace{-1pt} \max_{\mathbf{x}, \mathbf{y} \in \mathcal{M}} \|\mathbf{x} - \mathbf{y}\| \hspace{-1pt}=\hspace{-1pt} \sqrt{\sum_{i=1}^I \min\{2\kappa_i, N_i\}}$,~respectively. In addition, recall that $f^t(\cdot)$ is~monotone and submodular with $\max_{\mathcal{S} \in \mathcal{I}} f^t (\mathcal{S}) \le f^\text{max}$, it can be shown (see Lemma~3 in~\cite{chen2020black}) that its multi-linear extension $F^t(\cdot)$~is $L_1$-Lipschtiz and $L_2$-smooth with  $L_1 = 2 \sqrt{N}f^\text{max}$ and $L_2 = 4Nf^\text{max}$.

Let $\mathcal{S}^\star \in \arg\max_{\mathcal{S} \in \mathcal{I}} \sum_{t=1}^T f^t(\mathcal{S})$ and $\widetilde{\mathbf{x}}^\star \in \mathcal{M}$ be the continuous variable so that $\sum_{t=1}^T F^t(\widetilde{\mathbf{x}}^\star)=\sum_{t=1}^T f^t(\mathcal{S}^\star)$. Likewise, let $\mathbf{x}^\star \in \arg\max_{\mathbf{x} \in \mathcal{M}} \sum_{t=1}^T F^t(\mathbf{x})$, it is clear that $\sum_{t=1}^T F^t({\mathbf{x}}^\star) \ge \sum_{t=1}^T F^t(\widetilde{\mathbf{x}}^\star)$. Considering that each discrete decision $\mathcal{S}^t$ is obtained by lossless rounding on $\bar{\mathbf{s}}^t= [\mathbf{s}_1(t)^\top,\cdots,\mathbf{s}_I(t)^\top]^\top$, i.e., $\mathbb{E}[f^t(\mathcal{S}^t)] \ge F^t(\bar{\mathbf{s}}^t)$,~it~has
\begin{equation}\label{eq:regret-1-continous-regret}
\begin{aligned}
\mathcal{R}^c_T &:= (1-1/e)\cdot \sum_{t=1}^TF^t(\mathbf{x}^\star) - \sum_{t=1}^T F^t( \bar{\mathbf{s}}^t) \\
&\ge\mathbb{E}\Big[(1-1/e)\cdot \sum_{t=1}^Tf^t(\mathcal{S}^\star) - \sum_{t=1}^T f^t( \mathcal{S}^t)\Big].
\end{aligned}
\end{equation}
Therefore, to prove the regret bound in Theorem~\ref{thm:regret-full-info}, it suffices to next show that $\mathcal{R}_T^c$ is no larger~than the same bound.

Recall the definition of $\bar{\mathbf{x}}^q(k)$ in Lemma~\ref{lemma:consensus-full-info} and let $\bar{\mathbf{v}}^q(k) := [\mathbf{v}_1^q(k)^\top,\cdots,\mathbf{v}_I^q(k)^\top]^\top \in \mathcal{M} $, it is straightforward to verify that $\bar{\mathbf{x}}^q(k+1) = \bar{\mathbf{x}}^q(k)  + \bar{\mathbf{v}}^q(k) /K, \forall k \in [K]$. Now, define a new vector $\bar{\mathbf{d}}_i^q(k) \in \mathbb{R}^N$ for every $i\in[I]$ which is generated by the following~recursion with $\bar{\mathbf{d}}_i^q(0) = \mathbf{0}$,
  \begin{align}\label{eq:d-bar}
    \bar{\mathbf{d}}_i^q(k) = (1-\rho_k)\bar{\mathbf{d}}_i^q(k-1) + \rho_k \nabla F^{t_i^{q,k}}\big(\bar{\mathbf{x}}^q(k)\big).
  \end{align}
Furthermore, let $\widetilde{\mathbf{d}}^q(k) := 1/I\cdot\sum_{i=1}^I\bar{\mathbf{d}}_i^q(k)$. Then, by the facts that $\{t_i^{q,1}, t_i^{q,2}, \cdots, t_i^{q,K}\}$ is a uniformly random permutation of $\{1,2,\cdots, K\}$ and is independent for $\forall i\in[I]$, we can have
  \begin{align}\label{eq:independent-gradient}
     \mathbb{E}\Big[\widetilde{\mathbf{d}}^q(k) - \bar{\mathbf{d}}_i^q(k) \Big] = 0.
  \end{align}
Next, by the same steps in the proof of Lemma~4 in~\cite{zhang2019online}, we can obtain the following result.
\begin{lemma}\label{lemma:regret-1-key-step-1}
    Define $\bar{F}_i^{q,k}(\cdot): = \sum_{l=k+1}^K F^{t_i^{q,l}}(\cdot)/(K-k)$,~it holds for $\forall q \in [Q]$ and $i \in [I]$ that
    \begin{equation}
    \begin{aligned}
      &\mathbb{E}\Big[(1-1/e) \cdot\bar{F}_i^{q,0}(\mathbf{x}^\star) - \bar{F}_i^{q,0}\big(\bar{\mathbf{x}}^q(K+1)\big)\Big] \\
      &\le  1/K \cdot\sum_{k=1}^K\frac{1}{2\beta(k)}\mathbb{E} \Big[\|\nabla \bar{F}_i^{q,k-1}(\bar{\mathbf{x}}^q(k)) - \bar{\mathbf{d}}_i^q(k)\|^2 \Big]\\
      & \hspace{10pt}+  1/K \cdot \sum_{k=1}^K(1-1/K)^{K-k} \cdot\mathbb{E}\big[\langle \bar{\mathbf{d}}_i^q(k), \mathbf{x}^\star - \bar{\mathbf{v}}^q(k) \rangle\big] \\
      & \hspace{10pt} + 1/K\cdot \sum_{k=1}^K \frac{\beta(k)D^2}{2}+ \frac{L_2 D^2}{2K},
    \end{aligned}
    \end{equation}
    where $\{\beta(k)\}_{k\in [K]}$ is an arbitrary~positive sequence.
\end{lemma}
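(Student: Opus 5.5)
The plan is to follow the standard Frank--Wolfe ascent argument from Lemma~4 of \cite{zhang2019online}, applied to the \emph{averaged} trajectory $\bar{\mathbf{x}}^q(k)$. This trajectory satisfies the exact recursion $\bar{\mathbf{x}}^q(k+1)=\bar{\mathbf{x}}^q(k)+\bar{\mathbf{v}}^q(k)/K$. The surrogate gradient throughout is $\bar{\mathbf{d}}_i^q(k)$, defined in \eqref{eq:d-bar}.

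\textbf{Step 1 (one-step ascent).} Fix $i$ and $q$. Since $\bar F_i^{q,k-1}$ is an average of multilinear extensions, it is $L_2$-smooth and monotone concave along nonnegative directions. Smoothness and $\|\bar{\mathbf{v}}^q(k)\|\le D$ give
\begin{align*}
\bar F_i^{q,k-1}(\bar{\mathbf{x}}^q(k+1))-\bar F_i^{q,k-1}(\bar{\mathbf{x}}^q(k))\ge \tfrac1K\langle\nabla\bar F_i^{q,k-1}(\bar{\mathbf{x}}^q(k)),\bar{\mathbf{v}}^q(k)\rangle-\tfrac{L_2D^2}{2K^2}.
\end{align*}
I then split the inner product as $\langle \bar{\mathbf{d}}_i^q(k),\bar{\mathbf{v}}^q(k)-\mathbf{x}^\star\rangle+\langle \nabla\bar F-\bar{\mathbf{d}},\bar{\mathbf{v}}^q(k)-\mathbf{x}^\star\rangle+\langle\nabla\bar F,\mathbf{x}^\star\rangle$.

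\textbf{Step 2 (bounding the three pieces).} The middle piece is handled by Young's inequality with an arbitrary parameter $\beta(k)>0$:
\begin{align*}
|\langle \nabla\bar F-\bar{\mathbf{d}},\bar{\mathbf{v}}-\mathbf{x}^\star\rangle|\le \|\nabla\bar F-\bar{\mathbf{d}}\|^2/(2\beta(k))+\beta(k)D^2/2.
\end{align*}
The last piece uses the standard monotone DR-submodular bound $\langle\nabla\bar F(\mathbf{x}),\mathbf{x}^\star\rangle\ge \bar F(\mathbf{x}^\star)-\bar F(\mathbf{x})$, which follows from concavity along $(\mathbf{x}^\star-\mathbf{x})\vee 0$ and monotonicity. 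Combining these gives a recursion for the gap $h_k:=\bar F(\mathbf{x}^\star)-\bar F(\bar{\mathbf{x}}^q(k))$ of the form $h_{k+1}\le(1-1/K)h_k+\tfrac1K(\text{error}_k)$.

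\textbf{Step 3 (handling the changing function --- main obstacle).} The recursion is written for the function $\bar F_i^{q,k-1}$, which changes with $k$. This is exactly where the random permutation enters. Conditioned on the history up to step $k$, the remaining indices $t_i^{q,k},\dots,t_i^{q,K}$ form a uniformly random subset. Hence $\mathbb{E}[\bar F_i^{q,k}(\mathbf{y})\mid\mathcal{F}_k]=\bar F_i^{q,k-1}(\mathbf{y})$ for any $\mathcal{F}_k$-measurable $\mathbf{y}$, so $\bar F_i^{q,k}$ and $\bar F_i^{q,k-1}$ agree in expectation at $\bar{\mathbf{x}}^q(k)$, $\bar{\mathbf{x}}^q(k+1)$ and $\mathbf{x}^\star$. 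Some care is needed because $\bar{\mathbf{x}}^q(k+1)$ depends on experts fed with gradients only through the feedback stage. In Algorithm~\ref{algo:full-info-feedback}, however, the experts' outputs within a block are fixed before the block's functions are used, so measurability holds. With this martingale identity, $\mathbb{E}[h_k]$ can be chained through the nested averages, and unrolling from $h_1\le\bar F(\mathbf{x}^\star)$ (normalization, $\bar{\mathbf{x}}^q(1)=\mathbf{0}$) gives the $(1-1/K)^K\le 1/e$ factor. The inner-product term then carries the weight $(1-1/K)^{K-k}$. The error weights, which are at most $1$, are bounded by $1$ in the variance and $\beta$ terms, and the smoothness terms sum to $L_2D^2/(2K)$, yielding the stated inequality.
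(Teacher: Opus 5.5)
Your proposal is correct and follows essentially the same route as the paper. The paper gives no self-contained argument; it defers to the proof of Lemma~4 in \cite{zhang2019online}, which uses exactly your ingredients: smoothness ascent along $\bar{\mathbf{x}}^q(k+1)=\bar{\mathbf{x}}^q(k)+\bar{\mathbf{v}}^q(k)/K$, Young's inequality with the free parameter $\beta(k)$, the monotone DR-submodular bound $\langle\nabla \bar F(\mathbf{x}),\mathbf{x}^\star\rangle\ge \bar F(\mathbf{x}^\star)-\bar F(\mathbf{x})$, and replacing $\bar F_i^{q,k-1}$ by $\bar F_i^{q,0}$ in expectation through the random permutation. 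Your observation that $\bar{\mathbf{x}}^q(k)$ and $\bar{\mathbf{v}}^q(k)$ are fixed before block $q$'s permutation is realized is what justifies that last step in the distributed setting; just close the recursion at $k=K$, since $\bar F_i^{q,K}$ is an empty average.
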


Now, with the aid of the above Lemma~\ref{lemma:regret-1-key-step-1}, we can show an intermediate result for the upper bound of $\mathbb{E}[\mathcal{R}_T^c]$.
\begin{lemma}\label{lemma:regret-1-intermediate-step}
    Let $\mathcal{R}_T^c$ be defined in~\eqref{eq:regret-1-continous-regret}, then it holds that
    \begin{equation}\label{ineq:regret-1-intermediate-step}
        \begin{aligned}
                \mathbb{E}[\mathcal{R}^c_T] \le &\sum_{q=1}^Q \sum_{i=1}^I\sum_{k=1}^K \frac{1}{2\beta(k)I} \mathbb{E}\Big[{\underbrace{\|\nabla \bar{F}_i^{q,k-1}(\bar{\mathbf{x}}^q(k)) - \bar{\mathbf{d}}_i^q(k)\|^2}_{:= \Delta_i^1(k) -- \text{stochastic gradient error}}}\Big] \\ 
    &+ \sum_{q=1}^Q \sum_{i=1}^I\sum_{k=1}^K (1-1/K)^{K-k}  D_i\cdot\mathbb{E}\Big[\underbrace{\|\bar{\mathbf{d}}_i^q(k) - {\mathbf{d}}_i^q(k)\|}_{:=\Delta_i^2(k) -- \text{consensus error}}\Big]\\
    &+ K\sum_{i=1}^I\mathcal{R}_Q^{\mathcal{E}_i}+ \frac{QD^2}{2}\sum_{k=1}^K \beta(k) + \frac{L_2QD^2}{2},
        \end{aligned}
    \end{equation}
      where $\mathcal{R}_Q^{\mathcal{E}_i}$ represents the cumulative regret of expert $\mathcal{E}_i(k)$.
\end{lemma}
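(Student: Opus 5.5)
We obtain Lemma~\ref{lemma:regret-1-intermediate-step} by summing Lemma~\ref{lemma:regret-1-key-step-1} over agents and blocks, and then splitting the linear term in it into a consensus-error piece and an expert-regret piece. The first step relates $\mathcal{R}_T^c$ to the block-averaged functions. Within block $q$ all times share $\bar{\mathbf{s}}^t = \bar{\mathbf{x}}^q(K+1)$, and $\{t_i^{q,k}\}_k$ is a permutation of the block's time indices. Hence for every $i$,
\[
K\,\bar F_i^{q,0}(\cdot)=\sum_{t=(q-1)K+1}^{qK}F^t(\cdot),
\]
and so $\mathcal{R}_T^c=\sum_{q}K\big[(1-1/e)\bar F_i^{q,0}(\mathbf{x}^\star)-\bar F_i^{q,0}(\bar{\mathbf{x}}^q(K+1))\big]$ for any fixed $i$. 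Averaging this identity over $i\in[I]$ with weight $1/I$ and applying Lemma~\ref{lemma:regret-1-key-step-1} to each $i$ gives four kinds of terms after multiplying by $K$. The variance terms carry weight $1/(2\beta(k) I)$, which is exactly the first line of~\eqref{ineq:regret-1-intermediate-step}. The $\beta(k)D^2/2$ and $L_2D^2/(2K)$ terms, summed over $q$, give $\frac{QD^2}{2}\sum_k\beta(k)+\frac{L_2QD^2}{2}$; no $1/I$ remains because they are averaged over $i$. What is left is the linear term
\[
\frac{1}{I}\sum_{q}\sum_i\sum_k(1-1/K)^{K-k}\,\mathbb{E}\langle\bar{\mathbf{d}}_i^q(k),\mathbf{x}^\star-\bar{\mathbf{v}}^q(k)\rangle .
\]

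The main work is handling this inner product. First, use~\eqref{eq:independent-gradient}, i.e.\ $\mathbb{E}[\bar{\mathbf{d}}_i^q(k)]=\mathbb{E}[\widetilde{\mathbf{d}}^q(k)]$ with $\widetilde{\mathbf{d}}^q(k)$ independent of the index $i$, to remove the $1/I$ averaging. One has to check that $\mathbf{x}^\star-\bar{\mathbf{v}}^q(k)$ is determined before the permutation randomness of block $q$ enters, since it depends only on past blocks' feedback through the experts. Granting this, $\frac1I\sum_i\langle\bar{\mathbf{d}}_i^q(k),\cdot\rangle=\langle\widetilde{\mathbf{d}}^q(k),\cdot\rangle$. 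Next, decompose blockwise:
\[
\langle\widetilde{\mathbf{d}}^q(k),\mathbf{x}^\star-\bar{\mathbf{v}}^q(k)\rangle=\sum_{i}\langle[\widetilde{\mathbf{d}}^q(k)]_i,[\mathbf{x}^\star]_i-\mathbf{v}_i^q(k)\rangle .
\]
In block $i$, replace $[\widetilde{\mathbf{d}}^q(k)]_i$ by $[\bar{\mathbf{d}}_i^q(k)]_i$, which has the same expectation against the predetermined direction. Then write $[\bar{\mathbf{d}}_i^q(k)]_i=[\mathbf{d}_i^q(k)]_i+[\bar{\mathbf{d}}_i^q(k)-\mathbf{d}_i^q(k)]_i$. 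The error part is bounded by Cauchy--Schwarz,
\[
\|\bar{\mathbf{d}}_i^q(k)-\mathbf{d}_i^q(k)\|\,\|[\mathbf{x}^\star]_i-\mathbf{v}_i^q(k)\|\le D_i\,\Delta_i^2(k),
\]
since both $[\mathbf{x}^\star]_i$ and $\mathbf{v}_i^q(k)$ lie in $\mathcal{M}_i$. With the factor $(1-1/K)^{K-k}$ this yields the second line of~\eqref{ineq:regret-1-intermediate-step}.

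The remaining term is $\sum_q\sum_i\sum_k(1-1/K)^{K-k}\langle[\mathbf{d}_i^q(k)]_i,[\mathbf{x}^\star]_i-\mathbf{v}_i^q(k)\rangle$. This is exactly the linear reward fed to expert $\mathcal{E}_i(k)$, evaluated at the comparator $[\mathbf{x}^\star]_i\in\mathcal{M}_i$ versus the expert's outputs over $q=1,\dots,Q$. Using $(1-1/K)^{K-k}\le1$ requires nonnegativity, so instead bound the maximal cumulative regret: for each $(i,k)$, $\sum_q\langle\cdot\rangle\le\mathcal{R}_Q^{\mathcal{E}_i}$ by~\eqref{ineq:expert-regret}. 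Summing over $k\in[K]$ then gives $K\sum_i\mathcal{R}_Q^{\mathcal{E}_i}$. If the weighted regret could be negative for some $k$, bound the positive part; the factor $(1-1/K)^{K-k}\in(0,1]$ is harmless.

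The step I expect to be the main obstacle is the measurability and independence argument in the second step. It justifies swapping $\widetilde{\mathbf{d}}^q(k)$ for the agent-specific $\bar{\mathbf{d}}_i^q(k)$ inside an inner product whose other factor $\bar{\mathbf{v}}^q(k)$ is random. The plan is to condition on the history up to block $q-1$, under which all $\mathbf{v}_i^q(k)$ and $\bar{\mathbf{x}}^q(k)$ are fixed. Then only the permutations are random, and they are i.i.d.\ across agents with identical marginals, so~\eqref{eq:independent-gradient} applies conditionally.
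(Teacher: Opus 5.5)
Your proposal is correct and follows the paper's proof essentially step for step: average the block identity over agents, apply Lemma~\ref{lemma:regret-1-key-step-1}, replace $[\widetilde{\mathbf{d}}^q(k)]_i$ by $[\bar{\mathbf{d}}_i^q(k)]_i$ blockwise via \eqref{eq:independent-gradient}, split off the consensus error with Cauchy--Schwarz and the diameter $D_i$, and bound the remainder by the experts' regret. Your conditioning on the history through block $q-1$ and your positive-part handling of the weight $(1-1/K)^{K-k}$ are more careful than the paper, which silently drops that weight; also, the identity $\frac{1}{I}\sum_i\langle\bar{\mathbf{d}}_i^q(k),\mathbf{x}^\star-\bar{\mathbf{v}}^q(k)\rangle=\langle\widetilde{\mathbf{d}}^q(k),\mathbf{x}^\star-\bar{\mathbf{v}}^q(k)\rangle$ is pure linearity, so \eqref{eq:independent-gradient} is needed only for the blockwise swap.
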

\begin{proof}
    Let us first note that, by the definition of $\bar{F}_i^{q,k}(\cdot)$, it holds that $\bar{F}_i^{q,0}(\mathbf{x}) = \bar{F}_j^{q,0}(\mathbf{x}),\,\forall i\neq j$ and $ \mathbf{x} \in \mathcal{M}$. Thus,~we denote $\bar{F}^{q,0}(\cdot) = \bar{F}_i^{q,0}(\cdot),\forall i\in[I]$. Moreover,~by~Algorithm~\ref{algo:full-info-feedback}, we can have $\bar{\mathbf{s}}^t = \bar{\mathbf{x}}^q(K+1),\,\forall (q-1)K+1 \le t\le qK$. As a result, it follows that,
     \begin{equation}\label{eq:regret-1-continuous-regret-decomp-2}
         \begin{aligned}
         \mathbb{E}[\mathcal{R}^c_T] &= \mathbb{E}\Big[K \sum_{q=1}^Q  \Big((1-1/e)\bar{F}^{q,0}(\mathbf{x}^\star) - \bar{F}^{q,0}\big(\bar{\mathbf{x}}^q(K+1)\big)\Big)\Big]\\
         & = \mathbb{E}\Big[K/I \sum_{q=1}^Q \sum_{i=1}^I  \Big((1-1/e)\bar{F}_i^{q,0}(\mathbf{x}^\star) - \bar{F}_i^{q,0}\big(\bar{\mathbf{x}}^q(K+1)\big)\Big)\Big]\\
          &\le  1/I\cdot \sum_{q=1}^Q \sum_{i=1}^I \sum_{k=1}^K \frac{1}{2\beta(k)}\mathbb{E} \Big[\|\nabla \bar{F}_i^{q,k-1}(\bar{\mathbf{x}}^q(k)) - \bar{\mathbf{d}}_i^q(k)\|^2 \Big]\\
        & \hspace{20pt}+ \sum_{q=1}^Q \sum_{k=1}^K(1-1/K)^{K-k} \cdot\mathbb{E}\big[\langle \widetilde{\mathbf{d}}^q(k), \mathbf{x}^\star - \bar{\mathbf{v}}^q(k) \rangle\big] \Big)+ \frac{QD^2}{2}\sum_{k=1}^K \beta(k) + \frac{L_2QD^2}{2},
         \end{aligned}
     \end{equation}
     where the inequality follows from Lemma~\ref{lemma:regret-1-key-step-1} and the fact that $\widetilde{\mathbf{d}}^q(k) = 1/I\cdot\sum_{i=1}^I\bar{\mathbf{d}}_i^q(k)$.

     Next, we deal with the term $\mathbb{E}[\langle \widetilde{\mathbf{d}}^q(k), \mathbf{x}^\star - \bar{\mathbf{v}}^q(k) \rangle]$ in~\eqref{eq:regret-1-continuous-regret-decomp-2}. By the definition of~$\bar{\mathbf{v}}^q(k)$, it holds that
     \begin{equation}
     \begin{aligned}
    &\mathbb{E} \Big[\langle \widetilde{\mathbf{d}}^q(k), \mathbf{x}^\star - \bar{\mathbf{v}}^q(k) \rangle \Big] \\
    &= \mathbb{E}\Big[ \sum_{i=1}^I\big\langle [\widetilde{\mathbf{d}}^q(k)]_i, [ \mathbf{x}^\star]_i- \mathbf{v}_{i}^q(k)\big\rangle \Big]  \\
    & =  \sum_{i=1}^I \mathbb{E} \Big[\big\langle [\widetilde{\mathbf{d}}^q(k)]_i \pm [\bar{\mathbf{d}}_i^q(k)]_i \pm [\mathbf{d}_i^q(k)]_i, [ \mathbf{x}^\star]_i - {\mathbf{v}}_i^q(k) \big\rangle\Big]\\
    & \le \sum_{i=1}^I \mathbb{E}\Big[\big\langle {[\mathbf{d}}_i^q(k)]_i, [ \mathbf{x}^\star]_i - {\mathbf{v}}_i^q(k) \big\rangle\big] + \sum_{i=1}^I D_i \cdot\mathbb{E}\big[\|[\bar{\mathbf{d}}_i^q(k)]_i - [{\mathbf{d}}_i^q(k)]_i\|\Big],
     \end{aligned}
     \end{equation}
  where the inequality is due to~\eqref{eq:independent-gradient} and $\|[ \mathbf{x}^\star]_i - {\mathbf{v}}_i^q(k)\| \le D_i$. Therefore, it follows that
  \begin{equation}\label{ineq:regret-1-continuous-regret-decomp-3}
      \begin{aligned}
              &\sum_{q=1}^Q \sum_{k=1}^K (1-1/K)^{K-k} \cdot\mathbb{E}\big[\langle \widetilde{\mathbf{d}}^q(k), \mathbf{x}^\star - \bar{\mathbf{v}}^q(k) \rangle\big]  \\
     & \le \sum_{q=1}^Q \sum_{k=1}^K \sum_{i=1}^I \mathbb{E} \Big[\big\langle {[\mathbf{d}}_i^q(k)]_i, [\mathbf{x}^\star]_i - {\mathbf{v}}_i^q(k) \big\rangle \Big] + \sum_{q=1}^Q \sum_{k=1}^K (1-1/K)^{K-k} \sum_{i=1}^I D_i\cdot\mathbb{E}\big[{\|[\bar{\mathbf{d}}_i^q(k)]_i - [{\mathbf{d}}_i^q(k)]_i\|}\big]  \\
     & \le  \sum_{q=1}^Q\sum_{i=1}^I \sum_{k=1}^K (1-1/K)^{K-k}  D_i\cdot\mathbb{E}\big[{\|\bar{\mathbf{d}}_i^q(k) - {\mathbf{d}}_i^q(k)\|}\big] +K\sum_{i=1}^I\mathcal{R}_Q^{\mathcal{E}_i},
      \end{aligned}
  \end{equation}
  where the last inequality is due to the definition of the regret of experts $\mathcal{E}^i$. Combining the inequalities~\eqref{eq:regret-1-continuous-regret-decomp-2} and~\eqref{ineq:regret-1-continuous-regret-decomp-3} completes the proof of Lemma~\ref{lemma:regret-1-intermediate-step}.
\end{proof}

 Subsequently, we turn to bound the two key terms in~\eqref{ineq:regret-1-intermediate-step}, i.e., the stochastic gradient error $\Delta_i^1(k)$ and the consensus error $\Delta_i^2(k)$. First, by the same steps in the proof of Lemma~6 in~\cite{zhang2019online}, we~can have the following result.
 \begin{lemma}\label{lemma:regret-1-stochastic-gradient-error-bound}
     Under conditions in Theorem~\ref{thm:regret-full-info}, the stochastic gradient error $\Delta_i^1(k) := \|\nabla F_i^{q,k-1}(\bar{\mathbf{x}}^q(k)) - \bar{\mathbf{d}}_i^q(k)\|^2$ has
     \begin{align}
    \mathbb{E}\big[\Delta_i^1(k)\big] \le \begin{cases}
      \frac{C_1}{(k+4)^{2/3}},\; &\;\; 1\le k\le K/2;\\
      \frac{C_1}{(K-k+1)^{2/3}},\; &\;\; K/2+1\le k\le K,\\
    \end{cases}
  \end{align}
  where $C_1 = 4L_1^2+32\cdot(2L_1+DL_2)^2$.
 \end{lemma}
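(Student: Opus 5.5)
The plan is to follow the variance-reduction argument of Lemma~6 in \cite{zhang2019online} (itself built on \cite{mokhtari2020stochastic}), adapted to the two-phase step-size schedule of Theorem~\ref{thm:regret-full-info}. Write $\Delta_i^1(k)=\|\nabla\bar F_i^{q,k-1}(\bar{\mathbf{x}}^q(k))-\bar{\mathbf{d}}_i^q(k)\|^2$. Using the recursion \eqref{eq:d-bar}, decompose
\begin{align*}
\nabla\bar F_i^{q,k-1}(\bar{\mathbf{x}}^q(k))-\bar{\mathbf{d}}_i^q(k) &= (1-\rho_k)\big(\nabla\bar F_i^{q,k-2}(\bar{\mathbf{x}}^q(k-1))-\bar{\mathbf{d}}_i^q(k-1)\big)\\
&\quad+(1-\rho_k)\big(\nabla\bar F_i^{q,k-1}(\bar{\mathbf{x}}^q(k))-\nabla\bar F_i^{q,k-2}(\bar{\mathbf{x}}^q(k-1))\big)\\
&\quad+\rho_k\big(\nabla\bar F_i^{q,k-1}(\bar{\mathbf{x}}^q(k))-\nabla F^{t_i^{q,k}}(\bar{\mathbf{x}}^q(k))\big).
\end{align*}
Conditioning on the history, the permutation gives $\mathbb{E}[\nabla F^{t_i^{q,k}}(\bar{\mathbf{x}}^q(k))]=\nabla\bar F_i^{q,k-1}(\bar{\mathbf{x}}^q(k))$. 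This holds because $t_i^{q,k}$ is uniform over the $K-k+1$ remaining indices and $\bar{\mathbf{x}}^q(k)$ does not depend on $t_i^{q,k}$. So the third term is a zero-mean noise, and its conditional variance is at most $\rho_k^2\cdot 4L_1^2$ since $\|\nabla F^t\|\le L_1$.

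The second (drift) term is the delicate one. I would split it as $\nabla\bar F_i^{q,k-1}(\bar{\mathbf{x}}^q(k))-\nabla\bar F_i^{q,k-1}(\bar{\mathbf{x}}^q(k-1))$, bounded by $L_2\|\bar{\mathbf{v}}^q(k-1)\|/K\le L_2D/K$, plus $\nabla\bar F_i^{q,k-1}-\nabla\bar F_i^{q,k-2}$ at a fixed point. The latter equals $(\nabla\bar F_i^{q,k-1}-\nabla F^{t_i^{q,k-1}})/(K-k+1)$ and so has norm at most $2L_1/(K-k+1)$. Squaring, and using Young's inequality $\|a+b\|^2\le(1+\rho_k/2)\|a\|^2+(1+2/\rho_k)\|b\|^2$ on the first two terms, should give a recursion of the form
$$\mathbb{E}[\Delta_i^1(k)]\le(1-\rho_k/2)\,\mathbb{E}[\Delta_i^1(k-1)]+\rho_k^2\,4L_1^2+\frac{C'}{\rho_k}\Big(\frac{2L_1}{K-k+1}+\frac{DL_2}{K}\Big)^2 .$$

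Next I would solve this recursion by induction separately on the two phases. For $k\le K/2$, we have $1/(K-k+1)\le 2/K\le 2/(k+\cdot)$ and $\rho_k=2/(k+3)^{2/3}$, which mirrors the analysis of \cite{mokhtari2020stochastic}. Induction on $\mathbb{E}[\Delta_i^1(k)]\le C_1/(k+4)^{2/3}$ then closes once $C_1\ge 4L_1^2+32(2L_1+DL_2)^2$, using $(1-x)^{-2/3}$-type elementary inequalities. The base case is $\mathbb{E}[\Delta_i^1(1)]\le 4L_1^2\cdot$const, because $\bar{\mathbf{d}}(0)=0$ and $\rho_1<1$.

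For $k>K/2$ the drift term $2L_1/(K-k+1)$ dominates. Here the step size $\rho_k=1.5/(K-k+2)^{2/3}$ is chosen symmetrically, and I would run the induction in the variable $j=K-k+1$ with target $C_1/j^{2/3}$. The transition at $k=K/2+1$ is seeded by the first-phase bound at $k=K/2$, since $(K/2+4)^{-2/3}\le (K/2)^{-2/3}$. The main obstacle is this second-phase induction, because $j$ decreases while the contraction factor $(1-\rho_k/2)$ must still beat the growth of the target $C_1/j^{2/3}$. I would verify this inequality $(1-0.75 j^{-2/3})(j+1)^{-2/3}+\ldots\le j^{-2/3}$ carefully, and I expect it to hold because the noise and drift contributions scale as $j^{-4/3}$ and $j^{-4/3}$ respectively, which leaves enough slack for the constant $32$ in $C_1$.
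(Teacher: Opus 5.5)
Your proposal is correct and takes essentially the paper's route. The paper gives no separate proof and simply defers to Lemma~6 of \cite{zhang2019online}, which is exactly your argument: the martingale decomposition, Young's inequality with parameter $\rho_k/2$, the drift bound $L_2D/K$ plus a $2L_1/(\cdot)$ term, and the two-phase induction (your constants do close with the stated $C_1$). Two small slips: the identity is $\nabla\bar F_i^{q,k-1}-\nabla\bar F_i^{q,k-2}=(\nabla\bar F_i^{q,k-1}-\nabla F^{t_i^{q,k-1}})/(K-k+2)$, so your $2L_1/(K-k+1)$ is valid but loose; and in the second phase the growing target $C_1/j^{2/3}$ helps rather than hurts, so only the $\mathcal{O}(j^{-4/3})$ noise and drift terms need to be absorbed by the slack $\rho_k C_1/(2(j+1)^{2/3})$, which the given $C_1$ easily covers.
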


 In addition, using the result in Lemma~\ref{lemma:consensus-full-info}, we can also bound the consensus error $\Delta_i^2(k)$ as the following Lemma~\ref{lemma:regret-1-consensus-error-bound}.
\begin{lemma}\label{lemma:regret-1-consensus-error-bound}
Under conditions in Theorem~\ref{thm:regret-full-info}, the consensus error $\Delta_i^2(k):=\|\bar{\mathbf{d}}_i^q(k) - {\mathbf{d}}_i^q(k)\|$ has
  \begin{align}
    \mathbb{E}\big[\Delta_i^2(k)\big] \le {L_2\kappa d(\mathcal{G})}/{K},\quad\forall k \in [K].
  \end{align}
\end{lemma}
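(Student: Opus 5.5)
The plan is to unroll both recursions and compare them term by term. Both $\bar{\mathbf{d}}_i^q(k)$ in \eqref{eq:d-bar} and $\mathbf{d}_i^q(k)$ (Line~15 of Algorithm~\ref{algo:full-info-feedback}) start from $\mathbf{0}$ and use the same weights $\rho_k$ and the same sampled functions $F^{t_i^{q,k}}$. They differ only in the evaluation point, $\bar{\mathbf{x}}^q(k)$ versus $\mathbf{x}_i^q(k)$. Subtracting the two updates gives
\begin{align*}
\bar{\mathbf{d}}_i^q(k) - \mathbf{d}_i^q(k) = (1-\rho_k)\big(\bar{\mathbf{d}}_i^q(k-1) - \mathbf{d}_i^q(k-1)\big) + \rho_k\big(\nabla F^{t_i^{q,k}}(\bar{\mathbf{x}}^q(k)) - \nabla F^{t_i^{q,k}}(\mathbf{x}_i^q(k))\big).
\end{align*}
This difference therefore satisfies a linear averaging recursion driven by the gradient discrepancies.

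Next I bound the driving term. Each $F^t$ is $L_2$-smooth, and both $\bar{\mathbf{x}}^q(k)$ and $\mathbf{x}_i^q(k)$ lie in $\mathcal{M}$ by Lemma~\ref{lemma:consensus-full-info}~i). Hence
\begin{align*}
\|\nabla F^{t_i^{q,k}}(\bar{\mathbf{x}}^q(k)) - \nabla F^{t_i^{q,k}}(\mathbf{x}_i^q(k))\| \le L_2\|\bar{\mathbf{x}}^q(k) - \mathbf{x}_i^q(k)\| \le L_2\kappa d(\mathcal{G})/K,
\end{align*}
where the last step is Lemma~\ref{lemma:consensus-full-info}~ii). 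This holds deterministically, for every realization of the permutation.

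Then I induct on $k$. The claim is $\Delta_i^2(k) \le L_2\kappa d(\mathcal{G})/K$ for all $k\ge 0$. The base case holds because $\Delta_i^2(0)=0$. For the inductive step, apply the triangle inequality to the recursion and use $\rho_k\in(0,1]$:
\begin{align*}
\Delta_i^2(k) \le (1-\rho_k)\Delta_i^2(k-1) + \rho_k L_2\kappa d(\mathcal{G})/K \le L_2\kappa d(\mathcal{G})/K.
\end{align*}
The right-hand side is a convex combination of two quantities, each at most the target bound. Taking expectations gives the lemma.

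The main obstacle is checking $\rho_k\in(0,1]$ for the specified step sizes. For $1\le k\le K/2+1$ we have $\rho_k = 2/(k+3)^{2/3} \le 2/4^{2/3} < 1$. For $k\ge K/2+2$ we have $\rho_k = 1.5/(K-k+2)^{2/3}$; at $k=K$ this is $1.5/2^{2/3}\approx 0.945$, and it is smaller for smaller $k$. So the convex-combination argument applies at every step.
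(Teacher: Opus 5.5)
Your proposal is correct and follows essentially the same route as the paper. You bound the gradient discrepancy deterministically via $L_2$-smoothness and Lemma~\ref{lemma:consensus-full-info}~ii), then induct on $k$ with the triangle inequality on the averaging recursion; your explicit check that $\rho_k\in(0,1]$ for the prescribed step sizes is a welcome addition, since the paper uses $1-\rho_k\ge 0$ without stating it.
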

\begin{proof}
  By Lemma~\ref{lemma:consensus-full-info} statement ii) and $L_2$-smoothness of each function $F^t(\cdot)$, it holds for $\forall t\in [T]$ that
  \begin{equation}\label{ineq:consensus-error-1}
    \begin{aligned}
    \|{\nabla} {F^{t}}(\bar{\mathbf{x}}^q(k)) - {\nabla} {F^{t}}(\mathbf{x}_i^q(k))\| &\le L_2 \|\bar{\mathbf{x}}^q(k) - \mathbf{x}_i^q(k)\|  \le {L_2\kappa d(\mathcal{G})}/{K}.
  \end{aligned}
  \end{equation}

  Next, we prove the statement in~Lemma~\ref{lemma:regret-1-consensus-error-bound} by mathematical induction. Recall that both $\bar{\mathbf{d}}^q(0)$ and ${\mathbf{d}}_i^q(0)$ are initialized~as zeros. By~\eqref{ineq:consensus-error-1}, it is easy to see that $\Delta_i^2(1) \le {L_2\kappa d(\mathcal{G})}/{K}$. Now, suppose that $\Delta_i^2(k-1) \le {L_2\kappa d(\mathcal{G})}/{K}$ holds, it follows by triangle inequality that
  \begin{equation}
  \begin{aligned}
    \Delta_i^2(k) &\le (1-\rho_k)\cdot \|\bar{\mathbf{d}}_i^q(k-1) - {\mathbf{d}}_i^q(k-1)\| + \rho_k \cdot\|{\nabla} F^{t_i^{q,k}}(\bar{\mathbf{x}}^q(k)) - {\nabla} F^{t_i^{q,k}}({\mathbf{x}}_i^q(k))\|  \le {L_2\kappa d(\mathcal{G})}/{K},
  \end{aligned}    
  \end{equation}
  which completes the proof.
\end{proof}

With the help of Lemmas~\ref{lemma:regret-1-stochastic-gradient-error-bound} and~\ref{lemma:regret-1-consensus-error-bound}, we are now able to~prove the statement in Theorem~\ref{thm:regret-full-info}. Let $\{\beta(k)\}_{k\in [K]}$ in Lemma~\ref{lemma:regret-1-key-step-1} be
  \begin{align}\label{eq:beta}
        \beta(k) = \begin{cases}
      (k+4)^{-1/3},\; &\text{when}\;\; 1\le k\le K/2;\\
      (K-k+1)^{-1/3},\; &\text{when}\;\;K/2+1\le k\le K,
    \end{cases}
  \end{align}
it holds that $\sum_{k=1}^{K/2} \beta(k) \hspace{-1pt}\le\hspace{-1pt} K^{2/3}$ and $\sum_{k=K/2+1}^{K} \beta(k) \hspace{-1pt}\le\hspace{-1pt} K^{2/3}$. Further, by Lemma~\ref{lemma:regret-1-stochastic-gradient-error-bound}, we have
\begin{equation}
\begin{aligned}
    \sum_{k=1}^{K} \frac{\mathbb{E}\big[\Delta_i^1(k)\big]}{\beta(k)} &= \sum_{k=1}^{K/2}  \frac{C_1}{(k+4)^{1/3}} + \sum_{k=K/2+1}^{K}  \frac{C_1}{(K-k+1)^{1/3}}  \le 2C_1 K^{2/3}.
\end{aligned}    
\end{equation}
Therefore, it follows from Lemmas~\ref{lemma:regret-1-intermediate-step} and~\ref{lemma:regret-1-consensus-error-bound} that
\begin{equation}
    \begin{aligned}
        \mathbb{E}[\mathcal{R}^c_T] &\le Q C_1 K^{2/3} + Q L_2\kappa d(\mathcal{G})\sum_{i=1}^I D_i + K\sum_{i=1}^I\mathcal{R}_Q^{\mathcal{E}_i} + QD^2K^{2/3} + L_2QD^2/2\\
        &\le \big(C_1 + C_0I+ D^2\big) \cdot T^{4/5} + \big(L_2D^2/2 + L_2\kappa d(\mathcal{G})\sum_{i=1}^I D_i\big)\cdot T^{2/5},
    \end{aligned}
\end{equation}
where the last inequality is due to $Q = T^{2/5}$, $K = T^{3/5}$, and also the regret bound of experts, i.e., $\mathcal{R}_Q^{\mathcal{E}_i} \le C_0 \sqrt{Q}$ (see~\eqref{ineq:expert-regret} in Sec.~\ref{subsec:FW-expert}). Finally, combining the fact that $\mathbb{E} [\mathcal{R}_T] \le \mathbb{E} [\mathcal{R}^c_T]$ in \eqref{eq:regret-1-continous-regret}, Theorem~\ref{thm:regret-full-info} has been proved.

\setcounter{subsection}{0}
\section*{Appendix II: Proofs Pertaining to Algorithm~\ref{algo:bandit-feedback}}
\addcontentsline{toc}{section}{Appendix II: Proofs Pertaining to Algorithm~\protect\ref{algo:bandit-feedback}}
\subsection{Proof of Lemma~\ref{lemma:consensus-bandit}}

 Due to the initialization of $\mathbf{x}_i^q(1)$ and the update of $[\mathbf{x}_i^q(k)]_i$ (see Lines 3 and 6 in Algorithm~\ref{algo:bandit-feedback}), it is straightforward to~have that $[\mathbf{x}_i^q(k+1)]_i = \sum_{l=1}^k \mathbf{v}_i^q(l)/K + (1-k/K) [\mathbf{x}_i^q(1)]_i$, which is convex combination of $[\mathbf{x}_i(1)]_i, \mathbf{v}_i^q(1), \cdots, \mathbf{v}_i^q(k)$. Provided that $[\mathbf{x}_i(1)]_i \in \widetilde{\mathcal{M}}_i$ and $\mathbf{v}_i^q(k) \in \widetilde{\mathcal{M}}_i ,\forall k\in[K]$, it follows by the convexity of set $\widetilde{\mathcal{M}}_i$ that $[\mathbf{x}_i^q(k+1)]_i \in \widetilde{\mathcal{M}}_i ,\forall k\in[K]$ and $i \in [I]$. The rest of this proof can be completed by following the same steps as the proof of Lemma~\ref{lemma:consensus-full-info} in Appendix I-A.

\subsection{Proof of Theorem~\ref{thm:regret-bandit}}
Similar to the previous proof, we first note that the~diameters of the polytopes~$\widetilde{\mathcal{M}}_i$ and $\widetilde{\mathcal{M}}$ are $\alpha D_i$~and $ \alpha D$, respectively. Given that $\alpha < 1$, $D_i$ and $D$ are always valid upper bounds of the diameters, and thus we directly use  $ D_i$~and $  D$ in the following proofs for simplicity. Furthermore, let us recall that $\gamma = \min_{i \in [I]}\big\{\min\{1, \kappa_i / \sqrt{N_i}\}\big\}$ and the definition of $\mathcal{M}$, it can be verified that $ \gamma \mathbb{B}^N_{\ge 0} \subseteq \mathcal{M}$. Therefore, according to Lemma 1 in \cite{zhang2019online}, with the parameters~$\alpha$ and $\delta$ specified~as~in Theorem~\ref{thm:regret-bandit}, the distance between the two sets $\mathcal{M}$ and $\widetilde{\mathcal{M}}$ has
\begin{align}\label{ineq:distance-between-two-sets}
    \text{dist}(\mathcal{M}, \widetilde{\mathcal{M}}) := \max_{\mathbf{x} \in \mathcal{M}, \mathbf{y} \in \widetilde{\mathcal{M}}} \;\|\mathbf{x} - \mathbf{y}\| \le C_\mathcal{M} \delta,
\end{align}
where $C_\mathcal{M} = (\sqrt{\kappa}/\gamma + 1)\sqrt{N} + \sqrt{\kappa}/\gamma$.

Next, we define a $\delta$-smoothed version of the function $F^t(\cdot)$, i.e., $\widehat{F}^t(\mathbf{x}):=\mathbb{E}_{\mathbf{v} \sim \text{Uni}(\mathbb{B}^N)} [F^t(\mathbf{x} + \delta \mathbf{v})]$, and based~on Lemma~1 in \cite{chen2020black}, we have the following general result regarding $\widehat{F}^t(\cdot)$.
\begin{lemma}\label{lemma:delta-smoothed-function}
    Suppose that $F^t(\cdot)$ is a $L_1$-Lipschitz and monotone continuous DR-submodular function, then so is $\widehat{F}^t(\cdot)$,~and for $\forall \mathbf{x} \in \widetilde{\mathcal{M}}$, it holds that $|\widehat{F}^t(\mathbf{x}) - {F}^t(\mathbf{x})| \le L_1\delta$.
\end{lemma}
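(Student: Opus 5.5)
The plan is to derive both claims of Lemma~\ref{lemma:delta-smoothed-function} directly from the definition $\widehat{F}^t(\mathbf{x}) = \mathbb{E}_{\mathbf{v}\sim \text{Uni}(\mathbb{B}^N)}[F^t(\mathbf{x}+\delta\mathbf{v})]$. The idea is that $\widehat{F}^t$ is an average of translated copies $\mathbf{x}\mapsto F^t(\mathbf{x}+\delta\mathbf{v})$. Each of the three properties (Lipschitz continuity, monotonicity, DR-submodularity) is preserved under translation and under averaging. We only need every translated point to lie in the region where $F^t$ enjoys these properties. For $\mathbf{x}\in\widetilde{\mathcal{M}}$ this holds because $\mathbf{x}+\delta\mathbf{v}\in\mathcal{M}\subseteq[0,1]^N$ for every $\mathbf{v}\in\mathbb{B}^N$; this follows from Lemma~1 in \cite{zhang2019online} applied to the ball rather than only the sphere, since $\delta\mathbf{v}$ is a convex combination of points $\delta\mathbf{u}$ with $\mathbf{u}\in\mathbb{S}^{N-1}$ and $\mathcal{M}$ is convex. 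Alternatively, one may note that the multi-linear extension $F^t$ is a polynomial defined on all of $\mathbb{R}^N$, and work with the properties wherever they hold.

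The steps, in order, are as follows.

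\textbf{(a) Lipschitz continuity.} For $\mathbf{x},\mathbf{y}$ we have $|\widehat{F}^t(\mathbf{x})-\widehat{F}^t(\mathbf{y})| \le \mathbb{E}|F^t(\mathbf{x}+\delta\mathbf{v})-F^t(\mathbf{y}+\delta\mathbf{v})| \le L_1\|\mathbf{x}-\mathbf{y}\|$.

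\textbf{(b) Monotonicity.} If $\mathbf{x}\preceq\mathbf{y}$, then $\mathbf{x}+\delta\mathbf{v}\preceq\mathbf{y}+\delta\mathbf{v}$ pointwise in $\mathbf{v}$. Monotonicity of $F^t$ therefore holds inside the expectation, and taking expectations gives $\widehat{F}^t(\mathbf{x})\le\widehat{F}^t(\mathbf{y})$.

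\textbf{(c) DR-submodularity.} For the multi-linear extension of a submodular function, the Hessian has nonpositive entries; the diagonal entries vanish and the off-diagonal entries are nonpositive. Differentiating under the expectation (justified since $F^t$ is smooth on a compact set) gives $\nabla^2\widehat{F}^t(\mathbf{x}) = \mathbb{E}[\nabla^2F^t(\mathbf{x}+\delta\mathbf{v})]$, which is entrywise nonpositive. Equivalently, the DR inequality $\nabla F^t(\mathbf{x}')\succeq\nabla F^t(\mathbf{y}')$ for $\mathbf{x}'\preceq\mathbf{y}'$ passes through the average.

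\textbf{(d) Approximation bound.} Using the $L_1$-Lipschitz property,
$|\widehat{F}^t(\mathbf{x})-F^t(\mathbf{x})| \le \mathbb{E}|F^t(\mathbf{x}+\delta\mathbf{v})-F^t(\mathbf{x})| \le L_1\delta\,\mathbb{E}\|\mathbf{v}\| \le L_1\delta.$

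The main obstacle is the domain issue. The properties of $F^t$ (in particular the Lipschitz constant $L_1$, which is derived on the cube) are established on $[0,1]^N$, so we must ensure the shifted points stay there. I would handle this first by the containment $\mathbf{x}+\delta\mathbb{B}^N\subseteq\mathcal{M}$ for $\mathbf{x}\in\widetilde{\mathcal{M}}$, as argued above. For statements (a)--(c) on all of $\widetilde{\mathcal{M}}$, the inequalities then only ever compare points inside $\mathcal{M}$.
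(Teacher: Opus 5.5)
Your proof is correct and is essentially the argument behind Lemma~1 of \cite{chen2020black}, which the paper cites without proof: each property passes from the translates $\mathbf{x}\mapsto F^t(\mathbf{x}+\delta\mathbf{v})$ to their average, and the bound $|\widehat{F}^t(\mathbf{x})-F^t(\mathbf{x})|\le L_1\delta$ follows from Lipschitz continuity together with $\mathbb{E}\|\mathbf{v}\|\le 1$. Your explicit check that $\mathbf{x}+\delta\mathbb{B}^N\subseteq\mathcal{M}$ for $\mathbf{x}\in\widetilde{\mathcal{M}}$ is needed because monotonicity, DR-submodularity and the constant $L_1$ hold for $F^t$ only on $[0,1]^N$; the paper leaves this implicit, and you handle it correctly.
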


With the help of the above Lemma~\ref{lemma:delta-smoothed-function}, we are now ready to analyze the regret in~Theorem~\ref{thm:regret-bandit}. Let $\mathcal{S}^\star$ and $\mathbf{x}^\star$ be defined as same as before and $\mathbf{x}_\delta^\star \in \arg\max_{\mathbf{x} \in \widetilde{\mathcal{M}}} \sum_{t=1}^T F^t(\mathbf{x})$, then by the same reasons behind the inequality~\eqref{eq:regret-1-continous-regret}, we have
\begin{equation}\label{ineq:eq:regret-2-continuous-regret-decomp}
    \begin{aligned}
        \mathbb{E}[\mathcal{R}_T] &\le  \mathbb{E}\Big[(1-1/e)  \sum_{t=1}^T  F^t (\mathbf{x}^\star) - \sum_{t=1}^T F^t\big(\bar{\mathbf{s}}^t\big)\Big] \\
        & =\mathbb{E}\Big[(1-1/e) \sum_{t=1}^T \big(F^t(\mathbf{x}^\star) \pm F^t(\mathbf{x}_\delta^\star) \pm \widehat{F}^t(\mathbf{x}_\delta^\star)\big) - \sum_{t=1}^T \big(F^t(\bar{\mathbf{s}}^t) \pm \widehat{F}^t(\bar{\mathbf{s}}^t)\big) \Big]\\
       &\le \mathbb{E}\Big[(1-1/e) \sum_{t=1}^T \big(F^t(\mathbf{x}^\star) - F^t(\mathbf{x}_\delta^\star)\big) + \sum_{t=1}^T\big((1-1/e) \widehat{F}^t(\mathbf{x}_\delta^\star) - \widehat{F}^t(\bar{\mathbf{s}}^t)\big) \Big] + (2-1/e)TL_1\delta\\
       & \le \mathbb{E}\Big[\sum_{t=1}^T\big((1-1/e) \widehat{F}^t(\mathbf{x}_\delta^\star) - \widehat{F}^t(\bar{\mathbf{s}}^t)\big) \Big] + (1-1/e)TL_1  C_\mathcal{M} \delta + (2-1/e)TL_1\delta,
    \end{aligned}
\end{equation}
  where the second inequality follows from~Lemma~\ref{lemma:delta-smoothed-function} and the last one follows from the facts that each $F^t(\cdot)$ is $L_1$-Lipschitz and $\|\mathbf{x}^\star - \mathbf{x}_\delta^\star\| \le \text{dist}(\mathcal{M}, \widetilde{\mathcal{M}}) \le C_\mathcal{M}\delta$.

  Subsequently, let $\widehat{\mathcal{R}}^c_T: = \sum_{t=1}^T  \big((1-1/e)\widehat{F}^{t}(\mathbf{x}_\delta^\star) - \widehat{F}^{t}(\bar{\mathbf{s}}^t)\big)$ and following the same analysis in the proof of Theorem~\ref{thm:regret-full-info} in Appendix I-B, we provide the result analogous to Lemma~\ref{lemma:regret-1-intermediate-step}. To proceed, we first define $\bar{F}_i^{q,k}(\mathbf{x}): = \sum_{l=k+1}^L \widehat{F}^{t_i^{q,l}}(\mathbf{x})/(L-k)$ for $\forall k = 0,1,\cdots K-1$. In addition, we overload the vector $\bar{\mathbf{d}}_i^q(k) \in \mathbb{R}^N$ which is given by the following~recursion with initialization $\bar{\mathbf{d}}_i^q(0) = \mathbf{0}$,
  \begin{equation}\label{eq:d-bar-2}
  \begin{aligned}
    \bar{\mathbf{d}}_i^q(k) &= (1-\rho_k)\bar{\mathbf{d}}_i^q(k-1) + \rho_k \cdot N/\delta \cdot F^{t^i_{q,k}}\big(\bar{\mathbf{x}}^q(k) + \delta\mathbf{u}_i^q(k)\big) \cdot\mathbf{u}_i^q(k).
  \end{aligned}
  \end{equation}
  Note that the randomness of $\bar{\mathbf{d}}_i^q(k)$  defined in~\eqref{eq:d-bar-2} comes from the random permutation $\{t_{q,1}^i, t_{q,2}^i, \cdots, t_{q,L}^i\}$ and the uniform sample $\mathbf{u}_i^q(k)$ from the sphere $\mathbb{S}^{N-1}$. However, both samples are independent across all agents and iterations, and therefore, the equality~\eqref{eq:independent-gradient} still holds in this case. Now, we are ready to show the intermediate result for the upper bound of $\mathbb{E}[\widehat{\mathcal{R}}_T^c]$, whose proof can be completed by the same steps as the one for Lemma~\ref{lemma:regret-1-intermediate-step} and thus is omitted for simplicity.
  \begin{lemma}\label{lemma:regret-2-intermediate-step}
  Let $\widehat{\mathcal{R}}_T^c$ be defined as above, then it holds that
    \begin{equation}\label{ineq:regret-2-intermediate-step}
        \begin{aligned}
                \mathbb{E}[\widehat{\mathcal{R}}^c_T] 
                &\le L/K\cdot\sum_{q=1}^Q \sum_{i=1}^I\sum_{k=1}^K \frac{1}{2\beta(k)I} \mathbb{E}\Big[{\underbrace{\|\nabla \bar{F}_i^{q,k-1}(\bar{\mathbf{x}}^q(k)) - \bar{\mathbf{d}}_i^q(k)\|^2}_{:= \Delta_i^1(k) }}\Big] \\ 
    &\quad+ L/K\cdot\sum_{q=1}^Q \sum_{i=1}^I\sum_{k=1}^K (1-1/K)^{K-k}  D_i\cdot\mathbb{E}\Big[\underbrace{\|\bar{\mathbf{d}}_i^q(k) - {\mathbf{d}}_i^q(k)\|}_{:=\Delta_i^2(k) }\Big]\\
    &\quad+ L\sum_{i=1}^I\mathcal{R}_Q^{\mathcal{E}_i}+ \frac{LQD^2}{2K}\sum_{k=1}^K \beta(k) + \frac{LQD^2L_2}{2K},
        \end{aligned}
    \end{equation}
      where the regret of experts has $\mathcal{R}_Q^{\mathcal{E}_i} \le C_0 \sqrt{Q}$.
  \end{lemma}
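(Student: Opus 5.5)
The plan is to mirror the proof of Lemma~\ref{lemma:regret-1-intermediate-step}, with two adjustments: the reindexing from $T$ time steps to $Q$ blocks of length $L$, and the fact that only $K$ of the $L$ steps in each block feed the Frank--Wolfe recursion.

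\textbf{Step 1: reduce the regret to block averages.} In Algorithm~\ref{algo:bandit-feedback}, $\bar{\mathbf{s}}^t=\bar{\mathbf{x}}^q(K+1)$ for every $t$ in block $q$. The set $\{t_i^{q,1},\dots,t_i^{q,L}\}$ is a permutation of that block, so $\bar{F}_i^{q,0}(\cdot)=\frac1L\sum_{t\in\text{block }q}\widehat F^t(\cdot)$ for every $i$. Averaging over $i$ then gives
\[
\mathbb{E}[\widehat{\mathcal{R}}^c_T]=\frac{L}{I}\sum_{q=1}^Q\sum_{i=1}^I\mathbb{E}\Big[(1-1/e)\bar F_i^{q,0}(\mathbf{x}_\delta^\star)-\bar F_i^{q,0}\big(\bar{\mathbf{x}}^q(K+1)\big)\Big].
\]

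\textbf{Step 2: apply the per-block bound.} For each $(q,i)$ I invoke the analogue of Lemma~\ref{lemma:regret-1-key-step-1}, with three changes:
\begin{itemize}
\item $\widehat F^t$ replaces $F^t$; by Lemma~\ref{lemma:delta-smoothed-function} it is still monotone DR-submodular, $L_1$-Lipschitz and $L_2$-smooth.
\item The comparator is $\mathbf{x}_\delta^\star\in\widetilde{\mathcal M}$ and the iterates satisfy $\bar{\mathbf{x}}^q(k+1)=\bar{\mathbf{x}}^q(k)+(\bar{\mathbf v}^q(k)-\delta\mathbf 1)/K$. The Frank--Wolfe chain therefore starts at $\delta\mathbf 1$ and moves in directions $\bar{\mathbf v}^q(k)-\delta\mathbf 1$; the diameter bound $D$ still applies. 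The $(1-1/K)^{K-k}$ telescoping uses monotonicity and concavity along nonnegative directions, so it goes through, and $\widehat F\ge 0$ absorbs the initial term.
\item The averages $\bar F_i^{q,k-1}$ run over the remaining $L-k+1$ indices. This is the only real bookkeeping change from Lemma~\ref{lemma:regret-1-key-step-1}.
\end{itemize}
Multiplying the per-block bound by $L$ turns each $1/K\sum_k$ into $L/K\sum_k$, which produces the prefactors $L/K$, $LQD^2\sum_k\beta(k)/(2K)$ and $LQD^2L_2/(2K)$.

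\textbf{Step 3: handle the linear term.} The linear term $\frac{L}{K}\sum_k(1-1/K)^{K-k}\mathbb{E}\langle\widetilde{\mathbf d}^q(k),\mathbf{x}_\delta^\star-\bar{\mathbf v}^q(k)\rangle$ is split blockwise exactly as in Lemma~\ref{lemma:regret-1-intermediate-step}. I add and subtract $[\bar{\mathbf d}_i^q(k)]_i$ and $[\mathbf d_i^q(k)]_i$, then use~\eqref{eq:independent-gradient}. That identity still holds because the permutations and the sphere samples $\mathbf u_i^q(k)$ are independent across agents. Cauchy--Schwarz with $D_i$ yields the consensus term $\Delta_i^2(k)$. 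What remains is $\sum_q\langle[\mathbf d_i^q(k)]_i,[\mathbf{x}_\delta^\star]_i-\mathbf v_i^q(k)\rangle$, which is the linear regret of expert $\mathcal E_i(k)$ over $\widetilde{\mathcal M}_i$. Bounding $(1-1/K)^{K-k}\le1$ and summing over $k$ gives $L/K\cdot K\sum_i\mathcal R_Q^{\mathcal E_i}=L\sum_i\mathcal R_Q^{\mathcal E_i}$, and~\eqref{ineq:expert-regret} gives $\mathcal R_Q^{\mathcal E_i}\le C_0\sqrt Q$.

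\textbf{Main obstacle.} The delicate step is Step 2: checking that the Frank--Wolfe telescoping survives the shifted start $\delta\mathbf 1$ and the comparator living in $\widetilde{\mathcal M}$, and that the smoothness term is still $L_2D^2/(2K)$ per block. I would handle it by applying the original argument to the translated functions $\mathbf y\mapsto\widehat F(\delta\mathbf 1+\mathbf y)$ over $\alpha\mathcal M$. These are still monotone DR-submodular and nonnegative, so the proof of Lemma~4 in~\cite{zhang2019online} transfers verbatim.
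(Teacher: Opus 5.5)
Your proposal is correct and follows essentially the same route as the paper, which omits this proof and says only that it repeats the steps of Lemma~\ref{lemma:regret-1-intermediate-step}: block length $L$ in place of $K$ produces the $L/K$ prefactors, and the linear term is split blockwise via \eqref{eq:independent-gradient} into the consensus term and the expert-regret term. Your translation $\mathbf{y}\mapsto\widehat F(\delta\mathbf{1}+\mathbf{y})$ over $\alpha\mathcal{M}$ is a clean way to justify the shifted Frank--Wolfe start that the paper leaves implicit; note only that $L_2$-smoothness of $\widehat F^t$ is not part of Lemma~\ref{lemma:delta-smoothed-function}, though it follows immediately because averaging preserves it.
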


   Similarly, we now turn to bound the two terms in~\eqref{ineq:regret-2-intermediate-step}, i.e., the stochastic gradient error $\Delta_i^1(k)$ and consensus error $\Delta_i^2(k)$.
 \begin{lemma}\label{lemma:regret-2-stochastic-gradient-error-bound}
     Under conditions in Theorem~\ref{thm:regret-bandit}, the stochastic gradient error $\Delta_i^1(k) := \|\nabla F_i^{q,k-1}(\bar{\mathbf{x}}^q(k)) - \bar{\mathbf{d}}_i^q(k)\|^2$ has
     \begin{align}
    \mathbb{E}\big[\Delta_i^1(k)\big] \le \frac{C_2\delta^{-2} + C_3}{(k+3)^{2/3}},
  \end{align}
  where $C_2 = 2\sqrt[3]{16} L_1^2N^2 $, $C_3 = \sqrt[3]{16}\big(2L_1^2 + (2L_1 + 3DL_2)^2\big)$.
 \end{lemma}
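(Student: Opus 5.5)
The plan follows the variance-reduction argument behind Lemma~6 of \cite{zhang2019online} (the bandit analogue of Lemma~\ref{lemma:regret-1-stochastic-gradient-error-bound}), run on the averaged recursion \eqref{eq:d-bar-2}. The target is a one-step contraction for $\mathbb{E}[\Delta_i^1(k)]$, followed by induction against $(k+3)^{-2/3}$.

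\textbf{Error decomposition.} Write $\mathbf{g}_k := N/\delta\cdot F^{t_i^{q,k}}(\bar{\mathbf{x}}^q(k)+\delta\mathbf{u}_i^q(k))\,\mathbf{u}_i^q(k)$. Subtracting the recursion \eqref{eq:d-bar-2} from $\nabla\bar F_i^{q,k-1}(\bar{\mathbf{x}}^q(k))$ gives
\[
\nabla\bar F_i^{q,k-1}(\bar{\mathbf{x}}^q(k))-\bar{\mathbf{d}}_i^q(k)=(1-\rho_k)\big(\nabla\bar F_i^{q,k-2}(\bar{\mathbf{x}}^q(k-1))-\bar{\mathbf{d}}_i^q(k-1)\big)+(1-\rho_k)\mathbf{w}_k+\rho_k\big(\nabla\bar F_i^{q,k-1}(\bar{\mathbf{x}}^q(k))-\mathbf{g}_k\big).
\]
The drift term is $\mathbf{w}_k:=\nabla\bar F_i^{q,k-1}(\bar{\mathbf{x}}^q(k))-\nabla\bar F_i^{q,k-2}(\bar{\mathbf{x}}^q(k-1))$.

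\textbf{Controlling the drift $\mathbf{w}_k$.} Split $\mathbf{w}_k$ into two pieces. The first is the iterate movement, $\|\nabla\bar F_i^{q,k-1}(\bar{\mathbf{x}}^q(k))-\nabla\bar F_i^{q,k-1}(\bar{\mathbf{x}}^q(k-1))\|\le L_2\|\bar{\mathbf{v}}^q(k-1)-\delta\mathbf{1}_N\|/K\le L_2D/K$. This uses $L_2$-smoothness of $\widehat F^t$, which is inherited from $F^t$ by averaging. The second is the change of averaging window, $\nabla\bar F_i^{q,k-1}-\nabla\bar F_i^{q,k-2}$ at a fixed point. Because the permutation is uniform over $L$ slots, this equals $(\nabla\bar F_i^{q,k-2}-\nabla\widehat F^{t_i^{q,k-1}})/(L-k+1)$, which has norm at most $2L_1/(L-k+1)$. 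Since $K\le L/2$ for the stated parameters, this is $\mathcal{O}(L_1/K)$. Hence $\|\mathbf{w}_k\|\le (2L_1+DL_2)/K$ up to constants, which is the source of the $(2L_1+3DL_2)$ factor in $C_3$.

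\textbf{Noise and bias in the estimator.} Conditioned on the past, the unused index $t_i^{q,k}$ is uniform over the remaining $L-k+1$ indices, and $\mathbf{u}_i^q(k)$ is independent of it. Lemma~6.4 of \cite{hazan2016introduction} then gives $\mathbb{E}[\mathbf{g}_k\mid\mathcal{F}_{k-1}]=\nabla\bar F_i^{q,k-1}(\bar{\mathbf{x}}^q(k))$ exactly. Here the smoothing in $\widehat F$ is precisely what makes the estimator unbiased. Its second moment is bounded by $\|\mathbf{g}_k\|^2\le N^2 (f^{\max})^2/\delta^2\le L_1^2N^2/\delta^2$, using $F^t\le f^{\max}$ on $\mathcal{M}$ and the fact that the perturbed point stays in $\mathcal{M}$ by the $\delta$-interior property. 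Expanding the square, the cross term with the zero-mean noise vanishes. Young's inequality with parameter $\rho_k/2$ on the cross term between the contracted error and $\mathbf{w}_k$ then yields
\[
\mathbb{E}[\Delta_i^1(k)]\le(1-\rho_k/2)\mathbb{E}[\Delta_i^1(k-1)]+\tfrac{2}{\rho_k}\|\mathbf{w}_k\|^2+\rho_k^2\big(2L_1^2N^2\delta^{-2}+2L_1^2\big).
\]

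\textbf{Induction.} With $\rho_k=2/(k+3)^{2/3}$ and $K=T^{2/3}$, the drift contribution is $\frac{(k+3)^{2/3}}{K^2}\cdot\mathcal{O}(1)\le\mathcal{O}((k+3)^{-4/3})$ since $k\le K$. The remaining terms are also $\mathcal{O}((k+3)^{-4/3})$. I would invoke the standard recursion lemma: if $a_k\le(1-c/(k+3)^{2/3})a_{k-1}+b/(k+3)^{4/3}$, then $a_k\le B/(k+3)^{2/3}$ for a suitable $B$. This is Lemma~17 of \cite{chen2018projection}, and it is the source of the $\sqrt[3]{16}$ factor. The base case $k=1$ follows directly from the second-moment bound. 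Tracking constants gives $B=C_2\delta^{-2}+C_3$.

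\textbf{Main obstacle.} The delicate step is conditional unbiasedness when the window average $\bar F_i^{q,k-1}$ and the iterate $\bar{\mathbf{x}}^q(k)$ are both random. The issue is that $\bar{\mathbf{x}}^q(k)$ is built from all agents' experts, which were fed by earlier estimates. The proof relies on the fact that $\bar{\mathbf{x}}^q(k)$ is determined by information from previous blocks and by $\mathbf{v}^q(\cdot)$. The experts' outputs in block $q$ depend only on feedback from blocks $1,\dots,q-1$, so they are independent of the current permutation and sphere samples. This measurability needs to be stated carefully before the expansion above is justified.
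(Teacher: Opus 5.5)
Your proposal is correct and follows essentially the same route as the paper, whose proof simply defers to the variance-reduction argument of Lemma~11 in \cite{zhang2019online}. That argument contracts the previous error and controls a drift of order $1/K$, which comes from the iterate step and the $1/(L-k+1)$ shift of the averaging window. It then uses conditional unbiasedness of the one-point estimator for $\widehat{F}^t$, with second moment of order $N^2\delta^{-2}$, and closes the $(k+3)^{-2/3}$ recursion. One caution on your claim that tracking constants gives exactly $C_2\delta^{-2}+C_3$. With contraction only $1-\rho_k/2$ and noise bound $2L_1^2N^2\delta^{-2}+2L_1^2$, your recursion has steady state about $8L_1^2N^2\delta^{-2}/(k+3)^{2/3}$. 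This exceeds $C_2\delta^{-2}/(k+3)^{2/3}$, where $C_2=2\sqrt[3]{16}L_1^2N^2\approx 5.04L_1^2N^2$. To fix this, use the sharper bound $\mathbb{E}\|\mathbf{g}_k\|^2\le N^2(f^{\max})^2/\delta^2$, or choose the Young parameter so that the contraction is $1-\rho_k$.
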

 \begin{proof}
     This proof follows the same steps as the one for Lemma~11 in~\cite{zhang2019online} and thus is omitted for simplicity.
 \end{proof}
 \begin{lemma}\label{lemma:regret-2-consensus-error-bound}
Under conditions in Theorem~\ref{thm:regret-bandit}, the consensus error $\Delta_i^2(k):=\|\bar{\mathbf{d}}_i^q(k) - {\mathbf{d}}_i^q(k)\|$ has
  \begin{align}
    \mathbb{E}\big[\Delta_i^2(k)\big] \le \frac{L_1\kappa d(\mathcal{G})N}{K\delta},\quad\forall k \in [K].
  \end{align}
\end{lemma}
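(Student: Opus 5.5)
We prove Lemma~\ref{lemma:regret-2-consensus-error-bound} by induction on $k$, mirroring the proof of Lemma~\ref{lemma:regret-1-consensus-error-bound}. The one change is that the exact gradients are replaced by one-point estimates, so the smoothness constant $L_2$ is replaced by a Lipschitz-type constant of the estimator.

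Compare the two recursions. The vector $\bar{\mathbf{d}}_i^q(k)$ in~\eqref{eq:d-bar-2} uses the term $\frac{N}{\delta}F^{t}(\bar{\mathbf{x}}^q(k)+\delta\mathbf{u}_i^q(k))\mathbf{u}_i^q(k)$. The vector $\mathbf{d}_i^q(k)$ in Line~17 of Algorithm~\ref{algo:bandit-feedback} uses $\frac{N_i}{\delta}R_i(k)\mathbf{u}_i^q(k)$, where $\mathbb{E}[R_i(k)\mid \mathbf{u}_i^q(k)]=F^t(\mathbf{x}_i^q(k)+\delta\mathbf{u}_i^q(k))$ by the unbiasedness of random rounding. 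I read the intended estimator as the one with factor $N$, consistent with $\nabla\widehat{F}_\delta$, or note that $N_i\le N$. With this reading, both recursions share the same $\rho_k$, the same permutation index $t=t_i^{q,k}$ and the same direction $\mathbf{u}_i^q(k)$. They differ only in the point at which $F^t$ is queried.

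The key one-step bound is as follows. Since $\|\mathbf{u}_i^q(k)\|=1$ and $F^t$ is $L_1$-Lipschitz, we have
\begin{align*}
\Big\|\tfrac{N}{\delta}\big(F^t(\bar{\mathbf{x}}^q(k)+\delta\mathbf{u})-F^t(\mathbf{x}_i^q(k)+\delta\mathbf{u})\big)\mathbf{u}\Big\|\le \tfrac{NL_1}{\delta}\|\bar{\mathbf{x}}^q(k)-\mathbf{x}_i^q(k)\|\le \tfrac{NL_1\alpha\kappa d(\mathcal{G})}{K\delta},
\end{align*}
where the last step uses Lemma~\ref{lemma:consensus-bandit}~ii) and $\alpha<1$.

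The induction then goes as before. Both recursions start from $\mathbf{0}$, so $\Delta_i^2(1)$ is at most the one-step bound. If $\Delta_i^2(k-1)\le L_1\kappa d(\mathcal{G})N/(K\delta)$, the triangle inequality gives
\begin{align*}
\Delta_i^2(k)\le(1-\rho_k)\Delta_i^2(k-1)+\rho_k\,\tfrac{NL_1\kappa d(\mathcal{G})}{K\delta}\le \tfrac{L_1\kappa d(\mathcal{G})N}{K\delta},
\end{align*}
because $\rho_k=2/(k+3)^{2/3}\in(0,1]$. Taking expectations yields the stated bound.

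The main obstacle is that $\mathbf{d}_i^q(k)$ uses the discrete reward $R_i(k)=f^t(\mathcal{X}_i^q(k))$ rather than $F^t$ evaluated at the sample. A pathwise bound therefore fails, and only the conditional mean matches. I would handle this in one of two ways. The first is to interpret $\Delta_i^2$ with the rounding randomness averaged out, i.e.\ compare $\mathbb{E}[\mathbf{d}_i^q(k)\mid\mathbf{u},\text{perm}]$ with $\bar{\mathbf{d}}_i^q(k)$. By linearity of the recursion, this conditional expectation follows the same recursion with $R_i(k)$ replaced by $F^t(\mathbf{x}_i^q(k)+\delta\mathbf{u})$. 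The second, equivalent route is to absorb the zero-mean rounding noise into the variance term $\Delta_i^1$, which Lemma~\ref{lemma:regret-2-stochastic-gradient-error-bound} already controls via $C_2\delta^{-2}$. Either way the inner product in Lemma~\ref{lemma:regret-2-intermediate-step} is unaffected, since $[\mathbf{x}^\star]_i-\mathbf{v}_i^q(k)$ is independent of the current rounding.
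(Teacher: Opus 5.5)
Your proof is the paper's proof. It bounds $\frac{N}{\delta}\big|F^{t}(\bar{\mathbf{x}}^q(k)+\delta\mathbf{u}_i^q(k))-F^{t}(\mathbf{x}_i^q(k)+\delta\mathbf{u}_i^q(k))\big|$ by $\frac{NL_1\kappa d(\mathcal{G})}{K\delta}$ using $\|\mathbf{u}_i^q(k)\|=1$, $L_1$-Lipschitzness and Lemma~\ref{lemma:consensus-bandit}~ii), and then carries this bound through the recursion by induction using $\rho_k\le 1$. The paper makes the substitution you flag without comment: it compares against $F^{t}(\mathbf{x}_i^q(k)+\delta\mathbf{u}_i^q(k))$ with factor $N$ rather than against $N_iR_i(k)$, so your first route (bounding the rounding-averaged recursion, which suffices downstream because $\mathbf{v}_i^q(k)$ is determined before block $q$'s roundings) is exactly the reading under which the paper's argument is rigorous.
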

\begin{proof}
      By Lemma~\ref{lemma:consensus-bandit} statement ii) and the $L_1$-Lipschitz continuity of each function $F^t(\cdot)$, it holds for $\forall t\in [T]$ that
      \begin{equation}\label{ineq:consensus-error-2}
          \begin{aligned}
            &\big|{F^{t}}\big(\bar{\mathbf{x}}^q(k) + \delta \mathbf{u}_i(k)\big) - {F^{t}}\big(\mathbf{x}_i^q(k)+ \delta \mathbf{u}_i(k)\big)\big| \le L_1 \|\bar{\mathbf{x}}^q(k) - \mathbf{x}_i^q(k)\| \le {L_1\kappa d(\mathcal{G})}/{K}.
            \end{aligned}
      \end{equation}

    We also prove Lemma~\ref{lemma:regret-2-consensus-error-bound} by mathematical induction. Due to the fact that $\mathbf{u}_i^k$ is sampled from the sphere, i.e., $\|\mathbf{u}_i^k\| = 1$, and inequality~\eqref{ineq:consensus-error-2}, it is easy to see that $\Delta_i^2(1) \le {L_1\kappa d(\mathcal{G})N}/{K\delta}$. Now, suppose that $\Delta_i^2(k-1) \le {L_1\kappa d(\mathcal{G})N}/{K\delta}$ holds, it follows from triangle inequality that
    \begin{equation}
      \begin{aligned}
    \Delta_i^2(k) &\le (1-\rho_k)\cdot \Delta_i^2(k-1)+ \rho_k  N/\delta\cdot \big|{F^{t_{q,k}^i}}\big(\bar{\mathbf{x}}^q(k) + \delta \mathbf{u}_i(k)\big) - {F^{t_{q,k}^i}}\big(\mathbf{x}_i^q(k)+ \delta \mathbf{u}_i(k)\big)\big|  \\
    &\le \frac{L_1\kappa d(\mathcal{G})N}{K\delta},
  \end{aligned}
    \end{equation}
  which completes the proof.
\end{proof}

At last, we prove the statement in Theorem~\ref{thm:regret-bandit}. Here,  we~let $\beta(k) = (k+3)^{-1/3}/\delta $, then it has $\sum_{k=1}^K \beta(k) \le 1.5 K^{2/3} /\delta$. In addition, by Lemma~\ref{lemma:regret-2-stochastic-gradient-error-bound}, it holds that
  \begin{align}
    \sum_{k=1}^K \frac{\mathbb{E}[\Delta_i^1(k)]}{\beta(k)} &\le (C_2\delta^{-2} + C_3)\delta \cdot \sum_{k=1}^K \frac{1}{(k+3)^{1/3}} \le \frac{3(C_2\delta^{-1} + C_3\delta) K^{2/3}}{2}.
  \end{align}
  Therefore, by~\eqref{ineq:eq:regret-2-continuous-regret-decomp} and Lemmas~\ref{lemma:regret-2-intermediate-step}--\ref{lemma:regret-2-consensus-error-bound}, we can have that
  \begin{equation}
      \begin{aligned}
          \mathbb{E}[\mathcal{R}_T] &\le \mathbb{E}[\widehat{\mathcal{R}}^c_T]+ (1-1/e)TL_1  C_\mathcal{M} \delta + (2-1/e)TL_1\delta\\
          &\le (1-1/e)TL_1  C_\mathcal{M} \delta + (2-1/e)TL_1\delta + {3}/{4}\cdot C_2\delta^{-1}L Q K^{-1/3} + {3}/{4}\cdot C_3\delta L Q K^{-1/3} \\
          &\hspace{10pt}+ \frac{LQ L_1 \kappa d(\mathcal{G}) N \sum_{i=1}^I D_i}{\delta K}+L\sum_{i=1}^I\mathcal{R}_Q^{\mathcal{E}_i}+ \frac{3}{4\delta}\cdot{LQD^2}K^{-1/3}+ \frac{LQD^2L_2}{2K}\\
          & \le \Big((1-1/e)L_1C_\mathcal{M}C_\delta + (2-1/e)L_1 C_\delta + 3/4\cdot C_2/C_\delta + C_0 I + 3/4\cdot D^2/C_\delta\Big)\cdot T^{8/9}\\
          & \hspace{10pt}+3/4\cdot C_3C_\delta\cdot  T^{2/3} + L_1 \kappa d(\mathcal{G})N \sum_{i=1}^I D_i/C_\delta \cdot T^{4/9}+L_2D^2 /2 \cdot T^{1/3},
      \end{aligned}
  \end{equation}
  where the last inequality follows from $L = T^{7/9}$, $K = T^{2/3}$, $\delta = C_\delta T^{-1/9}$ and  $\mathcal{R}_Q^{\mathcal{E}_i} \le C_0 \sqrt{Q}$. The proof of Theorem~\ref{thm:regret-bandit} is completed.

\setcounter{subsection}{0}
\section*{Appendix III: Proofs Pertaining to Algorithm~\ref{algo:vs}}
\addcontentsline{toc}{section}{Appendix III: Proofs Pertaining to Algorithm~\protect\ref{algo:vs}}

\subsection{Proof of Theorem~\ref{thm:pipage-rounding}}
According to Algorithm~\ref{algo:pipage-rounding}, the \texttt{B-SPR} procedure can be~represented by $\mathbf{y}^0 = \mathbf{x} \to \mathbf{y}^1 \to \cdots \to \mathbf{y}^\mathcal{T} = \mathbf{z}$. Then,~by the randomized swapping operation in Algorithm~\ref{algo:pipage-rounding}, it holds that $ \mathbf{1}_M^\top\mathbf{y}^{\tau -1} = \mathbf{1}_M^\top\mathbf{y}^\tau$, $ \tau \in [\mathcal{T}]$, which proves the statement i) in Theorem~\ref{thm:pipage-rounding}. Also, it is clear that, for each $ \mathbf{y}^{\tau -1} \to \mathbf{y}^\tau, \tau \in [\mathcal{T}]$, there is at least one more element becoming either $b_l$ or $b_u$. Since there are totally $M$ elements in the vector, the procedure must terminate in finite time $\mathcal{T} \le M-1$, and there is at most one element in the output vector $\mathbf{z} = \mathbf{y}^\mathcal{T}$ that is neither $b_l$~nor $b_u$. Thus, the statement~ii) in Theorem~\ref{thm:pipage-rounding} is proved.

Subsequently, to prove the statement iii), let us first notice the following directional convexity property of the multi-linear extension $F(\cdot)$ of any monotone submodular function~\cite{rezazadeh2023distributed}:~let $\mathbf{e}_n, \mathbf{e}_m \in \mathbb{R}^M$ be two basis vectors, then $F\big(\mathbf{x} + t(\mathbf{e}_n - \mathbf{e}_m)\big)$~is convex with respect to the variable $t$. As a result of the above convexity property and the randomized swapping, we can have
\begin{equation}
    \begin{aligned}
        &\mathbb{E}\big[F(\mathbf{y}^{\tau+1}) | \mathbf{y}^\tau\big] \\
        &= \frac{\Delta^\tau(n)}{\Delta^\tau(m)+\Delta^\tau(n)} F\Big(\mathbf{y}^\tau + \Delta^\tau(m)(\mathbf{e}_n - \mathbf{e}_m)\Big)  + \frac{\Delta^\tau(m)}{\Delta^\tau(m)+\Delta^\tau(n)} F\Big(\mathbf{y}^\tau - \Delta^\tau(n)(\mathbf{e}_n - \mathbf{e}_m)\Big) \\
        & \ge F(\mathbf{y}^\tau).
    \end{aligned}
\end{equation}
It follows by the law of iterated expectations that $\mathbb{E}\big[F(\mathbf{y}^{\tau})] \ge \mathbb{E}\big[F(\mathbf{y}^{\tau-1})], \forall \tau \in [\mathcal{T}]$, which proves the statement iii).

\subsection{Proof of Lemma~\ref{lemma:rounding-norm-bound}}
We begin the proof by introducing more notations. Denote $b_l = \delta$ and $b_u(k) =\delta + \alpha (k-1)/K $, $\forall 1\le k\le K$. Similar to the previous proof of Theorem~\ref{thm:pipage-rounding}, we let the \texttt{B-SPR} procedure executed by $\mathcal{R}_{\delta}^{\delta+\alpha k/K }\hspace{-2pt}(\cdot)$ be represented~by $\mathbf{y}^0 = \widetilde{\mathbf{z}}(k+1) \to \mathbf{y}^1 \to \cdots \to \mathbf{y}^\mathcal{T} = \mathbf{z}(k+1)$ with $\mathcal{T} \le M-1$ (by Theorem~\ref{thm:pipage-rounding}). Since $\mathbf{z}(k)$~is the output of $\mathcal{R}_{b_l}^{b_u(k)}(\cdot)$, it is clear that $b_l \mathbf{1}_M\preceq \mathbf{z}(k) \preceq b_u(k) \mathbf{1}_M$ and by Theorem~\ref{thm:pipage-rounding} statement~ii), there is at most one element in $\mathbf{z}(k)$ that is neither $b_l$ nor $b_u(k)$. Let us denote the index~of such an element in $\mathbf{z}(k)$ as~$c \in [M]$, i.e., $z_c(k) \in \big(b_l ,b_u(k)\big)$ (if not exist, take arbitrary $c \in [M]$).  Recall that $\widetilde{\mathbf{z}}(k+1) = \mathbf{z}(k) + \mathbf{v}(k)/K$, and during the \texttt{B-SPR} procedure, let $c_\tau$ track the position~of such an element in $\mathbf{y}^\tau$ (this will be clearer in the sequel).

To prove the statement in Lemma~\ref{lemma:rounding-norm-bound}, let us first consider~the case when $k \le M$. By the fact that $b_l \mathbf{1}_M\le \mathbf{z}(k) \le b_u(k) \mathbf{1}_M$ holds for $\forall k \in [K+1]$,  it follows that
  \begin{align}\label{eq:proof-diff-bound-1}
    \|\mathbf{z}(k+1) - \mathbf{z}(k)\|_1 \le M \big(b_u(k+1) - b_l\big) \le \alpha M^2  /K.
  \end{align}

Subsequently, we turn to the case when $k > M$. To facilitate the proof, we define the following three intervals, i.e.,~$\mathcal{A}^\tau: = \big[ \delta + \alpha (k-1-\tau) /K, \delta + \alpha k/K\big]$, $\mathcal{B}^\tau:= \big[\delta, \delta +  \alpha(\tau+1) /K\big]$, and $\mathcal{C}^\tau:=\big(\delta + \alpha(\tau+1)  /K,  \delta + \alpha (k-1-\tau) /K\big).$ Note that $\mathcal{A}^\tau\cup \mathcal{B}^\tau \cup \mathcal{C}^\tau =\big[b_l , b_u(k+1)\big]$ holds for $\forall 0 \le \tau \le \mathcal{T}$, and since $k > M \ge \tau +1$, all the three intervals are well-defined. Associated with the intervals, we define three additional sets:
\vspace{-10pt}
\begin{subequations}
    \begin{align}
        \mathcal{Y}_\mathcal{A}^\tau: &= \Big\{y^\tau(m) \vert y^\tau(m) \in \mathcal{A}^\tau, m \in [M]\Big\} \setminus \big\{y^\tau (c_\tau)\big\};\\
        \mathcal{Y}_\mathcal{B}^\tau: &= \Big\{y^\tau(m) \vert y^\tau(m) \in \mathcal{B}^\tau, m \in [M]\Big\} \setminus \big\{y^\tau (c_\tau)\big\};\\
        \mathcal{Y}_\mathcal{C}^\tau: &= \Big\{y^\tau(m) \vert y^\tau(m) \in \mathcal{C}^\tau, m \in [M]\Big\} \cup \big\{y^\tau (c_\tau)\big\}.
    \end{align}
\end{subequations}
Then, we show the following intermediate result.
\begin{lemma}\label{lemma:rounding-intermediate-step}
    Let $k > M$, $\mathcal{T} \le M-1$ and the sets $\mathcal{Y}_\mathcal{A}^\tau$, $\mathcal{Y}_\mathcal{B}^\tau$, and $\mathcal{Y}_\mathcal{C}^\tau$ be defined as above, the following statements holds for $\forall \tau \in [\mathcal{T}]$:
    \begin{enumerate}[i)]
        \item $\mathcal{Y}_\mathcal{C}^\tau = \{y^\tau (c_\tau)\}$;
        \item for any $A$ elements $y^\tau(m_1),y^\tau(m_2)\cdots, y^\tau(m_{A})\in \mathcal{Y}_\mathcal{A}^\tau$ with $A \le {A}^\tau_\text{max}: = |\mathcal{Y}_\mathcal{A}^\tau|$, there is 
        \begin{align}\label{ineq:lemma-rounding-A}
           \hspace{-20pt}\sum_{i=1}^{A} y^\tau(m_i) \ge A\delta + \frac{\alpha}{K}\big(Ak - \min\{\tau+A, {A}^\tau_\text{max}\} \big);
        \end{align}
        \item for any $B$ elements $y^\tau(m_1),y^\tau(m_2)\cdots, y^\tau(m_{B})\in \mathcal{Y}_\mathcal{B}^\tau$ with $B \le {B}^\tau_\text{max}: = |\mathcal{Y}_\mathcal{B}^\tau|$, there is 
        \begin{align}\label{ineq:lemma-rounding-B}
            \sum_{i=1}^{B} y^\tau(m_i) \le B\delta + \frac{\alpha}{K}\big( \min\{\tau+B, {B}^\tau_\text{max}\} \big).
        \end{align}
    \end{enumerate}
\end{lemma}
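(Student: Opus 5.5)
We will argue by induction on $\tau$, from $\tau=0$ up to $\mathcal{T}$, and establish statements i)--iii) simultaneously.

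\textbf{Base case $\tau=0$.} Here $\mathbf{y}^0=\widetilde{\mathbf{z}}(k+1)=\mathbf{z}(k)+\mathbf{v}(k)/K$ with $\mathbf{v}(k)\in[0,\alpha]^M$. By Theorem~\ref{thm:pipage-rounding} statement ii), every entry of $\mathbf{z}(k)$ except possibly $z_c(k)$ equals $\delta$ or $\delta+\alpha(k-1)/K$. After the shift by $\mathbf{v}(k)/K$, an entry starting at $\delta$ lies in $[\delta,\delta+\alpha/K]=\mathcal{B}^0$. An entry starting at $\delta+\alpha(k-1)/K$ lies in $[\delta+\alpha(k-1)/K,\delta+\alpha k/K]=\mathcal{A}^0$. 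Since $k>M\ge 1$, these two intervals are separated, so no non-$c_0$ entry lies in $\mathcal{C}^0$, which gives i). For ii), each of the $A$ selected entries is at least $\delta+\alpha(k-1)/K$, so their sum is at least $A\delta+\alpha(Ak-A)/K$. This matches the bound because $\min\{A,A^0_{\max}\}=A$. Statement iii) follows symmetrically, since each selected entry is at most $\delta+\alpha/K$.

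\textbf{Inductive step.} Assume the claims hold at step $\tau$ and consider the swap between $y^\tau(m)$ and $y^\tau(n)$, both in $(b_l,b_u(k+1))$. We must decide how $c_{\tau+1}$ is chosen. Each swap sends at least one of the two coordinates to $b_l=\delta$ or $b_u(k+1)=\delta+\alpha k/K$. That coordinate then sits in $\mathcal{B}^{\tau+1}$ or $\mathcal{A}^{\tau+1}$ and satisfies the bounds trivially. The other coordinate is the new tracked element $c_{\tau+1}$ whenever it is still fractional; if it is not, we keep $c_{\tau+1}=c_\tau$ or pick it arbitrarily. The cases to check are:
\begin{itemize}
\item[(a)] both swapped entries belong to $\mathcal{Y}_\mathcal{A}^\tau$;
\item[(b)] both belong to $\mathcal{Y}_\mathcal{B}^\tau$;
\item[(c)] one belongs to $\mathcal{Y}_\mathcal{A}^\tau$ and the other to $\mathcal{Y}_\mathcal{B}^\tau$;
\item[(d)] one of them is the tracked element $y^\tau(c_\tau)$.
\end{itemize}
The sum preservation $y^{\tau+1}(m)+y^{\tau+1}(n)=y^\tau(m)+y^\tau(n)$ is the key tool. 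In each case, the value that stays fractional equals the old pair sum minus an endpoint value, and we bound it using the inductive sum bounds applied to a subset of size $2$, or to the subset together with the tracked element.

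For example, in case (a) suppose one entry goes up to $\delta+\alpha k/K$. The remaining entry then equals the old pair sum minus $\delta+\alpha k/K$. By ii) with $A=2$, the pair sum is at least $2\delta+\alpha(2k-\tau-2)/K$, so the remaining entry is at least $\delta+\alpha(k-\tau-2)/K$. This is exactly the left endpoint of $\mathcal{A}^{\tau+1}$, so that entry remains in $\mathcal{Y}_\mathcal{A}^{\tau+1}$. The term $\min\{\tau+A,A_{\max}\}$ in \eqref{ineq:lemma-rounding-A} is designed to absorb precisely this one-unit loss per step. Re-verifying ii) for arbitrary $A$-subsets at step $\tau+1$ then reduces to splitting the subset into elements that were untouched and the single modified element. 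This is where the $\min$ with $A^\tau_{\max}$ enters, because the total deficit is also capped by the total sum of $\mathcal{Y}_\mathcal{A}$. Case (b) mirrors case (a) with the roles of upper and lower bounds exchanged.

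\textbf{Main obstacle.} The hardest part is case (c), together with the interaction with the tracked element in case (d). In these cases the swapped mass could a priori leave both $\mathcal{A}$ and $\mathcal{B}$ and land in $\mathcal{C}^{\tau+1}$, breaking i). I will first try to show that in a mixed $\mathcal{A}$--$\mathcal{B}$ swap the $\mathcal{B}$-entry is driven to $\delta$ or the $\mathcal{A}$-entry to $\delta+\alpha k/K$. The surviving entry then moves by at most $\alpha(\tau+1)/K$ from its interval, which keeps it inside the enlarged $\mathcal{A}^{\tau+1}$ or $\mathcal{B}^{\tau+1}$; this is guaranteed by $\Delta^\tau=\min\{\cdot,\cdot\}$ and the bounds from iii). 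If a surviving entry does land in $\mathcal{C}$, we re-designate it as $c_{\tau+1}$. We must then check that the previously tracked element now sits in $\mathcal{A}$ or $\mathcal{B}$, which relies on the fact that at most one fractional ``defect'' is created per swap.
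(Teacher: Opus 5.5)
Your skeleton matches the paper's: induction on $\tau$, the base case, and an $\mathcal{A}$--$\mathcal{A}$ swap handled through the pair-sum bound. But the proposal has a real gap exactly in the cases you flag as the obstacle, and part of it rests on a wrong bookkeeping rule.

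\textbf{The tracking rule.} You make the fractional survivor of \emph{every} swap the new $c_{\tau+1}$. This fails in cases (a)--(c). There $y^\tau(c_\tau)$ is untouched and still fractional, so moving the label would release it into $\mathcal{C}^{\tau+1}$ (breaking i)) or into $\mathcal{Y}_\mathcal{A}^{\tau+1}$ or $\mathcal{Y}_\mathcal{B}^{\tau+1}$, where nothing in ii)--iii) controls it. Your own case (a) tacitly keeps $c_{\tau+1}=c_\tau$. The paper moves the label only when $y^{\tau+1}(c_\tau)$ itself lands on $\delta$ or $b_u(k+1)$. Also, in case (a) the bound for a general $A$-subset must come from the step-$\tau$ inequality for the $(A{+}1)$-subset formed by the untouched entries and \emph{both} swapped entries, minus $b_u(k+1)$. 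It cannot involve the tracked element, which ii) does not cover.

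\textbf{Case (c).} Your fallback cannot work. An $\mathcal{A}$--$\mathcal{B}$ swap does not touch $c_\tau$, so the old tracked entry cannot have ``moved into $\mathcal{A}$ or $\mathcal{B}$''. Your justification ``moves by at most $\alpha(\tau+1)/K$'' is also too weak, since $\mathcal{A}^{\tau+1}$ extends $\mathcal{A}^\tau$ by only $\alpha/K$. What matters is the direction of the move. As an unordered pair of values, $\{y_A,y_B\}$ becomes one of:
\begin{itemize}
\item $\{\delta,\,y_A+y_B-\delta\}$, with $y_A+y_B-\delta\in[y_A,b_u(k+1)]$;
\item $\{b_u(k+1),\,y_A+y_B-b_u(k+1)\}$, with $y_A+y_B-b_u(k+1)\in[\delta,y_B]$.
\end{itemize}
Index-wise the $\mathcal{A}$-coordinate may be the one sent to $\delta$, but only the multisets of values matter. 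So one $\mathcal{A}$-value is replaced by a larger one and one $\mathcal{B}$-value by a smaller one. Set sizes and $c_\tau$ are unchanged, and i)--iii) are inherited at once. This is an easy case, and no survivor ever enters $\mathcal{C}^{\tau+1}$.

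\textbf{Case (d).} This is where the real content lies, and you give no argument for it. The paper's device is as follows. One of the two entries hits an endpoint; regard that endpoint value as replacing the partner value $y^\tau(m)$, and declare the other entry to be $c_{\tau+1}$. The tracked entry is excluded from $\mathcal{Y}_\mathcal{A}$ and $\mathcal{Y}_\mathcal{B}$ by definition. The endpoint lies in $\mathcal{A}^{\tau+1}\cup\mathcal{B}^{\tau+1}$, and all other entries are untouched. Hence i) holds without ever locating the survivor. When $y^\tau(m)\in\mathcal{Y}_\mathcal{A}^\tau$, the only effect on $\mathcal{Y}_\mathcal{A}$ is that $y^\tau(m)$ is either replaced by $b_u(k+1)$, or deleted while $\delta$ joins $\mathcal{Y}_\mathcal{B}$. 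The case $y^\tau(m)\in\mathcal{Y}_\mathcal{B}^\tau$ is symmetric.

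When you carry this out, note a subtlety in the deletion subcase: $A_{\max}$ drops by one. So for $\tau+A\ge A^\tau_{\max}$, the right-hand side of ii) becomes stronger by $\alpha/K$ than anything the step-$\tau$ inequalities supply. The paper's case iv) passes over this. To close it you need a finer invariant. For example, assign to each entry of $\mathcal{Y}_\mathcal{A}^\tau$ a group of original $\mathcal{A}^0$-entries, such that:
\begin{itemize}
\item the groups are disjoint;
\item the group is nonempty whenever the entry lies below $b_u(k+1)$;
\item the entry's deficit $b_u(k+1)-y$ is at most $\alpha/K$ times its group size.
\end{itemize}
Then any $A$ entries cover at most $\min\{A+\tau,A^\tau_{\max}\}$ original entries, which gives ii).
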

\begin{proof}
        We prove Lemma~\ref{lemma:rounding-intermediate-step} by mathematical induction. First, due to the  facts that $\mathbf{y}^0 =\widetilde{\mathbf{z}}(k+1)=\mathbf{z}(k) + \mathbf{v}(k)/K$~and $\mathbf{v}(k) \in [0,\alpha]^M$, it can be verified that all statements hold for $\tau = 0$. Now, suppose that the statements hold for any $\tau \in [\mathcal{T}]$, we prove the case  for $\tau+1$.

        Recall that, at each step of the stochastic rounding procedure $\mathcal{R}_{\delta}^{\delta+\alpha k/K}(\cdot)$, two elements $y^\tau(m), y^\tau(n) \in \big(b_l, b_u(k+1)\big)$ are uniformly randomly selected. Next, we discuss on all five possibilities of the selected elements $y^\tau(m)$ and $y^\tau(n)$.

        \textbf{Case i):} $y^\tau(m), y^\tau(n) \in \mathcal{Y}_\mathcal{A}^\tau$. By the randomized swapping rule in Algorithm~\ref{algo:pipage-rounding}, one can have that~\footnote{We denote $[y(m), y(n)] \in \{a, b\}$ to include both cases $[y(m), y(n)] = [a, b]$ and $[y(m), y(n)] = [b, a]$.}
        \begin{align}
            [y^{\tau+1}(m), y^{\tau+1}(n)] \in \big\{ b_u(k\hspace{-2pt}+\hspace{-2pt}1), y^{\tau}(m)\hspace{-1pt}+\hspace{-1pt}y^{\tau}(n) \hspace{-1pt}-\hspace{-1pt}b_u(k\hspace{-2pt}+\hspace{-2pt}1)\big\}.
        \end{align}
        Due to the facts~that the statement ii) is true for $\mathcal{Y}_\mathcal{A}^\tau$ and both $y^\tau(m)$ and $ y^\tau(n)$ are from $ \mathcal{A}^\tau$, it holds that 
        \begin{align}\label{ineq:case-i}
            2\delta + \alpha (2k-\tau-2) /K \le y^\tau(m) + y^\tau(n) \le 2\delta + 2\alpha k/K,
        \end{align}
        where the inequality on left-hand-side follows from~\eqref{ineq:lemma-rounding-A} and the one on right-hand-side is due to the definition of $ \mathcal{A}^\tau$. As a result, we can have that $\delta + \alpha (k-\tau -2) /K  \le y^{\tau}(m)+y^{\tau}(n) -b_u(k+1) \le\delta + \alpha k/K $, i.e., $y^{\tau}(m)+y^{\tau}(n) -b_u(k+1) \in \mathcal{A}^{\tau+1}$. Together with the fact that $b_u(k+1) \in \mathcal{A}^{\tau+1}$, it follows that $y^{\tau+1}(m), y^{\tau+1}(n) \in \mathcal{A}^{\tau+1}$. In other words, if both $y^\tau(m)$ and $ y^\tau(n)$ are selected from~$ \mathcal{Y}_\mathcal{A}^\tau$ and in this case $c_{\tau+1} = c_\tau$, it has been proved~that they remain in the set $\mathcal{Y}_\mathcal{A}^{\tau+1}$ and the other two sets $\mathcal{Y}_\mathcal{B}^{\tau+1}$ and $\mathcal{Y}_\mathcal{C}^{\tau+1}$ remain unchanged. Therefore, the statements i) and iii) naturally holds for $\tau+1$. Next, we continue to prove the statement ii).~By the above analysis, we can have that $A^{\tau+1}_\text{max} = A^{\tau}_\text{max}$. Thus, the worst-case scenario~for proving~inequality \eqref{ineq:lemma-rounding-A} is~that only the smaller one of $y^{\tau+1}(m)$ and $ y^{\tau+1}(n)$ is selected when counting the summation of $A$ elements from $\mathcal{Y}_\mathcal{A}^{\tau+1}$. Without loss of generality, let us assume that $y^{\tau+1}(m)$ is the smaller one, i.e., $y^{\tau+1}(m) = y^{\tau}(m)+y^{\tau}(n) -b_u(k+1)$. In this scenario, it holds that
        \begin{equation}
            \begin{aligned}
                &\sum_{i=1}^{A} y^{\tau+1}(m_i) =  \hspace{-10pt}\sum_{i=1, m_i \neq m}^A \hspace{-10pt}y^{\tau}(m_i) + y^{\tau}(m)+y^{\tau}(n) -b_u(k+1) \\
                &\ge (A+1)\delta + \frac{\alpha}{K}\Big((A+1)k - \min\{\tau+A+1, A^{\tau}_\text{max}\} \Big)- b_u(k+1)\\
                & = A\delta + \frac{\alpha}{K}\Big(Ak-\min\{\tau +1+A, A^{\tau+1}_\text{max}\}\Big),
            \end{aligned}
        \end{equation}
        where the inequality is due to~\eqref{ineq:lemma-rounding-A} for the case $\tau$, and the last equality follows from the definition of $b_u(k+1)$  and the fact that $A^{\tau+1}_\text{max} = A^{\tau}_\text{max}$. Hence, the statement ii) is proved.

        \textbf{Case ii):} $y^\tau(m), y^\tau(n) \in \mathcal{Y}_\mathcal{B}^\tau$. Due to the symmetry of the definitions of $\mathcal{A}^\tau$ and $\mathcal{B}^\tau$ as well as the statements ii) and iii), this proof~can~be~completed by following the same steps as in the proof for case i) and thus is omitted for simplicity.

        \textbf{Case iii):} $y^\tau(m) \hspace{-1pt}\in\hspace{-1pt} \mathcal{Y}_\mathcal{A}^\tau, y^\tau(n) \hspace{-1pt}\in\hspace{-1pt} \mathcal{Y}_\mathcal{B}^\tau$ (or conversely, $y^\tau(n) \hspace{-1pt}\in\hspace{-1pt} \mathcal{Y}_\mathcal{A}^\tau, y^\tau(m) \hspace{-1pt}\in\hspace{-1pt} \mathcal{Y}_\mathcal{B}^\tau$). By the randomized swapping rule, one can have~that $[y^{\tau+1}(m), y^{\tau+1}(n)] \in  \big\{ b_l, y^{\tau}(m)+y^{\tau}(n) -b_l\big\} \cup \big\{ b_u(k+1), y^{\tau}(m)+y^{\tau}(n) -b_u(k+1)\big\}$. Due to the definitions of $\mathcal{A}^\tau$ and $\mathcal{B}^\tau$, it holds that $y^{\tau}(m)+y^{\tau}(n) -b_l \ge y^{\tau}(m)$ and thus $y^{\tau}(m)+y^{\tau}(n) -b_l \in \mathcal{A}^\tau \subset \mathcal{A}^{\tau+1}$. Similarly, it also~holds that $y^{\tau}(m)+y^{\tau}(n) -b_u(k+1) \in \mathcal{B}^{\tau+1}$. As a consequence, we can conclude either $y^{\tau+1}(m) \in \mathcal{A}^{\tau+1}, y^{\tau+1}(n) \in \mathcal{B}^{\tau+1}$ or $y^{\tau+1}(n) \in \mathcal{A}^{\tau+1}, y^{\tau+1}(m) \in \mathcal{B}^{\tau+1}$. In both cases, the set $\mathcal{Y}_\mathcal{C}^{\tau+1}$ remains unchanged (with $c_{\tau+1} = c_\tau$), which proves the statement i). Next, also due to the symmetry of the statements ii) and iii), we here only prove the statement ii) and the proof of the statement iii) simply follows by the same steps. By the above analysis, the only change from $\mathcal{Y}_\mathcal{A}^\tau$ to $\mathcal{Y}_\mathcal{A}^{\tau+1}$ is that $y^\tau(m)$ becomes either $ y^{\tau}(m)+y^{\tau}(n) -b_l$ or $ b_u(k+1)$. Since both quantities are no less than $y^\tau(m)$, the inequality~\eqref{ineq:lemma-rounding-A} follows for $\tau+1$ and the proof of statement ii) is finished.

        \textbf{Case iv):} $y^\tau(m) \hspace{-1pt}\in\hspace{-1pt} \mathcal{Y}_\mathcal{A}^\tau, y^\tau(n) \hspace{-1pt}\in\hspace{-1pt} \mathcal{Y}_\mathcal{C}^\tau$ (or conversely, $y^\tau(n) \hspace{-1pt}\in\hspace{-1pt} \mathcal{Y}_\mathcal{A}^\tau, y^\tau(m) \hspace{-1pt}\in\hspace{-1pt} \mathcal{Y}_\mathcal{C}^\tau$). Note that we have $c_\tau =n$ in this case. By the randomized swapping rule,  it is ensured that at least one of $y^{\tau+1}(m)$ and $y^{\tau+1}(n)$ is either $b_l$ or $b_u(k+1)$. Specifically, if $y^{\tau+1}(n)$ becomes $b_l$ or $b_u(k+1)$, we let $c_{\tau+1} =m$; otherwise $c_{\tau+1} =n$. Regardless, one can confirm that the statement i) holds in both cases. Subsequently, we prove the statements ii) and iii). By the above analysis, the change from $\mathcal{Y}_\mathcal{A}^\tau$ to $\mathcal{Y}_\mathcal{A}^{\tau+1}$ is that $y^\tau(m)$ either becomes  $ b_u(k+1)$ or is removed from the set $\mathcal{Y}_\mathcal{A}^{\tau}$. For the former case, since $b_u(k+1) \ge y^\tau(m)$ and the set $\mathcal{Y}_\mathcal{B}^{\tau+1}$ remains unchanged, the inequalities~\eqref{ineq:lemma-rounding-A} and~\eqref{ineq:lemma-rounding-B} holds for $\tau+1$. For the latter case, $b_l$ is added to the set~$\mathcal{Y}_\mathcal{B}^{\tau+1}$. Since $b_l = \delta$, one can also verify the inequality~\eqref{ineq:lemma-rounding-B} holds for $\tau+1$. Therefore, the proof in case iv) is completed.

        \textbf{Case v):} $y^\tau(m) \hspace{-1pt}\in\hspace{-1pt} \mathcal{Y}_\mathcal{B}^\tau, y^\tau(n) \hspace{-1pt}\in\hspace{-1pt} \mathcal{Y}_\mathcal{C}^\tau$ (or conversely, $y^\tau(n) \hspace{-1pt}\in\hspace{-1pt} \mathcal{Y}_\mathcal{B}^\tau, y^\tau(m) \hspace{-1pt}\in\hspace{-1pt} \mathcal{Y}_\mathcal{C}^\tau$). Again, due to the same symmetry reason, this proof~can~be~completed by following the same steps as in the proof for case iv) and thus is omitted for simplicity.
\end{proof}

With the aid of Lemma~\ref{lemma:rounding-intermediate-step}, we are now ready to prove the result in~Lemma~\ref{lemma:rounding-norm-bound}. To proceed,~let us first show that,~according to the randomized swapping operation, the distance~between $\mathbf{y}^{\tau+1}$ and $\mathbf{y}^{\tau}$ at each step $0\le \tau \le \mathcal{T}-1$ has
\begin{equation}
\begin{aligned}
  \mathbb{E}\Big[\|\mathbf{y}^{\tau+1} -\mathbf{y}^{\tau}\|_1\Big] = \frac{4}{ 1/{\Delta}^\tau(m) + 1/\Delta^\tau(n) }.
\end{aligned}
\end{equation}
Therefore, one can conclude that $\mathbb{E}\big[\|\mathbf{y}^{\tau+1} -\mathbf{y}^{\tau}\|_1\big]$ achieves its upper bound when $\Delta^\tau(m)$ and $\Delta^\tau(n)$ take their maximum values. Note that Lemma~\ref{lemma:rounding-intermediate-step}~confirms that the random selection of $y^\tau(m)$ and $y^\tau(n)$ at each step $\tau$ occurs in one of the five cases~above. Subsequently, we upper bound $\mathbb{E}[\|\mathbf{y}^{\tau+1} -\mathbf{y}^{\tau} \|_1]$ by analyzing $\Delta^\tau(m)$ and $\Delta^\tau(n)$ in each of the five cases.

\textbf{Case i)}: Considering $y^\tau(m) \in \mathcal{A}^\tau$ and $y^\tau(n) \in \mathcal{A}^\tau$, one can confirm that ${\Delta^\tau(m)} \le b_u(k+1) - y^{\tau}(n)\le  \alpha(\tau+1) /K$ and ${\Delta^\tau(n)} \le b_u(k+1) - y^{\tau}(m)\le \alpha(\tau+1) /K$. As a consequence, it holds that $\mathbb{E}[\|\mathbf{y}^{\tau+1} -\mathbf{y}^{\tau} \|_1] \le 2\alpha(\tau+1)/K$.

\textbf{Case ii)}: Similarly, one can confirm that $ {\Delta^\tau(m)} \le y^{\tau}(m) - b_l \le\alpha(\tau+1) /K$ and ${\Delta^\tau(n)} \le y^{\tau}(n) - b_l\le \alpha(\tau+1) /K$. Therefore, $\mathbb{E}[\|\mathbf{y}^{\tau+1} -\mathbf{y}^{\tau} \|_1] \le2\alpha(\tau+1)/K$.

\textbf{Case iii)}: Since $y^\tau(m) \in \mathcal{A}^\tau$ and $y^\tau(n) \in \mathcal{B}^\tau$, we can have that $\Delta^\tau(m) \le y^\tau(m)-b_l \le \alpha k/K$~and $\Delta^\tau(n) \le y^\tau(n)-b_l \le  \alpha(\tau\hspace{-1pt}+\hspace{-1pt}1)/K$. Therefore, $\mathbb{E}[\|\mathbf{y}^{\tau+1} -\mathbf{y}^{\tau} \|_1]  \le 4\alpha(\tau+1)/K$.

\textbf{Case iv)}: Since $y^\tau(m) \in \mathcal{A}^\tau$, $y^\tau(n) \in \mathcal{C}^\tau$, we can have~that ${\Delta^\tau(m)} \le y^{\tau}(m) - b_l \le\alpha k/K$ and ${\Delta^\tau(n)} \le b_u(k\hspace{-1pt}+\hspace{-1pt}1) - y^{\tau}(m)\le \alpha(\tau\hspace{-1pt}+\hspace{-1pt}1) /K$. Thus, it holds that $\mathbb{E}[\|\mathbf{y}^{\tau+1} -\mathbf{y}^{\tau} \|_1] \le 4\alpha(\tau+1)/K$.

\textbf{Case v)}: Similarly, one can confirm that ${\Delta^\tau(m)} \le y^{\tau}(m) -b_l\le  \alpha(\tau\hspace{-1pt}+\hspace{-1pt}1)/K$ and ${\Delta^\tau(n)} \le b_u(k\hspace{-1pt}+\hspace{-1pt}1) - y^{\tau}(m)\le \alpha k/K$. Therefore, it holds that $\mathbb{E}[\|\mathbf{y}^{\tau+1} -\mathbf{y}^{\tau} \|_1] \le 4\alpha(\tau+1)/K$.

Now, combining the above {cases i) -- v)}, we can obtain that $\mathbb{E}[\|\mathbf{y}^{\tau+1} -\mathbf{y}^{\tau} \|_1] \le 4\alpha(\tau+1)/K$, which further implies that
\begin{equation}
  \begin{aligned}
     &\mathbb{E}\Big[\|\mathbf{z}(k+1) - \widetilde{\mathbf{z}}(k+1)\|_1\Big]  = \mathbb{E}\Big[\|\mathbf{y}^\mathcal{T} - \mathbf{y}^0\|_1\Big] \le  \mathbb{E}\Big[\sum_{\tau = 0}^{\mathcal{T}-1}\|\mathbf{y}^{\tau+1}- \mathbf{y}^\tau\|_1\Big] \le 2M(M-1)\alpha/ K,
  \end{aligned}
\end{equation}
where last inequality is due to the fact that $\mathcal{T} \le M-1$.~Further, by triangle inequality, it holds that
\begin{equation}
\begin{aligned}
  \mathbb{E}\Big[\|\mathbf{z}(k+1) - {\mathbf{z}}(k)\|_1\Big] &\le \mathbb{E}\Big[\|\mathbf{z}(k+1) - \widetilde{\mathbf{z}}(k+1)\|_1\Big]  +  \| \mathbf{v}(k)\|_1/K\\
  &\le  2M(M-1)\alpha/ K + \alpha M /K\le 2\alpha  M^2 /K,
\end{aligned}
\end{equation}
which, together with the result in~\eqref{eq:proof-diff-bound-1}, completes the proof of Lemma~\ref{lemma:rounding-norm-bound}.

\subsection{Proof of Lemma~\ref{lemma:consensus-vs}}
Similar to the proofs of Lemmas~\ref{lemma:consensus-full-info} and~\ref{lemma:consensus-bandit}, we first show that $[\mathbf{x}_i^q(k+1)]_i \in \widetilde{\mathcal{M}}_i ,\forall k\in[K]$ and $i \in [I]$. Recall the definition of $\widetilde{\mathcal{M}}_i = \alpha \widetilde{\mathcal{M}}_i + \delta\cdot \mathbf{1}_{N_i}$, to prove $[\mathbf{x}_i^q(k+1)]_i \in \widetilde{\mathcal{M}}_i$, it is equivalent to show $\big([\mathbf{x}_i^q(k+1)]_i - \delta \cdot\mathbf{1}_{N_i}\big)/\alpha \in \mathcal{M}_i$, i.e., to prove the following: i) $\delta \cdot \mathbf{1}_{N_i} \preceq [\mathbf{x}_i^q(k+1)]_i \preceq (\alpha + \delta)\cdot \mathbf{1}_{N_i}$; and ii) $0 \le \big(\mathbf{1}_{N_i}^\top [\mathbf{x}_i^q(k+1)]_i - \delta N_i\big)/\alpha \le \kappa_i$. The~inequalities in i) are easily verified by the reason that $[\mathbf{x}_i^q(k+1)]_i$ is the output of $\mathcal{R}_{\delta}^{\delta + \alpha k/K}(\cdot)$. In addition, by the statement~i) in~Theorem~\ref{thm:pipage-rounding}, it holds that
\begin{equation}\label{eq:summation-recursion}
    \begin{aligned}
        \mathbf{1}_{N_i}^\top[\mathbf{x}_i^q(k+1)]_i &= \mathbf{1}_{N_i}^\top[\mathbf{x}_i^q(k )]_i + (\mathbf{1}_{N_i}^\top \mathbf{v}_i^q(k) - \delta N_i)/K= \sum_{l=1}^k \mathbf{1}_{N_i}^\top \mathbf{v}_i^q(l)/K + \delta N_i (1-k/K).
    \end{aligned}
\end{equation}
Since $\mathbf{v}_i^q(l) \in \widetilde{\mathcal{M}}_i$, we have that $\delta N_i \le \mathbf{1}_{N_i}^\top \mathbf{v}_i^q(l) \le \alpha \kappa_i + \delta N_i$ and thus the inequalities in ii) are proved.

Next, due to the updating rule~of $[\mathbf{x}_i^q(k)]_{i'}$ for $\forall i' \neq i$ (see Line~7~in Algorithm~\ref{algo:pipage-rounding}), it holds that $[\mathbf{x}_i^q(k)]_{i'} = [\mathbf{x}_{i'}^q(k- t)]_{i'}$ for some $1 \le t \le d(\mathcal{G}) $. As a consequence, one can also have that $[\mathbf{x}_i^q(k)]_{i'} \in \mathcal{M}_{i'}$ for $\forall i' \neq i$. The proof of statement i) in Lemma~\ref{lemma:consensus-vs} is completed.

By Lemma~\ref{lemma:rounding-norm-bound}, it has been shown that for $\forall i\in [I],k \in [K]$,
\begin{align}
    \mathbb{E}\big[\big\| [{\mathbf{x}}_i^q(k+1)]_i - [{\mathbf{x}}_i^q(k)]_i\big \|_1 \big] \le 2\alpha N_i^2/K.
\end{align}
Recall that $d(\mathcal{G})$ represents the diameter of the graph, meaning that~every agent $j \neq i$ can be reached from the agent $i$ at most in $d(\mathcal{G})$-hop communications, thus it follows from the triangle inequality that for $\forall j \neq i, k\in[K]$
  \begin{align}
    \mathbb{E}\Big[\big\| [{\mathbf{x}}_j^q(k)]_i - [{\mathbf{x}}_i^q(k)]_i\big \|_1 \Big] \le 2\alpha N_i^2d(\mathcal{G})/K.
  \end{align}
  As a result, it holds for $\forall i\in [n]$ that
 \begin{equation}
  \begin{aligned}
  &\mathbb{E}\Big[\big\|  \bar{\mathbf{x}}^q(k) - {\mathbf{x}}_i^q(k)\big \| \Big] \le \mathbb{E}\Big[\big\|  \bar{\mathbf{x}}^q(k) - {\mathbf{x}}_i^q(k)\big \|_1 \Big] = \sum_{j\neq i} \mathbb{E}\Big[\big\| [{\mathbf{x}}_j^q(k)]_j - [{\mathbf{x}}_i^q(k)]_j\big \|_1 \Big]
  % & \le \sum_{j\neq i} 2\alpha N_i^2d(\mathcal{G})/K\\
  \le 2\alpha N^2d(\mathcal{G})/K.
  \end{aligned}
  \end{equation}
  Therefore, the proof of Lemma~\ref{lemma:consensus-vs} is completed.

\subsection{Proof of Theorem~\ref{thm:regret-vs}}
This proof can be completed by repeating the proof in Appendix~II-B. The only two differences are noted~as follows.

First, due to the incorporation of the \texttt{PB-SPR} procedure (see Line 6 in Algorithm~\ref{algo:vs}), the update of $[{\mathbf{x}}_i^q(k)]_i$ is different from the one in Algorithm~\ref{algo:bandit-feedback}. Specifically, we let $\widetilde{\mathbf{x}}_i^q(k+1) := [\mathbf{x}_i^q (k)]_i + (\mathbf{v}_i^q(k) - \delta\cdot\mathbf{1}_{N_i})/K$, which is corresponding to the updating step in Algorithm~\ref{algo:bandit-feedback}, while Algorithm~\ref{algo:vs} imposes an additional operation by $[{\mathbf{x}}_i^q(k+1)]_i = \mathcal{R}_\delta^{\delta+\alpha k/K} (\widetilde{\mathbf{x}}_i^q(k+1))$. However, according to statement iii) in Theorem~\ref{thm:pipage-rounding}, it has
\begin{align}
    \mathbb{E}\Big[F\big(\bar{\mathbf{x}}^q(k+1)\big)\Big] \ge F\big(\widetilde{\mathbf{x}}^q(k+1)\big),\; \forall k \in[K],
\end{align}
where $\widetilde{\mathbf{x}}^q(k+1) := [\widetilde{\mathbf{x}}_1^q(k+1)^\top,\cdots,\widetilde{\mathbf{x}}_I^q(k+1)^\top]^\top$. Therefore, all arguments in the proof of Theorem~\ref{thm:regret-bandit} naturally follows~by taking additional expectation with respect to the randomness~in the \texttt{PB-SPR} procedure.

Second, in the previous proof, when bounding the consensus error term $\Delta_i^2(k)$ (see Lemma~\ref{lemma:regret-2-consensus-error-bound}), the $L_1$-Lipschitz continuity of $F^t(\cdot)$ and the fact $\|\bar{\mathbf{x}}^q(k) - \mathbf{x}_i^q(k)\| \hspace{-1pt}\le\hspace{-1pt} \kappa d(\mathcal{G})/K$ are~used. However, since the \texttt{PB-SPR} procedure is now involved, the bound of difference between $\bar{\mathbf{x}}^q(k)$ and~$\mathbf{x}_i^q(k)$ becomes that $\mathbb{E}\big[\|\bar{\mathbf{x}}^q(k) - \mathbf{x}_i^q(k)\|\big] \hspace{-1pt}\le\hspace{-1pt} 2\alpha N^2 d(\mathcal{G})/K$ (see the statement ii) in Lemma~\ref{lemma:consensus-vs}). As a consequence, the constant $N^2 d(\mathcal{G})$ appears in the second last term of the regret bound~in Theorem~\ref{thm:regret-vs}.

\subsection{Proof of Theorem~\ref{thm:vs-complexity}}

By the equality~\eqref{eq:summation-recursion} and the fact that $\mathbf{1}_{N_i}^\top \mathbf{v}_i^q(l) \le \alpha \kappa_i + \delta N_i$, $\forall l\in[K]$, it follows that
\begin{align}
    \mathbf{1}_{N_i}^\top[\mathbf{x}_i^q(k)]_i \le \delta N_i + \alpha\kappa_i (k-1) /K.
\end{align}
Moreover, given that $[\mathbf{x}_i^q(k)]_i$ is the output of $\mathcal{R}_\delta^{\delta+\alpha (k-1)/K}(\cdot)$, there are at least $N_i - \kappa_i$~components of the vector $[\mathbf{x}_i^q(k)]_i$ equal to $\delta$. In addition, since~it~has been shown in Lemma~\ref{lemma:consensus-vs} that  $[\mathbf{x}_i^q(k)]_{i'} = [\mathbf{x}_{i'}^q(k- t)]_{i'}$ holds for some $1 \le t \le d(\mathcal{G}) $, the above statement is also true for $[\mathbf{x}_i^q(k)]_{i'},\forall i' \neq i$.

Let $\mathbf{p}_i^q(k) = \mathbf{x}_{i}^q(k) + \delta\mathbf{u}_{i}^q(k)$, based upon which the discrete decision $\mathcal{X}^q_i(k)$ is randomly generated. Then, it holds that
\begin{equation}\label{ineq:prob-bound}
    \begin{aligned}
        \mathbb{P} \big(\mathcal{X}_i^q(k) \notin \mathcal{I}\big) &= \mathbb{P} \Big(\bigcup_{i'=1}^I \big|[\mathcal{X}_i^q(k)]_{i'} \big| > \kappa_{i'} \Big)\\
        & \le \sum_{i'=1}^I\mathbb{P} \Big( \big|[\mathcal{X}_i^q(k)]_{i'} \big| > \kappa_{i'} \Big ) \le \sum_{i'=1}^I 1- (1-2\delta)^{N_{i'} - \kappa_{i'}} \\
        & \le 2 \delta(N - \kappa).
    \end{aligned}
\end{equation}
Note that in~\eqref{ineq:prob-bound}, we use $[\mathcal{X}_i^q(k)]_{i'}$ to denote the discrete~decision generated by $[\mathbf{p}_i^q(k)]_{i'}$, the first inequality is due to~the union bound, the second inequality follows from the fact that there are at least $N_{i'} - \kappa_{i'}$ components of each block $[\mathbf{p}_i^q(k)]_{i'}$ less than or equal to $2\delta$, and the last one is by the~Bernoulli inequality. Hence, the inequality~\eqref{ineq:vs-prob-bound} in Theorem~\ref{thm:vs-complexity} is proved by $\delta=C_\delta T^{-1/9}$ and~\eqref{ineq:vs-regret-bound} simply follows from the~facts that $\mathbb{E}\big[\mathbbm{1}\{\mathcal{X}^q_i(k) \notin \mathcal{I}\}\big] = \mathbb{P}\big(\mathcal{X}^q_i(k) \notin \mathcal{I}\big)$, $Q = T^{2/9}$, $K = T^{2/3}$.

\subsection{Proof of Proposition~\ref{prop:vs-lower-bound}}
Using the same notations as in Appendix III-E, we can have
\begin{equation}
\begin{aligned}
    \mathbb{P} \big(\mathcal{X}_i^q(k) \notin \mathcal{I}\big) &= \mathbb{P} \Big(\bigcup_{i'=1}^I \big|[\mathcal{X}_i^q(k)]_{i'} \big| > \kappa_{i'} \Big) \ge \mathbb{P} \big( \big|[\mathcal{X}_i^q(k)]_{i} \big| > \kappa_{i} \big). 
\end{aligned}
\end{equation}
Therefore, to prove the statement in Proposition~\ref{prop:vs-lower-bound}, it suffices~to show that
\begin{align}
    \sum_{q=1}^Q \sum_{k=1}^K \mathbb{P} \big( \big|[\mathcal{X}_i^q(k)]_{i} \big| > \kappa_{i} \big) \ge \Omega(T^{7/9}).
\end{align}

By the sum-preserving property of $\mathcal{P}(\cdot)$ and the condition that $\mathbf{1}_{N_i}^\top \mathbf{v}_i^q(k) = \delta N_i + \alpha\kappa_i$, it is straightforward to have~that $\mathbf{1}_{N_i}^\top [\mathbf{x}_i^q(k)]_i = \delta N_i + \alpha \kappa_i k/K$. Note that the discrete decision $[\mathcal{X}_i^q(k)]_{i}$ is generated by applying the random rounding on the variable $\mathbf{g}_i^q(k): =  [\mathbf{x}_i^q(k)]_i + \delta [\mathbf{u}_i]_i$, then one can have that $\mathbf{1}_{N_i}^\top \mathbf{g}_i^q(k) \le \delta N_i + \delta \sqrt{N_i} + \alpha \kappa_i k/K$. Next, we~show~an~intermediate result in the following lemma, which verifies that the probability $\mathbb{P} ( |[\mathcal{X}_i^q(k)]_{i} | > \kappa_{i} )$ is minimized when the vector $\mathbf{g}_i^q(k)$ is most majorized.

\begin{lemma}\label{lemma:prob-config}
    Suppose that $x_1, x_2, \cdots, x_M$ are~independent Bernoulli variables with $\mathbb{P}(x_i = 1) = p_i$. Subject to constraints $\sum_{i=1}^M p_i <  \kappa$ and $\delta \le  p_i \le 1$, the probability $\mathbb{P}(\sum_{i=1}^N x_i > \kappa)$ is minimized when $\mathbf{p}=[p_1, p_2,\cdots p_M]$ is most majorized, i.e., there is at most one element in $\mathbf{p}$ distinct from $\delta$ and $1$.
\end{lemma}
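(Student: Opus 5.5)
The plan is an exchange (pipage-type) argument in the spirit of Theorem~\ref{thm:pipage-rounding}. I read the constraint as a fixed total $\sum_{m=1}^M p_m = s_0 < \kappa$, as in the application, and write $S=\sum_{m=1}^M x_m$. The feasible set $\{\mathbf{p}: \mathbf{1}_M^\top\mathbf{p}=s_0,\ \delta\le p_m\le 1\}$ is compact, and $\mathbb{P}(S>\kappa)$ is a polynomial in $\mathbf{p}$, so a minimizer exists.

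Among all minimizers I will choose one, $\mathbf{p}^\star$, with the fewest entries in $(\delta,1)$. Suppose, for contradiction, that it has two such entries $p_m,p_n$. I will show that moving mass between them, keeping $p_m+p_n=s$ fixed, can push one of them to $\delta$ or $1$ without increasing $\mathbb{P}(S>\kappa)$. That contradicts minimality of the count of fractional entries.

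For the two-coordinate step, let $R=\sum_{l\neq m,n}x_l$, which is independent of $x_m,x_n$. Conditioning on $R$ gives
\begin{align*}
\mathbb{P}(S>\kappa) = \mathbb{P}(R\ge \kappa+1) + \mathbb{P}(R=\kappa)\,(s-\pi) + \mathbb{P}(R=\kappa-1)\,\pi ,
\end{align*}
where $\pi=p_mp_n$, using $\mathbb{P}(x_m+x_n\ge1)=s-\pi$ and $\mathbb{P}(x_m=x_n=1)=\pi$. With $s$ fixed, this is affine in $\pi$ with slope $\mathbb{P}(R=\kappa-1)-\mathbb{P}(R=\kappa)$. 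Along the segment $p_m=t$, $p_n=s-t$, the product $\pi=t(s-t)$ is concave in $t$.

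So if the slope is nonnegative, the objective is concave in $t$. Its minimum over the admissible interval $t\in[\max\{\delta,s-1\},\min\{1,s-\delta\}]$ is then attained at an endpoint. At either endpoint one of $p_m,p_n$ equals $\delta$ or $1$, which is exactly the reduction needed.

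The main obstacle is proving $\mathbb{P}(R=\kappa-1)\ge\mathbb{P}(R=\kappa)$. I will use that $R$ is Poisson-binomial, hence has a log-concave, unimodal pmf. Its mean is $\mu_R=s_0-s<\kappa-s$. By Darroch's theorem the mode lies within distance $<1$ of $\mu_R$, and I will try to pin it at or below $\kappa-1$.

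Unimodality then makes the pmf nonincreasing from $\kappa-1$ to $\kappa$. A cleaner route would be the ratio bound $\mathbb{P}(R=j)/\mathbb{P}(R=j-1)\le(\mu_R-j+1)/j$ type estimates from Newton's inequalities, evaluated at $j=\kappa$.

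This is delicate when $\mu_R\in(\kappa-1,\kappa)$, so the slack from $s=p_m+p_n>2\delta$ and the strict inequality $s_0<\kappa$ must be exploited. If the general inequality fails, I will restrict to the regime actually used in Proposition~\ref{prop:vs-lower-bound}, where $s_0\le \delta N_i+\delta\sqrt{N_i}+\alpha\kappa_i k/K$ leaves margin below $\kappa$. I will verify the sign there, and otherwise check boundary cases directly.
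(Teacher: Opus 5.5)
The step your argument rests on, $\mathbb{P}(R=\kappa-1)\ge\mathbb{P}(R=\kappa)$, is never established. Your conditioning identity and the concavity reduction are correct, but for an \emph{arbitrary} pair of fractional coordinates this inequality is false when $s_0\in(\kappa-1,\kappa)$. Take $\kappa=1$, $M=3$, $\delta=0.01$, $\mathbf{p}=(0.9,0.04,0.04)$, so $s_0=0.98$. For the pair $(m,n)=(2,3)$ you get $R=x_1$ and $\mathbb{P}(R=0)=0.1<0.9=\mathbb{P}(R=1)$. Along the segment, $\mathbb{P}(S>1)=0.072-0.8\,t(0.08-t)$, which is convex with its minimum at the equal split. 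Pushing this pair to an endpoint therefore raises the objective from $0.07072$ to $0.07144$.

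Your proposed tools cannot exclude this. Here $\mu_R=0.9$ and the mode of $R$ is $\kappa$, so Darroch's theorem gives nothing. The ratio estimate $\mathbb{P}(R=j)/\mathbb{P}(R=j-1)\le(\mu_R-j+1)/j$ is simply false: here the ratio is $9$ against a bound of $0.9$. The fallback to the regime of Proposition~\ref{prop:vs-lower-bound} does not help either. There the lemma is invoked for $k\ge K_s$, where the total lies in $(\kappa_i-1,\kappa_i)$ with only $\mathcal{O}(\delta)$ slack, which is exactly the situation of the example (slack $2\delta$). The example also shows that $\mathbb{P}(S>\kappa)$ is not Schur-concave on this range. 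So a pure majorization argument, which is what the paper's proof supplies (it only cites Hoeffding and Gleser), cannot cover this case either.

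What is missing is a deliberate choice of pair, together with a mode criterion that uses $\max_l p_l$ and not just $\mu_R$.
\begin{itemize}
\item Condition out the $a$ coordinates equal to $1$; note $a\le\kappa-1$. Then $R=a+R'$, and the target becomes $\mathbb{P}(R'=k-1)\ge\mathbb{P}(R'=k)$ with $k=\kappa-a\ge1$.
\item Let $m$ be the largest coordinate below $1$ and $n$ any other fractional coordinate.
\item Every coordinate of $R'$ is then at most $p_m$, and $\mu_{R'}<k-p_m-p_n$. Hence $\mu_{R'}+\max_l p_l<k$.
\end{itemize}

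It remains to prove the criterion: if $R'$ is Poisson-binomial with all $p_l<1$ and $\mu+p_{\max}\le k$, then $\mathbb{P}(R'=k)\le\mathbb{P}(R'=k-1)$. Set $a_l=\mathbb{P}(R'_{-l}=k-1)$ and $b_l=\mathbb{P}(R'_{-l}=k-2)$. Then $\sum_l p_la_l=k\,\mathbb{P}(R'=k)$ and $\sum_l p_lb_l=(k-1)\,\mathbb{P}(R'=k-1)$, which give $k\,\mathbb{P}(R'=k)=\mu\,\mathbb{P}(R'=k-1)+\sum_l p_l^2(a_l-b_l)$. Two cases remain.
\begin{itemize}
\item If $a_l<b_l$ for some $l$, log-concavity of $R'_{-l}$ gives $\mathbb{P}(R'_{-l}=k)\le a_l$. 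Conditioning on $x_l$ then yields the claim directly.
\item Otherwise all $a_l\ge b_l$, so $\sum_l p_l^2(a_l-b_l)\le p_{\max}\sum_l p_l(a_l-b_l)$. This gives $k(1-p_{\max})\,\mathbb{P}(R'=k)\le(\mu-(k-1)p_{\max})\,\mathbb{P}(R'=k-1)\le k(1-p_{\max})\,\mathbb{P}(R'=k-1)$.
\end{itemize}

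With this choice of pair, each exchange weakly decreases $\mathbb{P}(S>\kappa)$ and fixes one more coordinate at $\delta$ or $1$. Your minimal-counterexample argument then goes through for every $s_0<\kappa$, and unlike the paper's citation it is self-contained.
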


\begin{proof}
    This proof can be found in~\cite{Gleser1975on,hoeffding1956distribution}.
\end{proof}

Due~to the definitions of $\alpha$ and $\delta$ (see conditions in Theorem \ref{thm:regret-bandit}), one can have that $\delta N_i + \delta \sqrt{N_i} + \alpha \kappa_i k/K < \kappa_i$ $ ,\forall k \in [K]$. Moreover, let us denote $s_k = (\delta \sqrt{N_i} + \alpha \kappa_i k/K)/(1-\delta)$ and $\sigma_k = \lfloor s_k \rfloor$, i.e., one can assign at most $\sigma_k$ one's in the~vector $\mathbf{g}_i^q(k)$ to minimize the probability $\mathbb{P} ( |[\mathcal{X}_i^q(k)]_{i} | > \kappa_{i} )$. Now,~let $K_s= \min_{k \in [K]} \{k \,\vert\, \sigma_k = \kappa_i \hspace{-1pt}-\hspace{-1pt}1\}$, and as a result of Lemma~\ref{lemma:prob-config}, we can have that 
\begin{equation}
    \begin{aligned}
        &\sum_{k=1}^K \mathbb{P} \big( \big|[\mathcal{X}_i^q(k)]_{i} \big| > \kappa_{i} \big) \ge \sum_{k=K_s}^K \Big((1-\delta)(s_k -\sigma_k) + \delta \Big)\Big(1-(1-\delta)^{N_i -\kappa_i}\Big).
    \end{aligned}
\end{equation}
Notice that
\begin{equation}
    \begin{aligned}
        &\sum_{k=K_s}^K \Big((1-\delta)(s_k -\sigma_k) + \delta \Big) \ge \sum_{k=K_s}^K \Big(s_k -\sigma_k \Big) \ge \frac{\alpha\kappa_i /K}{1-\delta} \frac{(K- K_s +1)(K- K_s)}{2} \ge \Omega(K),
    \end{aligned}
\end{equation}
where the first inequality follows from $s_k - \sigma_k < 1$, the second inequality is due to $s_{k+1} - s_k = {\alpha\kappa_i}/(K(1-\delta))$, and the last inequality is due to the definition of $K_s$. In addition, provided that $1-(1-\delta)^{N_i -\kappa_i} \ge (N_i -\kappa_i) \delta - (N_i -\kappa_i)^2\delta^2/2$ and also $\delta = \Omega(T^{-1/9})$, it follows that
\begin{align}
    \sum_{q=1}^Q \sum_{k=1}^K \mathbb{P} \big( \big|[\mathcal{X}_i^q(k)]_{i} \big| > \kappa_{i} \big) \ge \Omega(QK\delta) = \Omega(T^{7/9}).
\end{align}
The proof of Proposition~\ref{prop:vs-lower-bound} is completed.

\end{document}